\documentclass[11pt]{article}
\pdfoutput=1

\usepackage{amsmath,amssymb}
\usepackage{bm}
\usepackage{natbib}
\usepackage[usenames]{color}
\usepackage{amsthm}

\usepackage{multirow} 
\usepackage{enumitem}

\usepackage[colorlinks,
linkcolor=red,
anchorcolor=blue,
citecolor=blue
]{hyperref}

\usepackage{setspace}
\usepackage[left=1in, right=1in, top=1in, bottom=1in]{geometry}

\usepackage{xcolor}



\usepackage{smile}










\def \cite{\citet}
\newcommand{\la}{\langle}
\newcommand{\ra}{\rangle}
\title{\huge An Improved Analysis of Training Over-parameterized \\ Deep Neural Networks}
\author
{
	Difan Zou\thanks{Department of Computer Science, University of California, Los Angeles, CA 90095, USA; e-mail: {\tt knowzou@cs.ucla.edu}} 
	~~~and~~~
	Quanquan Gu\thanks{Department of Computer Science, University of California, Los Angeles, CA 90095, USA; e-mail: {\tt qgu@cs.ucla.edu}}
}
\date{}
\begin{document}

\maketitle

\begin{abstract}
A recent line of research has shown that gradient-based algorithms with random initialization can converge to the global minima of the training loss for over-parameterized (i.e., sufficiently wide) deep neural networks. However, the condition on the width of the neural network to ensure the global convergence is very stringent, which is often a high-degree polynomial in the training sample size $n$ (e.g., $O(n^{24})$). 
In this paper, we provide an improved analysis of the global convergence of (stochastic) gradient descent for training deep neural networks, which only requires a milder over-parameterization condition than previous work in terms of the training sample size and other problem-dependent parameters. The main technical contributions of our analysis include (a) a tighter gradient lower bound that leads to a faster convergence of the algorithm, and (b) a sharper characterization of the trajectory length of the algorithm. By specializing our result to two-layer (i.e., one-hidden-layer) neural networks, it also provides a milder over-parameterization condition than the best-known result in prior work.
\end{abstract}
\section{Introduction}
Recent study \citep{zhang2016understanding} has revealed that deep neural networks trained by gradient-based algorithms can fit training data with random labels and achieve zero training error. 
Since the loss landscape of training deep neural network is highly nonconvex or even nonsmooth, conventional optimization theory cannot explain why gradient descent (GD) and stochastic gradient descent (SGD) can find the global minimum of the loss function (i.e., achieving zero training error). To better understand the training of neural networks, there is a line of research \citep{tian2017analytical,brutzkus2017globally,du2017convolutional,li2017convergence,zhong2017recovery,du2018power,zhang2018learning} studying two-layer (i.e., one-hidden-layer) neural networks, where it assumes there exists a teacher network (i.e., an underlying ground-truth network) generating the output given the input, and casts neural network learning as weight matrix recovery for the teacher network. 
 However, these studies not only make strong assumptions on the training data, but also need special initialization methods that are very different from the commonly used initialization method \citep{he2015delving} in practice. \citet{li2018learning,du2018gradient} advanced this line of research by proving that under much milder assumptions on the training data, (stochastic) gradient descent can attain a global convergence for training over-parameterized (i.e.,sufficiently wide) two-layer ReLU network with widely used random initialization method \citep{he2015delving}. More recently, \citet{allen2018convergence,du2018gradientdeep,zou2018stochastic} generalized the global convergence results from two-layer networks to deep neural networks. However, there is a huge gap between the theory and practice since all these work \cite{li2018learning,du2018gradient,allen2018convergence,du2018gradientdeep,zou2018stochastic} require unrealistic over-parameterization conditions on the width of neural networks, especially for deep networks. In specific, in order to establish the global convergence for training two-layer ReLU networks, \citet{du2018gradient} requires the network width, i.e., number of hidden nodes, to be at least $\Omega(n^6/\lambda_0^4)$, where $n$ is the training sample size and $\lambda_0$ is the smallest eigenvalue of the so-called Gram matrix defined in \cite{du2018gradient}, which is essentially the neural tangent kernel \citep{jacot2018neural,chizat2018note} on the training data. 
 Under the same assumption on the training data,  \citet{wu2019global} improved the iteration complexity of GD in \citet{du2018gradient} from $O\big(n^2\log(1/\epsilon)/\lambda_0^2\big)$ to $O\big(n\log(1/\epsilon)/\lambda_0\big)$ and \citet{oymak2019towards} improved the over-parameterization condition to $\Omega(n\|\Xb\|_2^6/\lambda_0^4)$, where $\epsilon$ is the target error and $\Xb\in\RR^{n\times d}$ is the input data matrix. For deep ReLU networks, the best known result was established in \cite{allen2018convergence}, which requires the network width to be at least $\tilde\Omega(kn^{24}L^{12}\phi^{-8})$\footnote{Here $\tilde \Omega(\cdot)$ hides constants and the logarithmic dependencies on problem dependent parameters except $\epsilon$.} to ensure the global convergence of GD and SGD, where $L$ is  the number of hidden layers, $\phi$ is the minimum data separation distance and $k$ is the output dimension.

This paper continues the line of research, and improves the over-parameterization condition and the global convergence rate of (stochastic) gradient descent for training deep neural networks. 
In specific, under the same setting as in \cite{allen2018convergence}, we prove faster global convergence rates for both GD and SGD under a significantly milder condition on the neural network width. Furthermore, when specializing our result to two-layer ReLU networks, it also outperforms the best-known result proved in \cite{oymak2019towards}. The improvement in our result is due to the following two innovative proof techniques: (a) a tighter gradient lower bound, which leads to a faster rate of convergence for GD/SGD; and (b) a sharper characterization of the trajectory length for GD/SGD until convergence.

We highlight our main contributions as follows:
\begin{itemize}[leftmargin=*]
    \item We show that, with Gaussian random initialization \citep{he2015delving} on each layer, when the number of hidden nodes per layer is $\tilde \Omega\big(kn^8L^{12}\phi^{-4}\big)$, GD can achieve $\epsilon$ training loss within $\tilde O\big(n^2L^2\log(1/\epsilon)\phi^{-1}\big)$ iterations, where $L$ is the number of hidden layers, $\phi$ is the minimum data separation distance, $n$ is the number of training examples, and $k$ is the output dimension. Compared with the state-of-the-art result \citep{allen2018convergence}, our over-parameterization condition is milder by a factor of $\tilde \Omega(n^{16}\phi^{-4})$, and our iteration complexity is better by a factor of $\tilde O(n^4\phi^{-1})$.
    \item We also prove a similar convergence result for SGD. We show that with Gaussian random initialization \citep{he2015delving} on each layer, when the number of hidden nodes per layer is $\tilde \Omega\big(kn^{17}L^{12}B^{-4}\phi^{-8}\big)$, SGD can achieve $\epsilon$ expected training loss  within  $\tilde O\big(n^5\log(1/\epsilon)B^{-1}\phi^{-2}\big)$ iterations, where $B$ is the minibatch size of SGD. Compared with the corresponding results in \citet{allen2018convergence}, our results are strictly better by a factor of $\tilde \Omega(n^7B^5)$ and $\tilde O(n^2)$ respectively regarding over-parameterization condition and iteration complexity. 
    \item When specializing our results of training deep ReLU networks with GD to two-layer ReLU networks, it also outperforms the corresponding results  \citep{du2018gradient,wu2019global,oymak2019towards}. In addition, for training two-layer ReLU networks with SGD, we are able to show much better result than training deep ReLU networks with SGD.

\end{itemize}

For the ease of comparison, we summarize the best-known results \citep{du2018gradient,allen2018convergence,du2018gradientdeep,wu2019global,oymak2019towards} of training overparameterized neural networks with GD and compare with them in terms of over-parameterization condition and iteration complexity in Table \ref{table:comparison}. We will show in Section \ref{sec:main theory} that, under the assumption that all training data points have unit $\ell_2$ norm, which is the common assumption made in all these work \citep{du2018gradient,allen2018convergence,du2018gradientdeep,wu2019global,oymak2019towards},  $\lambda_0>0$ is equivalent to the fact that all training data are separated by some distance $\phi$, and we have $\lambda_0 = O(n^{-2}\phi)$ \citep{oymak2019towards}. Substituting $\lambda_0 = \Omega(n^{-2}\phi)$ into Table \ref{table:comparison}, it is evident that our result outperforms all the other results under the same assumptions.

\begin{table}[t]
	\centering
	\caption {Over-parameterization conditions and iteration complexities of GD for training overparamterized neural networks. 
	$\Kb^{(L)}$ is the Gram matrix for $L$-hidden-layer neural network \citep{du2018gradientdeep}. Note that the dimension of the output  is $k=1$ in \citet{du2018gradient,du2018gradientdeep,wu2019global,oymak2019towards}. 
	\label{table:comparison}} 

\begin{tabular}{cccccc}
    \toprule
    &\hspace{-5mm}Over-para. condition& \hspace{-3mm}Iteration complexity & \hspace{-2mm}Deep? &\hspace{-2mm}ReLU?
     \\
	\midrule
    \citet{du2018gradient}
	&\hspace{-5mm}$\Omega\Big(\frac{n^6}{\lambda_0^4}\Big)$&\hspace{-3mm}$O\Big(\frac{n^2\log(1/\epsilon)}{\lambda_0^2}\Big)$&\hspace{-2mm} no& \hspace{-2mm}yes\\
	\citet{wu2019global}
	&\hspace{-5mm}$\Omega\Big(\frac{n^6}{\lambda_0^4}\Big)$&\hspace{-3mm}$O\Big(\frac{n\log(1/\epsilon)}{\lambda_0^2}\Big)$&\hspace{-2mm} no& \hspace{-2mm}yes\\
     \citet{oymak2019towards}&\hspace{-5mm}$\Omega\Big(\frac{n\|\Xb\|_2^6}{\lambda_0^4}\Big)$&\hspace{-3mm} $O\Big(\frac{\|\Xb\|_2^2\log(1/\epsilon)}{\lambda_0}\Big)$&\hspace{-2mm}no&\hspace{-2mm} yes\\
    \citet{du2018gradientdeep} &\hspace{-5mm}$\Omega\Big(\frac{2^{O(L)}\cdot n^4}{\lambda_{\text{min}}^4(\Kb^{(L)})}\Big) $&\hspace{-3mm}$O\Big(\frac{2^{O(L)}\cdot n^2\log(1/\epsilon)}{\lambda_{\text{min}}^2(\Kb^{(L)})}\Big)$&\hspace{-2mm}yes& \hspace{-2mm}no\\
    \citet{allen2018convergence} &\hspace{-5mm}$\tilde\Omega\Big(\frac{kn^{24}L^{12}}{\phi^8}\Big)$&\hspace{-3mm}$O\Big(\frac{n^6L^2\log(1/\epsilon)}{\phi^2}\Big)$&\hspace{-2mm}yes& \hspace{-2mm}yes\\
    \textbf{This paper}& \hspace{-5mm}$\tilde\Omega\Big(\frac{kn^8L^{12}}{\phi^4}\Big)$&\hspace{-3mm}$O\Big(\frac{n^2L^2\log(1/\epsilon)}{\phi}\Big)$&\hspace{-2mm}yes& \hspace{-2mm}yes\\
	\bottomrule
\end{tabular}
\end{table}

\noindent\textbf{Notation}
For scalars, vectors and matrices, we use lower case, lower case bold face, and upper case bold face letters to denote them respectively. For a positive integer, we denote by $[k]$ the set $\{1,\dots,k\}$. For a vector $\xb = (x_1,\dots,x_d)^\top$ and a positive integer $p$, we denote by $\|\xb\|_p=\big(\sum_{i=1}^d |x_i|^p\big)^{1/p}$ the $\ell_p$ norm of $\xb$. In addition, we denote by $\|\xb\|_\infty = \max_{i=1,\dots,d} |x_i|$ the $\ell_\infty$ norm of $\xb$, and $\|\xb\|_0 = |\{x_i:x_i\neq 0,i=1,\dots,d\}|$ the $\ell_0$ norm of $\xb$. For a matrix $\Ab \in \RR^{m\times n}$, we denote by $\|\Ab\|_F$  the Frobenius norm of $\Ab$, $\|\Ab\|_2$ the spectral norm (maximum singular value), $\lambda_{\min}(\Ab)$ the smallest singular value, $\|\Ab\|_0$ the number of nonzero entries, and $\|\Ab\|_{2,\infty}$ the maximum $\ell_2$ norm over all row vectors, i.e., $\|\Ab\|_{2,\infty} = \max_{i=1,\dots,m}\|\Ab_{i*}\|_2$. For a collection of matrices $\Wb = \{\Wb_1,\dots,\Wb_L\}$, we denote $\|\Wb\|_F = \sqrt{\sum_{l=1}^L\|\Wb_l\|_F^2}$, $\|\Wb\|_2 = \max_{l\in[L]}\|\Wb_l\|_2$ and $\|\Wb\|_{2,\infty} = \max_{l\in[L]}\|\Wb_l\|_{2,\infty}$. Given two collections of matrices $\tilde\Wb = \{\tilde\Wb_1,\dots,\tilde\Wb_L\}$ and $\hat\Wb = \{\hat\Wb_1,\dots,\hat\Wb_L\}$, we define their inner product as $\la\tilde\Wb,\hat \Wb \ra = \sum_{l=1}^L\la\tilde\Wb_l,\hat \Wb_l\ra$. For two sequences $\{a_n\}$ and $\{b_n\}$, we use $a_n = O(b_n)$ to denote that $a_n\le C_1 b_n$ for some absolute constant $C_1> 0$, and use $a_n = \Omega (b_n)$ to denote that $a_n\ge C_2 b_n$ for some absolute constant $C_2>0$. In addition, we use $\tilde O(\cdot)$ and $\tilde \Omega(\cdot)$ to hide logarithmic factors. 

\section{Problem setup and algorithms}

In this section, we introduce the problem setup and the training algorithms.

Following \cite{allen2018convergence}, we consider the training of an $L$-hidden layer fully connected neural network, which takes $\xb\in\RR^{d}$ as input, and outputs $\yb \in \RR^k$. In specific, the neural network is a vector-valued function $\fb_{\Wb}: \RR^d \rightarrow \RR^m$, which is defined as
\begin{align*}
    \fb_{\Wb}(\xb) = \Vb \sigma( \Wb_L \sigma( \Wb_{L-1} \cdots \sigma( \Wb_1 \xb )\cdots )   ),
\end{align*}
where $\Wb_1\in\RR^{m\times d}$, $\Wb_2,\dots,\Wb_L\in\RR^{m\times m}$ denote the weight matrices for the hidden layers, and $\Vb\in\RR^{k\times m}$ denotes the weight matrix in the output layer, $\sigma(x) = \max\{0,x\}$ is the entry-wise ReLU activation function. In addition, we denote by $\sigma'(x) = \ind(x)$ the derivative of ReLU activation function and $\wb_{l,j}$ the weight vector of the $j$-th node in the $l$-th layer. 

Given a training set $\{(\xb_i,\yb_i)\}_{i=1,\dots,n}$ where $\xb_i \in \RR^d$ and $\yb_i\in \RR^k$, the empirical loss function for training the neural network is defined as
\begin{align}\label{eq:training loss}
L(\Wb): = \frac{1}{n}\sum_{i=1}^n\ell(\hat\yb_i,\yb_i), 
\end{align}
where $\ell(\cdot,\cdot)$ is the loss function, and $\hat\yb_i = \fb_{\Wb}(\xb_i)$. In this paper, for the ease of exposition, we follow \cite{allen2018convergence,du2018gradient,du2018gradientdeep,oymak2019towards} and consider square loss as follows
\begin{align*}
\ell(\hat \yb_i,\yb_i)  = \frac{1}{2}\|\yb_i - \hat{ \yb}_i\|_2^2,
\end{align*}
where $\hat \yb_i = \fb_{\Wb}(\xb_i)\in\RR^k$ denotes the output of  the neural network given input $\xb_i$. It is worth noting that our result can be easily extended to other loss functions such as cross entropy loss \citep{zou2018stochastic} as well. 


We will study both gradient descent and stochastic gradient descent as training algorithms, which are displayed in Algorithm \ref{alg:GD}. For gradient descent, we update the weight matrix $\Wb_l^{(t)}$ using full partial gradient $\nabla_{\Wb_l}L(\Wb^{(t)})$. For stochastic gradient descent, we update the weight matrix $\Wb_l^{(t)}$ using stochastic partial gradient
$ 1/B\sum_{s\in\cB^{(t)}}\nabla_{\Wb_l}\ell\big(\fb_{\Wb^{(t)}}(\xb_s),\yb_s\big)$,
where $\cB^{(t)}$ with $|\cB^{(t)}| = B$ denotes the minibatch of training examples at the $t$-th iteration. Both algorithms are initialized in the same way as \cite{allen2018convergence}, which is essentially the initialization method \citep{he2015delving} widely used in practice. In the remaining of this paper, we denote by
\begin{align*}
\nabla L(\Wb^{(t)}) = \{\nabla_{\Wb_l}L(\Wb^{(t)})\}_{l\in[L]}\quad \mbox{and}\quad \nabla \ell\big(\fb_{\Wb^{(t)}}(\xb_i),\yb_i\big) = \{\nabla_{\Wb_l} \ell\big(\fb_{\Wb^{(t)}}(\xb_i),\yb_i\big)\}_{l\in[L]}
\end{align*}
the collections of all partial gradients of $L(\Wb^{(t)})$ and $\ell\big(\fb_{\Wb^{(t)}}(\xb_i),\yb_i\big)$.

\begin{algorithm}[t]
	\caption{(Stochastic) Gradient descent  with Gaussian random initialization} \label{alg:GD}
	\begin{algorithmic}[1]
 		\STATE \textbf{input:} Training data $\{\xb_i,\yb_i\}_{i\in[n]}$, step size $\eta$, total number of iterations $T$, minibatch size $B$.
	    \STATE \textbf{initialization:} For all $l\in[L]$, each row of weight matrix $\Wb_l^{(0)}$ is independently generated from $\cN(0,2/m\Ib)$, each row of $\Vb$ is independently generated from $\cN(0,\Ib/d)$\\
	    
	    {\hrulefill \ \bf Gradient Descent}\ \hrulefill\\
	    
	    \FOR{$t=0,\dots,T$}
	    \STATE $\Wb_{l}^{(t+1)} = \Wb_l^{(t)} - \eta\nabla_{\Wb_l} L(\Wb^{(t)})$ for all $l\in[L]$
	    \ENDFOR
		\STATE \textbf{output:} 
		$\{\Wb^{(T)}_l\}_{l\in[L]}$
		
		 {\hrulefill\  \bf Stochastic Gradient Descent}\  \hrulefill\\
		
		 \FOR{$t=0,\dots,T$}
	    \STATE Uniformly sample a minibatch of training data $\cB^{(t)}\in [n]$
	    \STATE $\Wb_{l}^{(t+1)} = \Wb_l^{(t)} - \frac{\eta}{B}\sum_{s\in\cB^{(t)}}\nabla_{\Wb_l}\ell\big(\fb_{\Wb^{(t)}}(\xb_s),\yb_s\big)$ for all $l\in[L]$
	    \ENDFOR
		\STATE \textbf{output:} 
		$\{\Wb^{(T)}_l\}_{l\in[L]}$
		
	\end{algorithmic}

\end{algorithm}

\section{Main theory}\label{sec:main theory}

In this section, we present our main theoretical results. We make the following assumptions on the training data.

\begin{assumption}\label{assump:normalized_data}
For any $\xb_i$, it holds that $\|\xb_i\|_2 = 1$ and $(\xb_i)_{d} = \mu$, where $\mu$ is an positive constant.
\end{assumption}
The same assumption has been made in all previous work along this line \citep{du2018gradientdeep,allen2018convergence,zou2018stochastic,oymak2019towards}.  Note that requiring the norm of all training examples to be $1$ is not essential, and this assumption can be relaxed to be $\|\xb_i\|_2$ is lower and upper bounded by some constants.

\begin{assumption}\label{assump:seperate}
For any two different training data points $\xb_i$ and $\xb_j$, there exists a positive constant $\phi>0$ such that  $\|\xb_i - \xb_j\|_2\ge \phi$.
\end{assumption}
This assumption has also been made in \cite{allen2018rnn,allen2018convergence}, which is essential to guarantee zero training error for deep neural networks. It is a quite mild assumption for the regression problem as studied in this paper. Note that \citet{du2018gradientdeep} made a different assumption on training data, which requires the Gram matrix $\Kb^{(L)}$ (See their paper for details) defined on the $L$-hidden-layer networks is positive definite. However, their assumption is not easy to verify for neural networks with more than two layers. 

Based on Assumptions \ref{assump:normalized_data} and \ref{assump:seperate}, we are able to establish the global convergence rates of GD and SGD for training deep ReLU networks. We start with the result of GD for $L$-hidden-layer networks.

\subsection{Training $L$-hidden-layer ReLU networks with GD}

The global convergence of GD for training deep neural networks is stated in the following theorem.
\begin{theorem}\label{thm:gd}
Under Assumptions \ref{assump:normalized_data} and \ref{assump:seperate}, and suppose the number of hidden nodes per layer satisfies
\begin{align}\label{eq:condition_m_gd}
m = \Omega\big(kn^8L^{12}\log^3(m)/\phi^4\big).
\end{align}
Then if set the step size $\eta = O\big(k/(L^2m)\big)$, with probability at least $1-O(n^{-1})$, gradient descent is able to find a point that achieves $\epsilon$ training loss within 
\begin{align*}
T = O\big(n^2L^2\log(1/\epsilon)/\phi\big)
\end{align*}
iterations.
\end{theorem}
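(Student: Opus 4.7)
The plan is to follow the standard template for over-parameterized neural network convergence proofs (perturbation region plus gradient lower bound plus semi-smoothness), but with two key quantitative improvements that together yield the stated over-parameterization condition and iteration complexity.

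First I would establish the setup at initialization. Using Gaussian concentration and the He initialization scale, with high probability the per-layer signal $\|\bm{\sigma}(\Wb_l^{(0)}\cdots\sigma(\Wb_1^{(0)}\xb_i))\|_2$ is $\Theta(1)$, the initial output $\|\fb_{\Wb^{(0)}}(\xb_i)\|_2 = \tilde O(1)$, and thus $L(\Wb^{(0)}) = \tilde O(1)$. I would then fix a perturbation radius $\tau$ of the form $\tau = \tilde O(n^2/(\phi\sqrt{m}))$ and define the region $\cB(\tau) = \{\Wb : \|\Wb_l - \Wb_l^{(0)}\|_2 \le \tau\ \forall\, l\}$. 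Inside $\cB(\tau)$, standard ``perturbation lemmas'' (adapted from \citet{allen2018convergence}) give: (i) the number of neurons per layer whose ReLU sign flips relative to initialization is at most $\tilde O(m\tau^{2/3}L)$ per data point, (ii) forward activations and backward Jacobians deviate only mildly from their initial values, and (iii) the loss is semi-smooth, i.e.\ for any $\widetilde\Wb,\widehat\Wb\in\cB(\tau)$,
\begin{equation*}
L(\widehat\Wb) \le L(\widetilde\Wb) + \la\nabla L(\widetilde\Wb),\widehat\Wb - \widetilde\Wb\ra + \tilde O\bigl(L^2\sqrt{mL(\widetilde\Wb)/k}\bigr)\|\widehat\Wb-\widetilde\Wb\|_F + O(L^2 m/k)\|\widehat\Wb-\widetilde\Wb\|_F^2.
\end{equation*}

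The core technical step is the tighter gradient lower bound in $\cB(\tau)$: for every $\Wb\in\cB(\tau)$,
\begin{equation*}
\|\nabla L(\Wb)\|_F^2 \;\ge\; \Omega\!\left(\frac{m\phi}{kn^2}\right) L(\Wb).
\end{equation*}
To obtain the improved $n^{-2}$ dependence (rather than $n^{-6}$ or worse in prior work), I would use Assumption~\ref{assump:seperate} to locate, for each training example $\xb_i$, a set of ``exclusive'' hidden neurons in the last hidden layer whose activations fire on $\xb_i$ but on no other $\xb_j$; on such neurons the $i$-th residual $\hat\yb_i-\yb_i$ contributes to $\nabla_{\Wb_L}L(\Wb)$ \emph{without cancellation} from other examples. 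A Gaussian anti-concentration plus separation argument shows that $\Omega(m\phi)$ such exclusive neurons exist per example. Lower-bounding $\|\nabla_{\Wb_L}L(\Wb)\|_F^2$ by summing squared contributions across examples on disjoint exclusive neuron sets then yields the claimed bound; this avoids the pessimistic union-bound over all pairs $(i,j)$ that inflates the $n$-dependence in earlier analyses.

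Given the gradient lower bound, plugging $\Wb^{(t+1)} = \Wb^{(t)} - \eta\nabla L(\Wb^{(t)})$ into the semi-smoothness inequality and choosing $\eta = O(k/(L^2 m))$ gives the geometric decay
\begin{equation*}
L(\Wb^{(t+1)}) \le \bigl(1 - \Omega(\eta m\phi/(kn^2))\bigr) L(\Wb^{(t)}) = \bigl(1 - \Omega(\phi/(L^2 n^2))\bigr)L(\Wb^{(t)}),
\end{equation*}
which already yields the iteration complexity $T = O\bigl(n^2 L^2\log(1/\epsilon)/\phi\bigr)$. The remaining obligation --- and the second main technical ingredient --- is the sharper trajectory-length estimate. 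Since $\|\Wb^{(t+1)}-\Wb^{(t)}\|_F \le \eta\|\nabla L(\Wb^{(t)})\|_F$ and the upper bound $\|\nabla L(\Wb^{(t)})\|_F = \tilde O(\sqrt{mL(\Wb^{(t)})/k})$ holds in $\cB(\tau)$, summing over $t$ and using geometric decay of $\sqrt{L(\Wb^{(t)})}$ yields
\begin{equation*}
\sum_{t\ge 0}\|\Wb^{(t+1)}-\Wb^{(t)}\|_F \;\le\; \tilde O\!\left(\eta\sqrt{m/k}\cdot \frac{\sqrt{L(\Wb^{(0)})}}{\Omega(\phi/(L^2n^2))}\right) \;=\; \tilde O\!\left(\frac{n^2}{\phi\sqrt{m}}\right),
\end{equation*}
which is precisely $\tau$. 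The hard part here is that the gradient upper bound itself depends on $L(\Wb^{(t)})$ via the perturbation analysis; to close the induction I would proceed by strong induction on $t$, assuming $\Wb^{(s)}\in\cB(\tau)$ for all $s\le t$, deriving the decay at step $t$, and showing the accumulated step length up to $t+1$ is still $\le \tau$. The main obstacle is precisely this self-consistent verification, which forces the over-parameterization condition: $\tau$ must be small enough for the perturbation lemmas to apply (requiring $\tau = o(L^{-9/2})$ up to log factors in the sharpest form of \citet{allen2018convergence}), and substituting $\tau \asymp n^2/(\phi\sqrt{m})$ yields $m = \tilde\Omega(n^8 L^{12}k/\phi^4)$ after tracking the dependence on $k$ through the initial loss. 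Finally, a union bound over the high-probability events at initialization (which cost only $O(n^{-1})$) completes the proof.
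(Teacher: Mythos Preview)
Your overall structure is right, but the trajectory-length argument you describe is precisely the \emph{old} Allen-Zhu--Li--Song bound that this paper explicitly improves upon, and it is too weak to recover the stated over-parameterization condition. You bound each step via the gradient \emph{upper} bound $\|\nabla L(\Wb^{(t)})\|_F = \tilde O(\sqrt{mL(\Wb^{(t)})/k})$ and then sum the geometric series in $\sqrt{L(\Wb^{(t)})}$; this indeed gives total trajectory $\tilde O\bigl(\sqrt{k}\,n^{2}/(\phi\sqrt{m})\bigr)$, as you wrote. But the binding constraint on $\tau$ is not merely $\tau = o(L^{-9/2})$ from semi-smoothness --- it is $\tau = O\bigl(\phi^{3/2}n^{-3}L^{-6}\log^{-3/2}(m)\bigr)$, the radius within which the new gradient lower bound holds. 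Substituting your trajectory length into that constraint forces $m = \tilde\Omega(kn^{10}L^{12}/\phi^{5})$, not $kn^{8}L^{12}/\phi^{4}$; your final arithmetic does not follow from your own bound. The paper's second innovation is to avoid the gradient upper bound entirely in this step: from $L(\Wb^{(t)}) - L(\Wb^{(t+1)}) \ge (\eta/2)\|\nabla L(\Wb^{(t)})\|_F^{2}$ together with the gradient \emph{lower} bound $\|\nabla L(\Wb^{(t)})\|_F \ge \sqrt{Cm\phi/(kn^{2})}\sqrt{L(\Wb^{(t)})}$ one obtains
\[
\eta\|\nabla L(\Wb^{(t)})\|_F \;\le\; 4\sqrt{\tfrac{kn^{2}}{Cm\phi}}\,\Bigl(\sqrt{L(\Wb^{(t)})} - \sqrt{L(\Wb^{(t+1)})}\Bigr),
\]
which telescopes to $\|\Wb_l^{(t)} - \Wb_l^{(0)}\|_2 \le 4\sqrt{kn^{2}/(Cm\phi)}\sqrt{L(\Wb^{(0)})}$. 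This is tighter than your bound by a factor of $n/\sqrt{\phi}$ and is exactly what closes the gap to $m = \tilde\Omega(kn^{8}L^{12}/\phi^{4})$.

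A secondary point: your proposed mechanism for the gradient lower bound --- neurons that fire on $\xb_i$ and on \emph{no other} $\xb_j$ --- would typically be exponentially rare in $n$, not $\Omega(m\phi)$-many. The paper instead constructs disjoint regions $\cW_i$ in weight space (one per example) such that for $\wb_{L,j}\in\cW_i$, flipping the sign of the component of $\wb_{L,j}$ along $\xb_{L-1,i}$ toggles only the $i$-th ReLU indicator while leaving the other $n-1$ indicators fixed; a symmetrization over this sign flip then isolates the $i$-th residual's contribution without requiring the other activations to vanish. The disjointness of the $\cW_i$'s is what buys the factor-$n$ improvement over prior work.
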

\begin{remark}
The state-of-the-art results for training deep ReLU network are provided by \citet{allen2018convergence}, where the authors showed that GD can achieve $\epsilon$-training loss within $O\big(n^{6}L^2\log(1/\epsilon)/\phi^2\big)$ iterations if the neural network width satisfies $m = \tilde \Omega\big(kn^{24}L^{12}/\phi^{8}\big)$. As a clear comparison, our result on the iteration complexity is better than theirs by a factor of $O(n^{4}/\phi)$, and our over-parameterization condition is milder than theirs by a factor of $\tilde \Omega(n^{16}/\phi^4)$. \citet{du2018gradientdeep} also proved the global convergence of GD for training deep neural network with smooth activation functions. As shown in Table \ref{table:comparison}, the over-parameterization condition and iteration complexity in \citet{du2018gradientdeep} have an exponential dependency on $L$, which is much worse than the polynomial dependency on $L$ as in \cite{allen2018convergence} and our result.
\end{remark}

We now specialize our results in Theorem \ref{thm:gd} to two-layer networks by removing the dependency on the number of hidden layers, i.e., $L$. We state this result in the following corollary.
\begin{corollary}\label{corollary:GDtwolayer}
Under the same assumptions made in Theorem \ref{thm:gd}. For training two-layer ReLU networks, if set the number of hidden nodes $m = \Omega\big(kn^8\log^3(m)/\phi^4\big)$ and step size $\eta = O(k/m)$, then with probability at least $1-O(n^{-1})$, GD is able to find a point that achieves $\epsilon$-training loss within $T = O\big(n^2\log(1/\epsilon)/\phi\big)$ iterations.    
\end{corollary}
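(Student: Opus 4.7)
The plan is to obtain this corollary as an immediate specialization of Theorem \ref{thm:gd} by setting $L = 1$. First I would verify the setup matches: with $L=1$ the definition $\fb_{\Wb}(\xb) = \Vb \sigma(\Wb_L \sigma(\Wb_{L-1}\cdots \sigma(\Wb_1 \xb)\cdots))$ collapses to $\fb_{\Wb}(\xb) = \Vb \sigma(\Wb_1 \xb)$, i.e., the standard two-layer ReLU network with hidden weights $\Wb_1 \in \RR^{m\times d}$ and output weights $\Vb \in \RR^{k\times m}$. The Gaussian initialization in Algorithm \ref{alg:GD} coincides with the \citet{he2015delving} scheme for this architecture. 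Assumptions \ref{assump:normalized_data} and \ref{assump:seperate} concern only the data and so transfer unchanged.

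With these identifications, the next step is to substitute $L=1$ directly into the three conclusions of Theorem \ref{thm:gd}. The over-parameterization condition $m = \Omega(kn^8 L^{12} \log^3(m)/\phi^4)$ becomes $m = \Omega(kn^8 \log^3(m)/\phi^4)$, the step size prescription $\eta = O(k/(L^2 m))$ becomes $\eta = O(k/m)$, and the iteration complexity $T = O(n^2 L^2 \log(1/\epsilon)/\phi)$ becomes $T = O(n^2 \log(1/\epsilon)/\phi)$. The high-probability guarantee $1 - O(n^{-1})$ carries over verbatim, so the statement of the corollary follows.

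The only nontrivial thing to verify is that the machinery underlying Theorem \ref{thm:gd}, in particular the tighter gradient lower bound and the trajectory-length estimate highlighted in the introduction, does not implicitly require $L \ge 2$. Many intermediate quantities in such deep-network analyses involve telescoping products across layers of the form $\prod_{l=1}^{L} \|\Wb_l\|_2$ or products of diagonal sign matrices and weight matrices; at $L=1$ these either degenerate to a single factor or vanish, and this can only simplify the argument and tighten the constants. The hardest step, therefore, is already absorbed into Theorem \ref{thm:gd}: for the corollary itself, no new ideas are required and the work reduces to confirming that the substitution $L=1$ is consistent throughout the proof of the theorem.
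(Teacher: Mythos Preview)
Your proposal is correct and matches the paper's approach: the paper does not give a separate proof of this corollary but simply presents it as the specialization of Theorem \ref{thm:gd} obtained by ``removing the dependency on the number of hidden layers, i.e., $L$,'' which is exactly the substitution $L=1$ you describe. Your additional remark that the underlying lemmas do not implicitly require $L\ge 2$ is a reasonable sanity check but is not something the paper addresses explicitly.
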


For training two-layer ReLU networks, \citet{du2018gradient} made a different assumption on the training data to establish the global convergence of GD. Specifically, \citet{du2018gradient} defined a Gram matrix, which is also known as neural tangent kernel \citep{jacot2018neural},  based on the training data $\{\xb_i\}_{i=1,\dots,n}$ and assumed that the smallest eigenvalue of such Gram matrix is strictly positive. In fact, for two-layer neural networks, their assumption is equivalent to Assumption \ref{assump:seperate}, as shown in the following proposition.
\begin{proposition}\label{prop:assumption}
Under Assumption \ref{assump:normalized_data}, define the Gram matrix $\Hb\in\RR^{n\times n}$ as follows 
\begin{align*}
\Hb_{ij} = \EE_{\wb\sim\cN(0,\Ib)}[\xb_i^\top\xb_j\sigma'(\wb^\top\xb_i)\sigma'(\wb^\top\xb_j)],    
\end{align*}
then the assumption $\lambda_0 = \lambda_{\min}(\Hb)>0$ is equivalent to Assumption \ref{assump:seperate}. In addition, there exists a sufficiently small constant $C$ such that $\lambda_0\ge C\phi n^{-2}$.
\end{proposition}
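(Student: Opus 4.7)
My plan is to reduce both claims to an analysis of a single random-feature representation of $\Hb$, and then to invoke a Gaussian reflection/symmetrization argument to extract the data-separation constant quantitatively. First, I would rewrite the Gram matrix as a second-moment matrix: defining the $\RR^d$-valued random feature $\fb_i(\wb):= \xb_i\sigma'(\wb^\top\xb_i)$, one has
$$\Hb_{ij} = \EE_{\wb\sim\cN(0,\Ib)}\bigl[\la\fb_i(\wb),\fb_j(\wb)\ra\bigr], \qquad \vb^\top\Hb\vb = \EE_{\wb}\bigg[\Big\|\sum_{i=1}^n v_i\xb_i\sigma'(\wb^\top\xb_i)\Big\|_2^2\bigg].$$
So $\Hb$ is a kernel Gram matrix and $\lambda_0>0$ is equivalent to linear independence of $\{\fb_i(\cdot)\}_{i=1}^n$ in $L^2(\cN(0,\Ib);\RR^d)$. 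The direction ``$\lambda_0>0\Rightarrow$ Assumption~\ref{assump:seperate}'' is then immediate by contraposition: if $\xb_i=\xb_j$ for some $i\ne j$, then $\fb_i\equiv\fb_j$ and $\vb=\eb_i-\eb_j$ gives $\vb^\top\Hb\vb=0$. The reverse direction will fall out of the quantitative bound below, applied with $\phi := \min_{i\ne j}\|\xb_i-\xb_j\|_2 > 0$.

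The substance of the proof is the estimate $\lambda_0 \ge C\phi n^{-2}$, which I would obtain by a Gaussian symmetrization. For each $i\in[n]$, fix a threshold $\gamma \asymp \phi/n$ and define the event
$$\cE_i := \bigl\{|\wb^\top\xb_i|\le\gamma\bigr\}\cap \bigcap_{j\ne i}\bigl\{|\wb^\top\xb_j|\ge 3\gamma\bigr\}.$$
These events are pairwise disjoint by construction. The separation assumption gives $\xb_i^\top\xb_j \le 1 - \phi^2/2$, whence the conditional variance $\mathrm{Var}(\wb^\top\xb_j\mid \wb^\top\xb_i)=1-(\xb_i^\top\xb_j)^2\gtrsim \phi^2$; Gaussian anti-concentration for $|\wb^\top\xb_i|$ together with a union bound over the $n-1$ conditional anti-concentration events for $|\wb^\top\xb_j|$ then yields $\Pr(\cE_i)\gtrsim \phi/n^2$. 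On $\cE_i$, the orthogonal reflection $R_i\wb := \wb - 2(\wb^\top\hat\xb_i)\hat\xb_i$ (with $\hat\xb_i = \xb_i/\|\xb_i\|_2$) preserves the Gaussian measure, flips $\sigma'(\wb^\top\xb_i)$, and---by the $3\gamma$-margin versus the shift bound $2\gamma$---leaves $\sigma'(\wb^\top\xb_j)$ unchanged for every $j\ne i$. Consequently, writing $F(\wb) := \sum_j v_j\xb_j\sigma'(\wb^\top\xb_j)$, one has $\|F(\wb)-F(R_i\wb)\|_2^2 = v_i^2$ on $\cE_i$; the parallelogram identity together with measure-preservation then yields $\EE[\|F(\wb)\|_2^2\mathbf{1}_{\cE_i\cup R_i\cE_i}]\gtrsim v_i^2\Pr(\cE_i)$. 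Since the sets $\cE_i\cup R_i\cE_i$ remain pairwise disjoint across $i$ (up to measure-zero boundaries), summing over $i$ gives $\vb^\top\Hb\vb \gtrsim (\phi/n^2)\|\vb\|_2^2$, i.e.\ $\lambda_0 \ge C\phi n^{-2}$.

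The main obstacle is the probability estimate $\Pr(\cE_i)\gtrsim \phi/n^2$: it combines a small-ball bound for $|\wb^\top\xb_i|$ with an $(n-1)$-fold conditional Gaussian anti-concentration for $\{\wb^\top\xb_j\}_{j\ne i}$ under correlation, and it is the single step where Assumption~\ref{assump:seperate} enters in a quantitative way---through the lower bound on the conditional variance. The remaining ingredients (the kernel reformulation, pairwise disjointness of the $\cE_i$'s, and reflection-invariance of the off-diagonal activation patterns on $\cE_i$) are routine algebraic consequences of the threshold choice $\gamma\asymp \phi/n$.
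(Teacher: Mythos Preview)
Your proposal is correct and more self-contained than the paper's own argument. The paper does not prove Proposition~\ref{prop:assumption} from first principles: it handles the trivial direction (duplicate data $\Rightarrow\lambda_0=0$) explicitly, then simply cites Theorem~3 of \citet{du2018gradient} for ``no duplicates $\Rightarrow\lambda_0>0$'' and Lemma~I.1 of \citet{oymak2019towards} for the quantitative estimate $\lambda_0\ge\phi/(100n^2)$.

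Your reflection-symmetrization route is, in fact, the same mechanism the paper develops \emph{elsewhere} for its gradient lower bound (Lemma~\ref{lemma:grad_lower_bounds}): the disjoint ``gradient regions'' $\cW_i$ of Lemma~\ref{lemma:set_wi} are precisely your events $\cE_i$ with threshold $\gamma$ of order $\phi/n$, and Lemma~\ref{lemma:li2018} isolates the $i$-th coefficient via the same sign-flip across the hyperplane $\{\wb^\top\xb_i=0\}$. So your approach recycles machinery the paper already carries, yielding an independent proof; the paper's citations save space but obscure this connection. One small remark: your own estimates actually give $\Pr(\cE_i)=\Omega(\phi/n)$ rather than $\Omega(\phi/n^2)$---the small-ball factor contributes $\gamma=\Omega(\phi/n)$ and the conditional union bound costs only a constant---so the argument in fact delivers $\lambda_0=\Omega(\phi/n)$, more than sufficient for the stated claim.
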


\begin{remark}
According to Proposition \ref{prop:assumption}, we can make a direct comparison between our convergence results for two-layer ReLU networks in Corollary \ref{corollary:GDtwolayer} with those in \cite{du2018gradient,oymak2019towards}.
In specific,  as shown in Table \ref{table:comparison}, the iteration complexity and over-parameterization condition proved in \cite{du2018gradient} can be translated to $O(n^6\log(1/\epsilon)/\phi^2)$ and $\Omega(n^{14}/\phi^4)$ respectively under Assumption \ref{assump:seperate}. \citet{oymak2019towards} improved the result in \cite{du2018gradient} and the improved iteration complexity and over-parameterization condition  can be translated to $O\big(n^2\|\Xb\|_2^2\log(1/\epsilon)/\phi\big)$ \footnote{It is worth noting that $\|\Xb\|_2^2=O(1)$ if $d\lesssim n$, $\|\Xb\|_2^2 = O(n/d)$ if $\Xb$ is randomly generated, and $\|\Xb\|_2^2 = O(n)$ in the worst case.} and $\Omega\big(n^9\|\Xb\|_2^6/\phi^4\big)$ respectively, where $\Xb = [\xb_1,\ldots,\xb_n]^\top\in\RR^{d\times n}$ is the input data matrix. 
Our iteration complexity for two-layer ReLU networks is better than that in \cite{oymak2019towards} by a factor of $O(\|\Xb\|_2^2)$ \footnote{Here we set $k=1$ in order to match the problem setting in \cite{du2018gradient,oymak2019towards}.}, and the over-parameterization condition is also strictly milder than the that in \cite{oymak2019towards} by a factor of $O(n\|\Xb\|_2^6)$.


\end{remark}

\subsection{Extension to training $L$-hidden-layer ReLU networks with SGD}


Then we extend the convergence results of GD to SGD in the following theorem.
\begin{theorem}\label{thm:sgd}
Under Assumptions \ref{assump:normalized_data} and \ref{assump:seperate}, and suppose the number of hidden nodes per layer satisfies
\begin{align}\label{eq:condition_m_sgd}
m = \Omega\big(kn^{17}L^{12}\log^3(m)/(B^4\phi^8)\big).
\end{align}
Then if set the step size as $\eta = O\big( kB\phi/(n^3m\log(m))\big)$, with probability at least $1-O(n^{-1})$, SGD is able to achieve $\epsilon$ expected training loss within
\begin{align*}
T = O\big( n^5\log(m)\log^2{(1/\epsilon)}/(B\phi^2)\big)
\end{align*}
iterations.
\end{theorem}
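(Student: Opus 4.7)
The plan is to follow the same blueprint as the GD analysis (Theorem \ref{thm:gd}), but with additional care to handle the stochastic gradient noise. The two core technical ingredients of the GD proof---the tighter gradient lower bound and the sharper trajectory length estimate---should carry over essentially unchanged; what differs is the per-step descent inequality and the trajectory-length concentration.

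First, I would recycle the local geometry around the Gaussian initialization that underlies Theorem \ref{thm:gd}. Concretely, choose a radius $\tau$ of the form $\mathrm{poly}(n,L,\phi^{-1})/\sqrt{m}$ so that, with probability at least $1-O(n^{-1})$ over the initialization, every $\Wb\in\cB(\Wb^{(0)},\tau)$ satisfies (i) a gradient lower bound $\|\nabla_{\Wb_L}L(\Wb)\|_F^2 \gtrsim m\phi L(\Wb)/(kn)$; (ii) a semi-smoothness inequality $L(\Wb')\le L(\Wb) + \la\nabla L(\Wb),\Wb'-\Wb\ra + O(L^{2}m)\|\Wb'-\Wb\|_F^2 + \text{lower-order term}$; (iii) a per-example gradient upper bound $\|\nabla\ell(\fb_{\Wb}(\xb_i),\yb_i)\|_F^2 \lesssim mL\cdot\ell(\fb_{\Wb}(\xb_i),\yb_i)$; and (iv) an initial loss bound $L(\Wb^{(0)}) = \tilde O(1)$. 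These are exactly the ingredients of the GD proof, so the verification is analogous once the over-parameterization condition \eqref{eq:condition_m_sgd}, which is stronger than \eqref{eq:condition_m_gd}, is imposed.

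Next, conditional on $\Wb^{(t)}\in\cB(\Wb^{(0)},\tau)$, I would derive a one-step expected descent. Writing $\widetilde{\nabla}L(\Wb^{(t)}) = (1/B)\sum_{s\in\cB^{(t)}}\nabla\ell(\fb_{\Wb^{(t)}}(\xb_s),\yb_s)$ and using the variance bound $\EE\|\widetilde{\nabla}L(\Wb^{(t)})\|_F^2 \lesssim \|\nabla L(\Wb^{(t)})\|_F^2 + (mL/B)\,L(\Wb^{(t)})$, which follows from (iii), together with semi-smoothness (ii), yields
\begin{align*}
\EE\bigl[L(\Wb^{(t+1)})\,\big|\,\Wb^{(t)}\bigr] \le \Bigl(1 - \Omega\Bigl(\frac{\eta m\phi}{kn}\Bigr) + O\Bigl(\frac{\eta^2 m^2 L^2}{B}\Bigr)\Bigr)\,L(\Wb^{(t)}).
\end{align*}
The prescribed step size $\eta = O\bigl(kB\phi/(n^3 m\log m)\bigr)$ makes the quadratic term at most half the linear one, producing the contraction $\EE[L(\Wb^{(t+1)})\mid\Wb^{(t)}]\le (1-\Omega(\eta m\phi/(kn)))\,L(\Wb^{(t)})$. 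Iterating and taking total expectation gives $\EE[L(\Wb^{(T)})]\le\epsilon$ after $T = \tilde O\bigl(n^5\log(1/\epsilon)/(B\phi^2)\bigr)$ steps, as claimed.

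The main obstacle is showing that the iterates actually stay inside $\cB(\Wb^{(0)},\tau)$ throughout all $T$ steps with high probability, so that (i)--(iv) may be invoked inductively; unlike in GD, the per-step displacement $\eta\,\widetilde{\nabla}L(\Wb^{(t)})$ is inflated by minibatch noise, of order $\eta\sqrt{mL\cdot L(\Wb^{(t)})/B}$. I would bound $\|\Wb^{(t)}-\Wb^{(0)}\|_F$ by splitting the accumulated update into (a) a conditional-mean part $\eta\sum_s\|\nabla L(\Wb^{(s)})\|_F$, controlled by Cauchy--Schwarz together with the gradient upper bound and a telescoping argument on $\sum_s L(\Wb^{(s)})$ using the expected contraction above; and (b) a martingale-difference part $\eta\sum_s\bigl(\widetilde{\nabla}L(\Wb^{(s)})-\nabla L(\Wb^{(s)})\bigr)$, handled by a Freedman-type vector concentration whose predictable quadratic variation is at most $\eta^2\sum_s (mL/B)\,L(\Wb^{(s)})$, again bounded via the expected geometric decay of $L(\Wb^{(s)})$. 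Both pieces come out $\tilde O(\tau)$, so a union-bounded induction over $t\le T$ closes the loop; matching $\tau$ against these bounds and against the requirements of (i)--(iv) yields exactly the over-parameterization condition \eqref{eq:condition_m_sgd}.
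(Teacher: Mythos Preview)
Your three-step scaffold matches the paper's, and Steps (i)--(ii) are essentially what it does (two minor corrections: the gradient lower bound is $\|\nabla L(\Wb)\|_F^2\gtrsim m\phi L(\Wb)/(kn^2)$, not $kn$; and $\tau$ is an $m$-independent radius---it is the trajectory that scales like $m^{-1/2}$, and matching the one against the other is what produces \eqref{eq:condition_m_sgd}).

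The gap is in Step (iii). Both your pieces (a) and (b) ultimately need a \emph{high-probability} bound on $\sum_s L(\Wb^{(s)})$ (or on $\sum_s\sqrt{L(\Wb^{(s)})}$): the ``telescoping'' in (a) and the predictable quadratic variation in Freedman for (b) are pathwise quantities, and the expected contraction $\EE[L(\Wb^{(s)})]\le\rho^s L(\Wb^{(0)})$ only controls their expectations. Patching via Markov's inequality costs extra factors and, as far as I can see, does not recover the stated over-parameterization condition. The paper avoids the mean/martingale decomposition entirely: it observes that $\log L(\Wb^{(t)})$ is a supermartingale with bounded increments (the one-step multiplicative increase of $L$ is at most $1+O(\eta m\sqrt{n/B}/k)$, by the per-example gradient upper bound and semi-smoothness), applies one-sided Azuma, and obtains the high-probability geometric decay $L(\Wb^{(t)})\le 2L(\Wb^{(0)})\exp\big(-\Omega(t\, m\phi\eta/(kn^2))\big)$. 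The trajectory is then bounded directly as $\sum_s\eta\|\Gb^{(s)}\|_F\lesssim\eta\sqrt{mn/(Bk)}\sum_s\sqrt{L(\Wb^{(s)})}$, which is now a genuine geometric series, yielding $\|\Wb_l^{(t)}-\Wb_l^{(0)}\|_2=O\big(k^{1/2}n^{5/2}B^{-1/2}m^{-1/2}\phi^{-1}\big)$; matching this against the required $\tau$ gives exactly \eqref{eq:condition_m_sgd}.
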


\begin{remark}
We first compare our result with the state-of-the-art proved in \cite{allen2018convergence}, where they showed that SGD can converge to a point with $\epsilon$-training loss within $\tilde O\big(n^7\log(1/\epsilon)/(B\phi^2)\big)$ iterations if $m = \tilde\Omega\big(n^{24}L^{12}Bk/\phi^8\big)$. In stark contrast, our result on the over-parameterization condition is strictly better than it by a factor of $\tilde \Omega(n^7B^5)$, and our result on the iteration complexity is also faster by a factor of $O(n^2)$.
\end{remark}

Moreover, we also characterize the convergence rate and over-parameterization condition of SGD for training two-layer networks. Unlike the gradient descent, which has the same convergence rate and over-parameterization condition for training both deep and two-layer networks in terms of training data size $n$, we find that the over-parameterization condition of SGD can be further improved for training two-layer neural networks. We state this improved result in the following theorem.

\begin{theorem}\label{thm:sgd_single}
Under the same assumptions made in Theorem \ref{thm:sgd}. For two-layer ReLU networks, if set the number of hidden nodes and step size as
\begin{align*}
m = \Omega\big(k^{5/2}n^{11}\log^3(m)/(\phi^5B)\big), \quad \eta= O\big(kB\phi/(n^3m\log(m))\big),
\end{align*}
then with probability at least $1-O(n^{-1})$, stochastic gradient descent is able to achieve $\epsilon$ training loss within $T = O\big(n^5\log(m)\log(1/\epsilon)/(B\phi^2)\big)$ iterations.
\end{theorem}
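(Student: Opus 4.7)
\textbf{Proof plan for Theorem \ref{thm:sgd_single}.} My overall strategy is to retrace the skeleton used to establish Theorem \ref{thm:sgd} and replace each intermediate estimate that carries a suboptimal dependence in the deep case by a tighter two-layer version. Concretely, at any reference point $\Wb$ lying in a perturbation ball around $\Wb^{(0)}$ of radius $\tau$, I will need four ingredients: (i) a gradient lower bound of the form $\|\nabla L(\Wb)\|_F^2 \ge \Omega\bigl(m\phi/(n^2 k)\bigr)\cdot L(\Wb)$; (ii) a per-example gradient upper bound; (iii) a semi-smoothness inequality for $L$; and (iv) a variance bound for the minibatch stochastic gradient. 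Items (i) and (iii) are already furnished by the proof of Theorem \ref{thm:gd} specialised to $L=1$, so the real work is in (ii) and (iv), where the two-layer structure lets me remove the polynomial-in-$L$ and cross-layer propagation losses that drive the deep-network width requirement.

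\textbf{Sharper per-example and variance bounds.} For a two-layer network only $\Wb_1$ is updated, and the per-example gradient has the closed form
\begin{align*}
\nabla_{\Wb_1}\ell\bigl(\fb_{\Wb}(\xb_i),\yb_i\bigr) \;=\; \bigl[\Vb^\top(\hat\yb_i-\yb_i)\odot\sigma'(\Wb_1\xb_i)\bigr]\,\xb_i^\top,
\end{align*}
so that $\|\nabla_{\Wb_1}\ell_i\|_F^2 \le \|\Vb\|_2^2\,\|\hat\yb_i-\yb_i\|_2^2 = O(\|\Vb\|_2^2)\cdot 2\ell_i$, with $\|\Vb\|_2$ controlled at initialisation by standard Gaussian matrix concentration. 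Because there is no product of hidden-layer matrices to propagate forward or backward, no extra factors of $L$ or intermediate spectral norms enter. Summing over a uniform minibatch gives a stochastic-gradient variance bound of the form $\EE_{\cB}\bigl\|B^{-1}\sum_{s\in\cB}\nabla\ell_s - \nabla L(\Wb)\bigr\|_F^2 \le (1/B)\cdot O\bigl(m L(\Wb)/d\bigr)$, strictly tighter than the analogue used in Theorem \ref{thm:sgd}. This is the source of the improved $B$-dependence (from $B^{-4}$ down to $B^{-1}$) in the over-parameterization condition.

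\textbf{Descent, trajectory length and induction.} Combining (i)--(iv) at the current iterate yields an expected one-step descent inequality
\begin{align*}
\EE\bigl[L(\Wb^{(t+1)})\,\big|\,\Wb^{(t)}\bigr] \;\le\; \Bigl(1 - \Omega\bigl(\eta m\phi/(n^2 k)\bigr)\Bigr)\,L(\Wb^{(t)}) \;+\; O\bigl(\eta^2 m L(\Wb^{(t)})/(Bd)\bigr),
\end{align*}
valid as long as $\Wb^{(t)}$ lies in the perturbation ball. The choice $\eta = O\bigl(kB\phi/(n^3 m\log m)\bigr)$ balances the bias and variance contributions, producing geometric contraction at rate $1-\Omega\bigl(B\phi^2/(n^5\log m)\bigr)$ and yielding $T = O\bigl(n^5\log(m)\log(1/\epsilon)/(B\phi^2)\bigr)$. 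The remaining check is that the iterates never leave the ball. Writing $\Wb^{(T)}-\Wb^{(0)} = -\eta\sum_t \nabla L(\Wb^{(t)}) + M^{(T)}$ where $M^{(T)}$ is a martingale, I would bound the first term by telescoping $\sum_t \sqrt{L(\Wb^{(t)})}$ through the contraction and the second by its quadratic variation, which itself is controlled by $\sum_t L(\Wb^{(t)})$ via the variance bound. Matching the resulting radius to $\tau$ (the radius inside which the gradient lower bound remains valid) pins down $m = \Omega\bigl(k^{5/2}n^{11}\log^3(m)/(\phi^5 B)\bigr)$.

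\textbf{Main obstacle.} The delicate step is closing the ``stays in ball $\Leftrightarrow$ contracts'' induction without slack. A naive Cauchy--Schwarz bound on the cumulative displacement loses a $\sqrt{T}$ factor that would re-inflate the width requirement by a polynomial in $n$ and $\phi^{-1}$; this is what forces me to couple the per-step displacement directly to the current loss and then telescope. A parallel difficulty is that the variance bound and the activation-pattern stability estimates must hold uniformly over all $t\le T$ on a single high-probability event, which calls for a careful union bound combined with a Freedman-type concentration inequality for the martingale component. Carrying out these two coupled arguments simultaneously is what converts the straightforward specialisation of Theorem \ref{thm:sgd} (which would give a looser bound) into the improved width requirement stated in Theorem \ref{thm:sgd_single}.
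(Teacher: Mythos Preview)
Your skeleton is right but you have misidentified where the improvement comes from, and the approach as written would not reach the stated width $m=\Omega(k^{5/2}n^{11}\log^3(m)/(\phi^5 B))$.

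You claim that item (iii), the semi-smoothness inequality, is ``already furnished by the proof of Theorem \ref{thm:gd} specialised to $L=1$'', and that the real work is a sharper Frobenius-norm variance bound. But the Frobenius bound you write down, $\EE\|\Gb^{(t)}-\nabla L\|_F^2 \le O(mL(\Wb)/(Bk))$, is exactly the bound already used in the deep-network SGD analysis (Lemma \ref{lemma:grad_bounds}); nothing is gained by re-deriving it from the closed form of the two-layer gradient. If you plug this same variance bound and the generic semi-smoothness lemma (Lemma \ref{lemma:semi_smooth} at $L=1$) into the descent argument, the non-smooth cross term still carries $\tau^{1/3}\sqrt{m\log(m)/k}\cdot\|\Gb^{(t)}\|_2$ with $\|\Gb^{(t)}\|_2\le\|\Gb^{(t)}\|_F=O(\sqrt{mnL(\Wb)/(Bk)})$. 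Controlling that term forces $\tau$ to scale like $B^{3/2}$, just as in Lemma \ref{lemma:convergence_sgd}, and matching the trajectory bound $\|\Wb^{(t)}-\Wb^{(0)}\|_2=O(k^{1/2}n^{5/2}B^{-1/2}m^{-1/2}\phi^{-1})$ to this $\tau$ reproduces the $B^{-4}$ width from Theorem \ref{thm:sgd}, not $B^{-1}$.

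The paper's actual leverage is different. It proves a \emph{new} two-layer semi-smoothness inequality (Lemma \ref{lemma:semi_smooth_single}) in which the non-smooth term is measured in the $\|\cdot\|_{2,\infty}$ norm of the update and carries the sharper factor $\tau^{2/3}$ instead of $\tau^{1/3}$. This is paired with a per-example gradient bound in the same norm (Lemma \ref{lemma:grad_bound_2infty}), $\|\nabla\ell_i\|_{2,\infty}=O(\sqrt{\ell_i\log m})$, which is independent of both $m$ and $B$. Consequently the cross term in the one-step inequality no longer forces $\tau$ to depend on $B$ at all; one can take $\tau$ of order $\phi^{3/2}n^{-3}k^{-3/4}\log^{-3/2}(m)$. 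Matching this $B$-free $\tau$ to the trajectory length (which still scales like $B^{-1/2}m^{-1/2}$) is what produces the single power of $B^{-1}$ and the $n^{11}/\phi^5$ scaling in the width. Your plan needs to incorporate these two $\|\cdot\|_{2,\infty}$ lemmas; without them, the balancing step you describe cannot deliver the claimed constants.
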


\begin{remark}
From Theorem \ref{thm:sgd}, we can also obtain the convergence results of SGD for two-layer ReLU networks by choosing $L=1$. 
However, the resulting
over-parameterization condition is $m = \Omega\big(kn^{17}\log^3(m)B^{-4}\phi^{-8}\big)$, which is much worse than that in Theorem \ref{thm:sgd_single}. This is because for two-layer networks, the training loss enjoys nicer local properties around the initialization, which can be leveraged to improve the convergence of SGD. Due to space limit, we defer more details to Appendix \ref{appendix:proof_sgd_one}.

\end{remark}


\section{Proof sketch of the main theory}\label{sec:proof_main}
In this section, we provide the proof sketch for Theorems \ref{thm:gd}, and highlight our technical contributions and  innovative proof techniques. 

\subsection{Overview of the technical contributions}
The improvements in our result are mainly attributed to the following two aspects: (1) a tighter gradient lower bound leading to faster convergence; and (2) a sharper characterization of the trajectory length of the algorithm. 

We first define the following perturbation region based on the initialization,
\begin{align*}
\cB(\Wb^{(0)},\tau) = \{\Wb: \|\Wb_l - \Wb_l^{(0)}\|_2\le \tau \mbox{ for all } l\in[L]\},
\end{align*}
where $\tau>0$ is the preset perturbation radius for each weight matrix $\Wb_l$.

\textbf{Tighter gradient lower bound.}
By the definition of $\nabla L(\Wb)$, we have $\|\nabla L(\Wb)\|_F^2 = \sum_{l=1}^L\|\nabla_{\Wb_l}L(\Wb)\|_F^2\ge \|\nabla_{\Wb_L}L(\Wb)\|_F^2$. Therefore, we can focus on the partial gradient of $L(\Wb)$ with respect to the weight matrix at the last hidden layer. Note that we further have $\|\nabla_{\Wb_L}L(\Wb)\|_F^2= \sum_{j=1}^m \|\nabla_{\wb_{L,j}}L(\Wb)\|_2^2$, where
\begin{align*}
\nabla_{\wb_{L,j}}L(\Wb) = \frac{1}{n}\sum_{i=1}^n\la \fb_{\Wb}(\xb_i) - \yb_i, \vb_j\ra\sigma'\big(\la\wb_{L,j},\xb_{L-1,i}\ra\big)\xb_{L-1,i},
\end{align*}
and $\xb_{L-1,i}$ denotes the output of the $(L-1)$-th hidden layer with input $\xb_i$.
In order to prove the gradient lower bound, for each $\xb_{L-1,i}$, we introduce a region namely ``gradient region'', denoted by $\cW_j$, which is almost orthogonal to $\xb_{L-1,i}$. Then we prove two major properties of these $n$ regions $\{\cW_1,\dots,\cW_n\}$: (1) $\cW_i\cap\cW_j = \emptyset$ if $i\neq j$, and (2) if $\wb_{L,j}\in \cW_i$ for any $i$, with probability at least $1/2$, $\|\nabla_{\wb_{L,j}}L(\Wb)\|_2$ is sufficiently large. We visualize these ``gradient regions'' in Figure \ref{fig:grad1}. Since $\{\wb_{L,j}\}_{j\in[m]}$ are randomly generated at the initialization, in order to get a larger bound of $\|\nabla_{\Wb_L}L(\Wb)\|_F^2$, we hope the size of these ``gradient regions'' to be as large as possible. We take the union of the ``gradient regions'' for all training data, i.e., $\cup_{i=1}^n\cW_i$, which is shown in Figure \ref{fig:grad1}. As a comparison, \citet{allen2018convergence,zou2018stochastic} only leveraged  the ``gradient region'' for one training data point to establish the gradient lower bound, which is shown in Figure \ref{fig:grad0}. Roughly speaking, the size of ``gradient regions'' utilized in our proof is $n$ times larger than those used in \cite{allen2018convergence,zou2018stochastic}, which consequently leads to an $O(n)$ improvement on the gradient lower bound. The improved gradient lower bound is formally stated in the following lemma.

\begin{lemma}[Gradient lower bound]\label{lemma:grad_lower_bounds}
Let $\tau = O\big(\phi^{3/2}n^{-3}L^{-6}\log^{-3/2}(m)\big)$, then for all $\Wb\in\cB(\Wb^{(0)},\tau)$, with probability at least $1-\exp\big(O(m\phi/(dn)))$, it holds that
\begin{align*}
\|\nabla &L(\Wb)\|_F^2\ge O\big(m\phi L(\Wb)/(kn^2)\big).
\end{align*}
\end{lemma}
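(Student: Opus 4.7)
The plan is to lower bound $\|\nabla L(\Wb)\|_F^2$ by keeping only the partial gradient at the last hidden layer:
\begin{align*}
\|\nabla L(\Wb)\|_F^2\ge \|\nabla_{\Wb_L}L(\Wb)\|_F^2=\sum_{j=1}^m\|\nabla_{\wb_{L,j}}L(\Wb)\|_2^2,
\end{align*}
and then to lower bound the individual terms $\|\nabla_{\wb_{L,j}}L(\Wb)\|_2^2$ for a carefully chosen subset of neurons $j$. Since the per-neuron gradient $\nabla_{\wb_{L,j}}L(\Wb)=\frac{1}{n}\sum_{i=1}^n\la\fb_{\Wb}(\xb_i)-\yb_i,\vb_j\ra\sigma'(\la\wb_{L,j},\xb_{L-1,i}\ra)\xb_{L-1,i}$ is a sum over the $n$ training points, the central obstacle is cancellation across $i$. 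My idea, following the overview, is to identify for each training example $i$ a pool of last-layer neurons whose ReLU gates fire \emph{only} on $\xb_{L-1,i}$, so that the per-neuron gradient collapses to a single clean term involving only example $i$.

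Before carving out these pools, I would invoke the forward-propagation estimates of \citet{allen2018convergence} to show, with high probability over the initialization and uniformly over $\Wb\in\cB(\Wb^{(0)},\tau)$, that the hidden representations $\{\xb_{L-1,i}\}$ are almost unit-norm and retain a pairwise separation $\phi':=\tilde\Omega(\phi/L^{O(1)})$; the $L^{O(1)}$ loss here is what eventually forces the $L^{-6}$ factor in $\tau$ and the $L^{12}$ factor in the width condition. Conditional on this event, I would then construct $n$ disjoint ``gradient regions'' $\cW_1,\dots,\cW_n\subset\RR^m$ so that (i) for any $\wb\in\cW_i$, $\sigma'(\la\wb,\xb_{L-1,i}\ra)=1$ and $\sigma'(\la\wb,\xb_{L-1,i'}\ra)=0$ for all $i'\ne i$; (ii) each $\cW_i$ has Gaussian measure at least $\tilde\Omega(\phi'/n)$ under the initialization $\cN(0,2\Ib/m)$; and (iii) the defining inequalities of $\cW_i$ have slack much larger than $\tau$, so that membership is preserved for every $\Wb\in\cB(\Wb^{(0)},\tau)$. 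Concretely, $\cW_i$ would be a thin slab oriented using $\xb_{L-1,i}$ whose angular width is $\Theta(\phi'/n)$ and whose disjointness from the other $\cW_{i'}$'s is enforced precisely by the pairwise separation $\phi'$. This construction --- which pools over all $n$ data points, rather than the single-example region of \citet{allen2018convergence,zou2018stochastic} --- is the main obstacle of the proof: the measure lower bound, the disjointness property, and the $\tau$-stability property must be traded against each other, and it is exactly this trade-off that pins down the $\tau=O(\phi^{3/2}n^{-3}L^{-6}\log^{-3/2}(m))$ radius appearing in the lemma.

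With these regions in hand, a Chernoff bound on the i.i.d.\ indicators $\{\mathbf{1}(\wb_{L,j}^{(0)}\in\cW_i)\}_{j\in[m]}$ yields, with probability at least $1-\exp(-\Omega(m\phi/(dn)))$, that every $\cW_i$ simultaneously catches $m_i=\tilde\Omega(m\phi/(nL^{O(1)}))$ initial neurons. For any such neuron and any $\Wb\in\cB(\Wb^{(0)},\tau)$, the activation pattern is preserved by construction and
\begin{align*}
\nabla_{\wb_{L,j}}L(\Wb)=\frac{1}{n}\la\fb_{\Wb}(\xb_i)-\yb_i,\vb_j\ra\,\xb_{L-1,i}.
\end{align*}
Since the output-layer weight $\vb_j$ is independent of the hidden-layer initialization, Gaussian anti-concentration applied to $\la\fb_{\Wb}(\xb_i)-\yb_i,\vb_j\ra$ shows that a constant fraction of these $m_i$ neurons contribute $\|\nabla_{\wb_{L,j}}L(\Wb)\|_2^2\gtrsim\|\fb_{\Wb}(\xb_i)-\yb_i\|_2^2/(kn^2)$. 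Summing over the $n$ disjoint groups and invoking $L(\Wb)=\frac{1}{2n}\sum_i\|\fb_{\Wb}(\xb_i)-\yb_i\|_2^2$ yields $\|\nabla L(\Wb)\|_F^2\gtrsim m\phi L(\Wb)/(kn^2)$ after absorbing the $L^{O(1)}$ losses into the logarithmic factors hidden by $\tilde\Omega$.
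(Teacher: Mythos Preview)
Your proposed construction of the gradient regions contains a genuine gap. You require $\cW_i$ to satisfy $\sigma'(\la\wb,\xb_{L-1,i}\ra)=1$ while $\sigma'(\la\wb,\xb_{L-1,i'}\ra)=0$ for all $i'\neq i$, i.e., the last-layer ReLU fires \emph{only} on example $i$. A thin slab of angular width $\Theta(\phi'/n)$ does not deliver this: inside such a slab the other inner products $\la\wb,\xb_{L-1,i'}\ra$ can take either sign, and forcing all $n-1$ of them to be negative simultaneously is not achievable with Gaussian mass $\Omega(\phi/n)$ in general. The paper sidesteps this entirely. Its region $\cW_i$ only demands that $|\la\wb,\xb_{L-1,i}\ra|$ be \emph{small} (either sign allowed) while the component of $\wb$ orthogonal to $\xb_{L-1,i}$ makes $|\la\wb,\xb_{L-1,j}\ra|$ large for every $j\neq i$ (again either sign). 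The point is not that the other ReLUs are off, but that their activation signs are \emph{frozen} while the $i$-th one is free to flip: conditioned on $\cW_i$, the events $\la\wb,\xb_{L-1,i}\ra>0$ and $<0$ are equally likely and the corresponding values of $\hb(\wb)=\sum_s a_s\sigma'(\la\wb,\xb_{L-1,s}\ra)\xb_{L-1,s}$ differ \emph{only} in the $i$-th summand, whence $\max\{\|\hb(\wb_+)\|_2,\|\hb(\wb_-)\|_2\}\ge\tfrac{1}{2}|a_i|\cdot\|\xb_{L-1,i}\|_2$. This sign-flip coupling, not single-example firing, is what simultaneously isolates example $i$ and keeps the Gaussian measure at $\Omega(\phi/n)$.

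Two further points where your outline departs from the actual argument. First, the separation of hidden representations does \emph{not} degrade by $L^{O(1)}$: the paper's forward-propagation lemma gives pairwise separation at least $\phi/2$ at layer $L-1$, uniformly in $L$. The $L^{-6}$ in $\tau$ arises instead from the perturbation bound $\|\Vb(\tilde\bSigma_{L,i}-\bSigma_{L,i})\|_2=O\big(\tau^{1/3}L^2\sqrt{m\log(m)/k}\big)$, and the extension to $\Wb\in\cB(\Wb^{(0)},\tau)$ is done not by preserving activation patterns but by a triangle-inequality comparison of $\nabla_{\Wb_L}L(\Wb)$ with a surrogate built from the \emph{initial} pattern $\bSigma_{L,i}$ and initial features $\xb_{L-1,i}$. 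Second, your anti-concentration step on $\la\fb_\Wb(\xb_i)-\yb_i,\vb_j\ra$ is not uniform in $\Wb$: the residual direction varies with $\Wb$ and, worse, depends on $\Vb$, so $\vb_j$ is not independent of it. The paper handles this by proving the lower bound first for \emph{arbitrary fixed} vectors $\ub_i$ in place of the residuals and then taking an $\epsilon$-net over $(\RR^k)^n$, which is why the width condition already needs $m=\tilde\Omega(n^2k^2/\phi)$ at this stage.
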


\begin{figure}[t]
    \centering
     \subfigure[``gradient region'' for $\{\xb_{L-1,i}\}_{i\in[n]}$]{\includegraphics[width=0.4\linewidth]{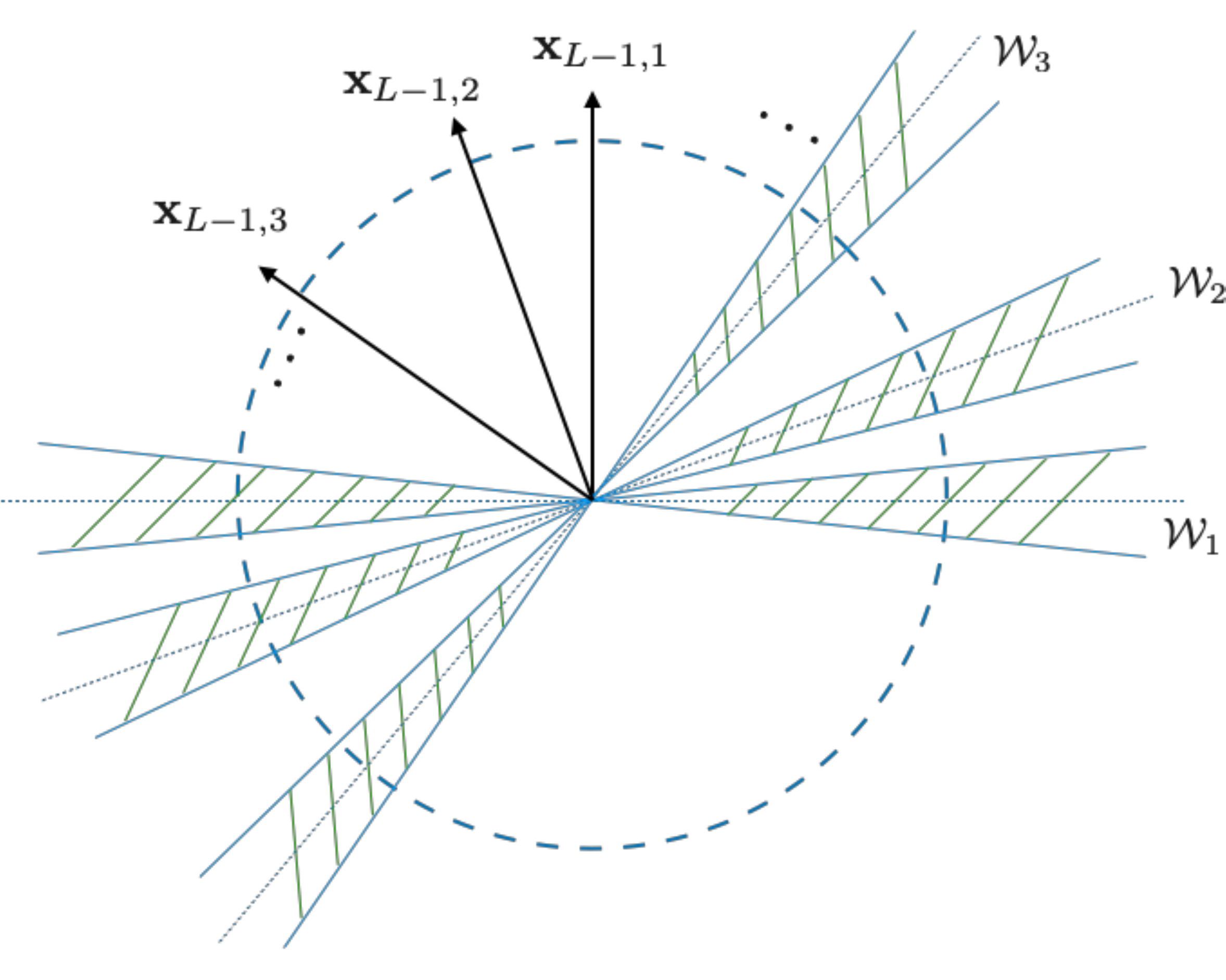}\label{fig:grad1}}
      \subfigure[``gradient region'' for $\xb_{L-1,1}$]{\includegraphics[width=0.4
    \linewidth]{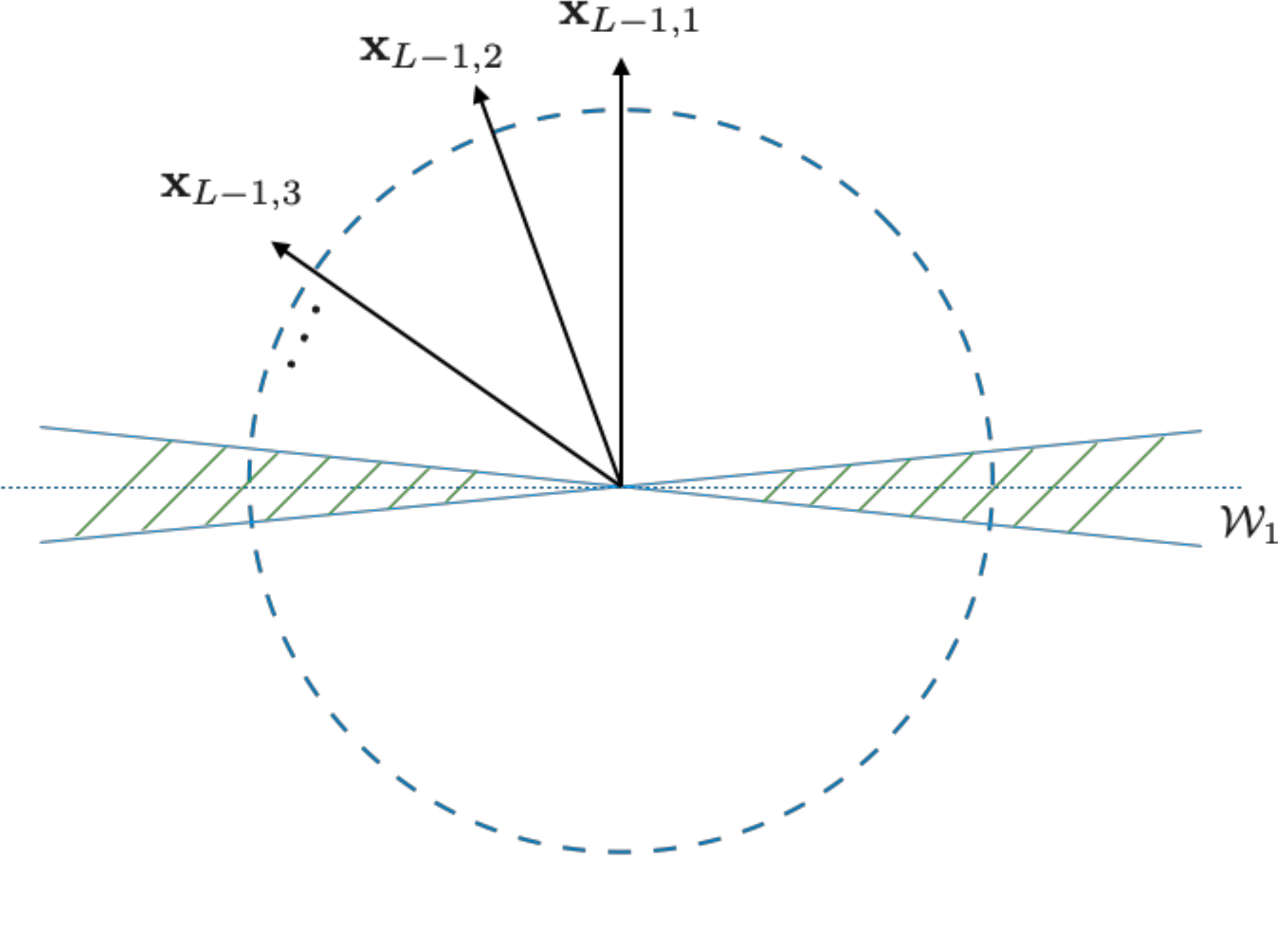}\label{fig:grad0}}
    \caption{(a): ``gradient region'' for all training data (b): ``gradient region'' for one training example.}
    \label{fig:my_label}
\end{figure}

\textbf{Sharper characterization of the trajectory length.}
The improved analysis of the trajectory length is motivated by the following observation: at the $t$-th iteration, the decrease of the training loss after one-step gradient descent is proportional to the gradient norm, i.e., $L(\Wb^{(t)}) - L(\Wb^{(t+1)})\propto \|\nabla L(\Wb^{(t)})\|_F^2$. In addition, the gradient norm $\|\nabla L(\Wb^{(t)})\|_F$  determines the trajectory length in the $t$-th iteration. Putting them together, we can obtain
\begin{align}\label{eq:Gu0001}
\|\Wb_l^{(t+1)} - \Wb_l^{(t)}\|_2= \eta\|\nabla_{\Wb_l}L(\Wb^{(t)})\|_2\le \sqrt{Ckn^2/(m\phi)}\cdot\Big(\sqrt{L(\Wb^{(t)}) }-\sqrt{L(\Wb^{(t+1)})}\Big),
\end{align}
where $C$ is an absolute constant. \eqref{eq:Gu0001} enables the use of telescope sum, which yields $\|\Wb_l^{(t)} - \Wb_l^{(0)}\|_2\le \sqrt{Ckn^2 L(\Wb^{(0)})/m\phi}$. In stark contrast, \citet{allen2018convergence} bounds the trajectory length as
\begin{align*}
\|\Wb_l^{(t+1)} - \Wb_l^{(t)}\|_2= \eta\|\nabla_{\Wb_l}L(\Wb^{(t)})\|_2\le \eta \sqrt{C'm L(\Wb^{(t)})/k},
\end{align*}
and further prove that  $\|\Wb_l^{(t)}-\Wb_l^{(0)}\|_2\le \sqrt{C'kn^6L^2(\Wb^{(0)})/(m\phi^2)}$ 
by taking summation over $t$, where $C'$ is an absolute constant.
Our sharp characterization of the trajectory length is formally summarized in the following lemma.

\begin{lemma}\label{lemma:dis_to_ini}
Assuming all iterates are staying inside the region $\cB(\Wb^{(0)},\tau)$ with $\tau = O\big(\phi^{3/2}n^{-3}L^{-6}\log^{-3/2}(m)\big)$, if set the step size $\eta = O\big(k/(L^2 m)\big)$, with probability least $1-O(n^{-1})$, the following holds for all $t\ge 0$ and $l\in[L]$,
\begin{align*}
\|\Wb_l^{(t)} - \Wb_l^{(0)}\|_2\le O\big( \sqrt{kn^2\log(n)/(m\phi)}\big).
\end{align*}
\end{lemma}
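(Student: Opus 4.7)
The plan is to combine the gradient lower bound from Lemma \ref{lemma:grad_lower_bounds} with a one-step descent inequality and then telescope. Since all iterates are assumed to lie inside $\cB(\Wb^{(0)},\tau)$, the gradient lower bound applies at every step. The target inequality \eqref{eq:Gu0001} is the heart of the argument:
\begin{align*}
\|\Wb_l^{(t+1)} - \Wb_l^{(t)}\|_2 \;\le\; \sqrt{Ckn^2/(m\phi)}\,\bigl(\sqrt{L(\Wb^{(t)})} - \sqrt{L(\Wb^{(t+1)})}\bigr),
\end{align*}
from which a telescoping sum immediately yields $\|\Wb_l^{(t)}-\Wb_l^{(0)}\|_2 \le \sqrt{Ckn^2\,L(\Wb^{(0)})/(m\phi)}$. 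Combined with a high-probability bound $L(\Wb^{(0)})=O(\log n)$ at initialization, this gives exactly the claim.

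First, I would establish a semi-smoothness / one-step descent inequality of the form $L(\Wb^{(t+1)}) \le L(\Wb^{(t)}) - (\eta/2)\|\nabla L(\Wb^{(t)})\|_F^2$, valid for all $\Wb^{(t)}, \Wb^{(t+1)} \in \cB(\Wb^{(0)},\tau)$ provided $\eta = O(k/(L^2 m))$. The quantitative smoothness constant $O(L^2 m/k)$ is obtained by a perturbation analysis of how the ReLU activation pattern and hidden features change inside $\cB(\Wb^{(0)},\tau)$; this is essentially the same type of semi-smoothness machinery developed in \citet{allen2018convergence}, restricted to the perturbation region of radius $\tau = O\bigl(\phi^{3/2}n^{-3}L^{-6}\log^{-3/2}(m)\bigr)$.

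Second, I would turn the descent inequality into a step-norm bound. Factoring $L(\Wb^{(t)})-L(\Wb^{(t+1)}) = \bigl(\sqrt{L(\Wb^{(t)})}+\sqrt{L(\Wb^{(t+1)})}\bigr)\bigl(\sqrt{L(\Wb^{(t)})}-\sqrt{L(\Wb^{(t+1)})}\bigr)$ and using monotonicity of the loss gives
\begin{align*}
\eta\,\|\nabla L(\Wb^{(t)})\|_F^2 \;\le\; 4\sqrt{L(\Wb^{(t)})}\bigl(\sqrt{L(\Wb^{(t)})} - \sqrt{L(\Wb^{(t+1)})}\bigr).
\end{align*}
Dividing both sides by $\|\nabla L(\Wb^{(t)})\|_F$ and applying Lemma \ref{lemma:grad_lower_bounds} in the form $\|\nabla L(\Wb^{(t)})\|_F \ge c\sqrt{m\phi/(kn^2)}\,\sqrt{L(\Wb^{(t)})}$ cancels the $\sqrt{L(\Wb^{(t)})}$ on the right and yields \eqref{eq:Gu0001} (using $\|\nabla_{\Wb_l} L\|_2 \le \|\nabla L\|_F$). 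Telescoping over $s=0,\dots,t-1$ and dropping the non-negative $-\sqrt{L(\Wb^{(t)})}$ term bounds $\|\Wb_l^{(t)}-\Wb_l^{(0)}\|_2$ by $\sqrt{Ckn^2 L(\Wb^{(0)})/(m\phi)}$.

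Finally, I would bound the initial loss: by Gaussian initialization each entry of the pre-activation at each layer is sub-Gaussian with variance $O(1)$, so standard concentration (together with a $O(\log n)$ tail bound via union bound over $n$ examples and $k$ output coordinates) gives $\|\fb_{\Wb^{(0)}}(\xb_i)\|_2^2 = O(\log n)$ uniformly in $i$ with probability $1-O(n^{-1})$, hence $L(\Wb^{(0)})=O(\log n)$; substituting this in gives the stated bound $O\bigl(\sqrt{kn^2\log(n)/(m\phi)}\bigr)$. The main obstacle is the semi-smoothness step: one needs the smoothness constant to be $O(L^2 m/k)$ throughout $\cB(\Wb^{(0)},\tau)$, which in turn requires that the perturbation radius $\tau$ is small enough that only a controlled number of ReLU patterns flip between $\Wb^{(t)}$ and $\Wb^{(t+1)}$; the choice of $\tau$ in the lemma is tuned precisely so this works, but making the constants consistent with the gradient lower bound (which itself relies on the same $\tau$) is the delicate bookkeeping part.
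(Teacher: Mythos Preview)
Your proposal is correct and follows essentially the same route as the paper's proof: invoke the semi-smoothness lemma of \citet{allen2018convergence} together with the step-size choice and the gradient lower bound (Lemma~\ref{lemma:grad_lower_bounds}) to obtain the one-step descent $L(\Wb^{(t+1)})\le L(\Wb^{(t)})-\tfrac{\eta}{2}\|\nabla L(\Wb^{(t)})\|_F^2$, then convert this into the telescoping bound \eqref{eq:Gu0001} via the factorization of $\sqrt{L(\Wb^{(t)})}-\sqrt{L(\Wb^{(t+1)})}$ and the gradient lower bound, and finally plug in $L(\Wb^{(0)})=O(\log n)$ from Lemma~\ref{lemma:ini_value}. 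The ``delicate bookkeeping'' you flag---that the non-smooth term in the semi-smoothness inequality is dominated only after invoking the gradient lower bound, which in turn requires the same $\tau$---is exactly how the paper handles it.
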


\subsection{Proof of Theorem \ref{thm:gd}}
Our proof road map can be organized in three steps: (i) prove that the training loss enjoys good curvature properties within the perturbation region $\cB(\Wb^{(0)},\tau)$; (ii) show that gradient descent is able to converge to global minima based on such good curvature properties; and (iii) ensure all iterates stay inside the perturbation region until convergence. 

\noindent\textbf{Step (i) Training loss properties.}
We first show some key properties of the training loss within $\cB(\Wb^{(0)},\tau)$, which are essential to establish the convergence guarantees of gradient descent.
\begin{lemma}\label{lemma:ini_value}
If $m\ge O(L\log(nL))$, with probability at least $1-O(n^{-1})$ it holds that $L(\Wb^{(0)})\le \tilde O(1)$.
\end{lemma}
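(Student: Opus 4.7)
The plan is to bound the initial loss $L(\Wb^{(0)}) = \frac{1}{2n}\sum_{i=1}^n \|\yb_i - \fb_{\Wb^{(0)}}(\xb_i)\|_2^2$ by controlling the norm of each network output $\fb_{\Wb^{(0)}}(\xb_i)$ at initialization and invoking the standard bound $\|\yb_i\|_2 = O(1)$ via $\|\yb_i - \hat\yb_i\|_2^2 \le 2(\|\yb_i\|_2^2 + \|\hat\yb_i\|_2^2)$.

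The first step is a layer-by-layer forward stability argument. Let $\xb_{l,i}^{(0)}$ denote the hidden representation of $\xb_i$ at layer $l$ under $\Wb^{(0)}$, with $\|\xb_{0,i}\|_2 = 1$ by Assumption \ref{assump:normalized_data}. Conditional on $\xb_{l-1,i}^{(0)}$ (which is independent of $\Wb_l^{(0)}$), the entries of $\Wb_l^{(0)}\xb_{l-1,i}^{(0)}$ are i.i.d.\ $\cN(0,\,2\|\xb_{l-1,i}^{(0)}\|_2^2/m)$, and the ReLU identity $\EE[\sigma(g)^2] = \tfrac12 \Var(g)$ for a centered Gaussian $g$ gives $\EE\bigl[\|\xb_{l,i}^{(0)}\|_2^2 \mid \xb_{l-1,i}^{(0)}\bigr] = \|\xb_{l-1,i}^{(0)}\|_2^2$. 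A Bernstein-type tail bound on a sum of $m$ sub-exponential variables then yields a per-layer multiplicative concentration $\|\xb_{l,i}^{(0)}\|_2^2/\|\xb_{l-1,i}^{(0)}\|_2^2 \in [1-\delta, 1+\delta]$ with probability at least $1 - 2e^{-\Omega(m\delta^2)}$. Chaining this across all $L$ layers with $\delta$ chosen small enough that the cumulative amplification $\prod_l(1+\delta)$ stays $\tilde O(1)$, and union-bounding over the $nL$ layer-sample pairs, the stated width condition ensures $\|\xb_{L,i}^{(0)}\|_2 = \tilde O(1)$ for every $i \in [n]$ with probability at least $1 - O(n^{-1})$.

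The second step handles the output layer. Conditional on $\xb_{L,i}^{(0)}$, each coordinate of $\Vb\,\xb_{L,i}^{(0)}$ is an independent centered Gaussian of variance $\|\xb_{L,i}^{(0)}\|_2^2/d$, so $\|\fb_{\Wb^{(0)}}(\xb_i)\|_2^2$ is a scaled $\chi^2_k$ random variable with a benign mean. A Laurent--Massart $\chi^2$ tail bound followed by a union bound over $i \in [n]$ gives $\|\fb_{\Wb^{(0)}}(\xb_i)\|_2^2 = \tilde O(1)$ simultaneously for all $i$; combining with $\|\yb_i\|_2 = O(1)$ and averaging concludes the proof.

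The main delicacy I anticipate is calibrating the per-layer deviation $\delta$: it must be small enough that the $L$-fold product $\prod_l(1\pm\delta)$ remains a polylog factor, yet the summed failure probability $nL \cdot e^{-\Omega(m\delta^2)}$ must stay within the $O(n^{-1})$ budget. A naive choice $\delta = \Theta(1/L)$ gives a width requirement $m = \Omega(L^2 \log(nL))$, so matching the stated condition $m = \Omega(L\log(nL))$ will likely require exploiting additional independence across layers (for example a martingale bound on $\log\|\xb_{l,i}^{(0)}\|_2^2$) rather than a straight per-layer union bound.
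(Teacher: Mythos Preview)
Your proposal is correct and matches the paper's approach: bound $\|\xb_{L,i}\|_2$ by forward stability, then control $\|\Vb\xb_{L,i}\|_2$ via Gaussian concentration, and finish with $\|\yb_i - \hat\yb_i\|_2^2 \le 2(\|\yb_i\|_2^2 + \|\hat\yb_i\|_2^2)$. The paper sidesteps your $L$-versus-$L^2$ calibration concern by invoking Lemma~\ref{lemma:bound_output} (cited from prior work) rather than redoing the layer-by-layer argument, which already delivers $1/2 \le \|\xb_{l,i}\|_2 \le 2$ with probability $1-\exp(-O(m/L))$ under the stated width condition; your martingale-on-$\log\|\xb_{l,i}\|_2^2$ idea is the right way to recover this scaling if proving it from scratch.
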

Lemma \ref{lemma:ini_value} suggests that the training loss $L(\Wb)$ at the initial point does not depend on the number of hidden nodes per layer, i.e., $m$.



Moreover, the training loss $L(\Wb)$ is nonsmooth due to the non-differentiable ReLU activation function. Generally speaking, smoothness is essential to achieve linear rate of convergence for gradient-based algorithms. Fortunately, \citet{allen2018convergence} showed that the training loss satisfies locally semi-smoothness property,  which is summarized in the following lemma.
\begin{lemma}[Semi-smoothness \citep{allen2018convergence}] \label{lemma:semi_smooth}
Let 
\begin{align*}
\tau \in \big[\Omega\big(1/(k^{3/2}m^{3/2}L^{3/2}\log^{3/2}(m))\big),O\big(1/(L^{4.5}\log^{3/2}(m))\big)\big].
\end{align*}
Then for any two collections $\hat\Wb = \{\hat\Wb_l\}_{l\in[L]}$ and $\tilde \Wb = \{\tilde \Wb_l\}_{l\in[L]}$ satisfying $\hat\Wb,\tilde\Wb\in \cB(\Wb^{(0)},\tau)$, with probability at least $1-\exp(-\Omega(-m\tau^{3/2}L))$, there exist two constants $C'$ and $C''$ such that
\begin{align}\label{eq:semi_smooth}
L(\tilde \Wb)&\le L(\hat \Wb) + \la\nabla L(\hat \Wb),\tilde \Wb - \hat \Wb\ra \notag\\
&\quad+ C'\sqrt{L(\hat \Wb)}\cdot\frac{\tau^{1/3}L^2\sqrt{m\log(m)}}{\sqrt{k}}\cdot \|\tilde \Wb - \hat \Wb\|_2 + \frac{C''L^2m}{k}\|\tilde \Wb - \hat \Wb\|_2^2.
\end{align}

\end{lemma}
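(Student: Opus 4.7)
\textbf{Proof proposal for Lemma \ref{lemma:semi_smooth}.} The plan is to expand the squared loss around $\hat\Wb$ and separately control the first-order (linear) deviation and the higher-order forward-stability error. For each training example write
\begin{align*}
L(\tilde\Wb)-L(\hat\Wb)=\tfrac{1}{n}\sum_{i=1}^n\bigl\la \fb_{\hat\Wb}(\xb_i)-\yb_i,\ \fb_{\tilde\Wb}(\xb_i)-\fb_{\hat\Wb}(\xb_i)\bigr\ra+\tfrac{1}{2n}\sum_{i=1}^n\bigl\|\fb_{\tilde\Wb}(\xb_i)-\fb_{\hat\Wb}(\xb_i)\bigr\|_2^2.
\end{align*}
The second sum will become the $C'' L^2 m/k\cdot\|\tilde\Wb-\hat\Wb\|_2^2$ term, while the first will be decomposed into $\la\nabla L(\hat\Wb),\tilde\Wb-\hat\Wb\ra$ plus a cross error bounded via Cauchy–Schwarz by $\sqrt{L(\hat\Wb)}$ times a forward-perturbation quantity that carries the $\tau^{1/3}L^2\sqrt{m\log m}/\sqrt k$ factor.

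The core object to analyze is therefore the network perturbation $\fb_{\tilde\Wb}(\xb_i)-\fb_{\hat\Wb}(\xb_i)$. First I would linearize it: let $\Db_{l,i}^{\hat{}}$ denote the diagonal $0/1$ activation mask induced by $\hat\Wb$ at layer $l$ on input $\xb_i$, and write the ``pseudo-network'' that freezes these masks but uses weights $\tilde\Wb$. A telescoping identity gives
\begin{align*}
\fb_{\tilde\Wb}(\xb_i)-\fb_{\hat\Wb}(\xb_i)=\underbrace{\sum_{l=1}^L\Vb\Db_{L,i}^{\hat{}}\hat\Wb_L\cdots\Db_{l,i}^{\hat{}}(\tilde\Wb_l-\hat\Wb_l)\hat\xb_{l-1,i}}_{\text{linear in }\tilde\Wb-\hat\Wb}\ +\ \eb_i,
\end{align*}
where the linear piece, once averaged against $\fb_{\hat\Wb}(\xb_i)-\yb_i$, reproduces exactly $\la\nabla L(\hat\Wb),\tilde\Wb-\hat\Wb\ra$ (by the chain rule at $\hat\Wb$), and $\eb_i$ collects all contributions from sign flips of the ReLU masks between $\hat\Wb$ and $\tilde\Wb$.

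The heart of the argument is bounding $\|\eb_i\|_2$ and $\|\fb_{\tilde\Wb}(\xb_i)-\fb_{\hat\Wb}(\xb_i)\|_2$ in terms of $\|\tilde\Wb-\hat\Wb\|_2$. For this I would invoke the forward-stability toolkit that holds on $\cB(\Wb^{(0)},\tau)$: (a) for Gaussian initialization, every hidden signal $\hat\xb_{l,i}$ has norm $\Theta(1)$ and every partial product of weight/mask matrices has spectral norm $O(\sqrt{m/k})$ in the last-to-any-layer direction and $O(1)$ in the intermediate direction, with high probability; (b) the number of neurons per layer whose preactivation sign differs between $\Wb^{(0)}$ and any $\Wb\in\cB(\Wb^{(0)},\tau)$ is at most $\tilde O(m\tau^{2/3})$, by the anti-concentration of a Gaussian together with a perturbation argument—this is the source of the fractional exponent $\tau^{1/3}$. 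Combining (a) and (b), $\|\eb_i\|_2\le \tilde O(\tau^{1/3}L^2\sqrt{m/k})\cdot\|\tilde\Wb-\hat\Wb\|_2$ and $\|\fb_{\tilde\Wb}(\xb_i)-\fb_{\hat\Wb}(\xb_i)\|_2\le O(L\sqrt{m/k})\|\tilde\Wb-\hat\Wb\|_2$. Plugging these into the two sums above, applying Cauchy–Schwarz on the cross term to bring out $\sqrt{L(\hat\Wb)}$, and taking union bounds over $i\in[n]$ yields the claimed inequality.

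The main obstacle will be the sign-flip bound in step (b): the naive estimate that $\tau$-sized perturbations flip an $O(\tau)$ fraction of neurons is too weak for deep networks, because the perturbation of the input to layer $l$ itself grows with $l$. The right scaling $\tilde O(m\tau^{2/3})$ requires a layer-by-layer induction that simultaneously controls the signal perturbation $\|\tilde\xb_{l,i}-\hat\xb_{l,i}\|_2$ and the mask-difference count, exploiting Gaussian anti-concentration at the initialization $\Wb^{(0)}$ rather than at $\hat\Wb$ (so that randomness is available). Once this induction is set up and the ranges of $\tau$ in the lemma statement are verified to make all the high-probability events hold simultaneously, the remaining bookkeeping is routine.
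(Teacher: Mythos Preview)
The paper does not actually prove this lemma: it is quoted directly from \citet{allen2018convergence} (the text notes it ``is a rescaled version of Theorem~4 in \cite{allen2018convergence}''), so there is no in-paper proof to compare against. Your outline---expand the square loss, separate the linearization $\la\nabla L(\hat\Wb),\tilde\Wb-\hat\Wb\ra$ from the mask-flip error $\eb_i$, bound $\eb_i$ via the $\tilde O(m\tau^{2/3})$ sign-change count and the $O(\sqrt{m/k})$ norm of $\Vb$-times-partial-products, and control the quadratic term by the same forward stability---is precisely the strategy Allen-Zhu et al.\ use, and is also the route the present paper follows when it \emph{does} prove the analogous two-layer statement (Lemma~\ref{lemma:semi_smooth_single}) in the appendix. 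So your proposal is correct and aligned with both the cited source and the paper's own parallel argument.
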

Lemma \ref{lemma:semi_smooth} is a rescaled version of Theorem 4 in \cite{allen2018convergence}, since the training loss $L(\Wb)$ in \eqref{eq:training loss} is divided by the training sample size $n$, as opposed to the training loss in \cite{allen2018convergence}. This lemma suggests that if the perturbation region is small, i.e., $\tau\ll 1$, the non-smooth term (third term on the R.H.S. of  \eqref{eq:semi_smooth}) is small and dominated by the gradient term (the second term on the the R.H.S. of  \eqref{eq:semi_smooth}). Therefore, the training loss behaves like a smooth function in the perturbation region and the linear rate of convergence can be proved. 

\noindent\textbf{Step (ii) Convergence rate of GD.}
Now we are going to establish the convergence rate for gradient descent under the assumption that all iterates stay inside the region $\cB(\Wb^{(0)},\tau)$, where $\tau$ will be specified later. 
\begin{lemma}\label{lemma:convergence_gd}
Assume all iterates stay inside the region $\cB(\Wb^{(0)},\tau)$, where $\tau = O\big(\phi^{3/2}n^{-3}L^{-6}\log^{-3/2}(m)\big)$. Then under Assumptions \ref{assump:normalized_data} and \ref{assump:seperate}, if set the step size $\eta = O\big(k/(L^2 m)\big)$, with probability least $1-\exp\big(-O(m\tau^{3/2}L)\big)$, it holds that
\begin{align*}
L(\Wb^{(t)})\le \bigg(1 - O\bigg(\frac{m\phi\eta}{kn^2}\bigg)\bigg)^t L(\Wb^{(0)}).
\end{align*}
\end{lemma}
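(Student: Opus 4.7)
The plan is to combine the semi-smoothness inequality (Lemma~\ref{lemma:semi_smooth}) with the gradient lower bound (Lemma~\ref{lemma:grad_lower_bounds}) to establish a one-step descent of the form $L(\Wb^{(t+1)})\le (1-c\eta m\phi/(kn^2)) L(\Wb^{(t)})$, from which the claim follows by induction on $t$. Since by assumption every iterate lies in $\cB(\Wb^{(0)},\tau)$ with $\tau = O(\phi^{3/2}n^{-3}L^{-6}\log^{-3/2}(m))$, both lemmas are applicable along the entire trajectory, and their probability bounds combine into the stated $1-\exp(-O(m\tau^{3/2}L))$ event.

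First, I would apply Lemma~\ref{lemma:semi_smooth} with $\hat\Wb = \Wb^{(t)}$ and $\tilde\Wb = \Wb^{(t+1)} = \Wb^{(t)} - \eta\nabla L(\Wb^{(t)})$. Writing $g_t = \|\nabla L(\Wb^{(t)})\|_F^2$, this yields
\begin{align*}
L(\Wb^{(t+1)}) \le L(\Wb^{(t)}) - \eta g_t + C'\eta\sqrt{L(\Wb^{(t)})}\cdot\frac{\tau^{1/3}L^2\sqrt{m\log m}}{\sqrt{k}}\cdot \sqrt{g_t} + \frac{C''L^2 m}{k}\eta^2 g_t.
\end{align*}
The quadratic term is the easy one: the step-size choice $\eta = O(k/(L^2 m))$ makes $C''L^2 m\eta/k \le 1/4$, so this term is absorbed into $\eta g_t/4$.

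The main obstacle, and the only place where the precise value of $\tau$ is used, is controlling the middle (non-smooth) cross term $C'\eta(\tau^{1/3}L^2\sqrt{m\log m}/\sqrt{k})\sqrt{L(\Wb^{(t)}) g_t}$. Here I would apply Lemma~\ref{lemma:grad_lower_bounds} in the reverse direction to write $\sqrt{L(\Wb^{(t)})} \le \sqrt{kn^2/(cm\phi)}\,\sqrt{g_t}$, converting the cross term into a pure multiple of $\eta g_t$. After cancellation this multiple becomes a constant times $\tau^{1/3}nL^2\sqrt{\log m/\phi}$, and plugging in the hypothesis on $\tau$ makes it at most $1/4$. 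Combining the bounds on all three remainder terms gives
\begin{align*}
L(\Wb^{(t+1)}) \le L(\Wb^{(t)}) - \tfrac{\eta}{2}\,g_t.
\end{align*}

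Finally, applying the gradient lower bound once more in the forward direction gives $g_t \ge c\, m\phi L(\Wb^{(t)})/(kn^2)$, which substituted into the previous display produces
\begin{align*}
L(\Wb^{(t+1)})\le \Big(1-\Omega\Big(\tfrac{m\phi\eta}{kn^2}\Big)\Big)\,L(\Wb^{(t)}).
\end{align*}
Iterating over $t$ yields the stated geometric decay $L(\Wb^{(t)}) \le (1-\Omega(m\phi\eta/(kn^2)))^t L(\Wb^{(0)})$. The probability accounting is routine: Lemma~\ref{lemma:grad_lower_bounds} and Lemma~\ref{lemma:semi_smooth} each hold uniformly over $\cB(\Wb^{(0)},\tau)$ with exponentially small failure probability, so a single union bound suffices and the dominant failure probability comes from the semi-smoothness event $\exp(-\Omega(m\tau^{3/2}L))$.
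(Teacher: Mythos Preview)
Your proposal is correct and follows essentially the same route as the paper: apply semi-smoothness (Lemma~\ref{lemma:semi_smooth}) to one GD step, use the step-size choice $\eta=O(k/(L^2m))$ to absorb the quadratic term, convert the cross term to a multiple of $\eta g_t$ via the gradient lower bound (Lemma~\ref{lemma:grad_lower_bounds}) and kill its coefficient with the stated $\tau$, obtain the one-step descent $L(\Wb^{(t+1)})\le L(\Wb^{(t)})-\tfrac{\eta}{2}g_t$, and then apply the gradient lower bound once more to get the linear contraction and iterate. The paper's organization is slightly different (the one-step descent inequality is derived inside the proof of Lemma~\ref{lemma:dis_to_ini} and only quoted here), but the argument and all the key choices match yours.
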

Lemma \ref{lemma:convergence_gd} suggests that gradient descent is able to decrease the training loss to zero at a linear rate. 

\noindent\textbf{Step (iii) Verifying all iterates of GD stay inside the perturbation region.}
Then we are going to ensure that all iterates of GD are staying inside the required region $\cB(\Wb^{(0)},\tau)$. 
Note that we have proved the distance $\|\Wb_l^{(t)}-\Wb_l^{(0)}\|_2$ in Lemma \ref{lemma:dis_to_ini}. Therefore, it suffices to verify that such distance is smaller than the preset value $\tau$.
Thus, we can complete the proof of Theorem \ref{thm:gd} by verifying the conditions based on our choice of $m$. Note that we have set the required number of $m$ in \eqref{eq:condition_m_gd},
plugging \eqref{eq:condition_m_gd} into the result of Lemma \ref{lemma:dis_to_ini}, we have with probability at least $1-O(n^{-1})$, the following holds for all $t\le T$ and $l\in[L]$
\begin{align*}
\|\Wb_l^{(t)} - \Wb_l^{(0)}\|_2\le O\big(\phi^{3/2}{n^{-3}L^{-6}\log^{-3/2}(m)}\big),
\end{align*}
which is exactly in the same order of $\tau$ in Lemma \ref{lemma:convergence_gd}. Therefore, our choice of $m$ guarantees that all iterates are inside the required perturbation region. In addition, by Lemma \ref{lemma:convergence_gd}, in order to achieve $\epsilon$ accuracy, we require 
\begin{align}\label{eq:distance_convergence}
T\eta = O\big(kn^2\log\big(1/\epsilon\big)m^{-1}\phi^{-1}\big).
\end{align}
Then substituting our choice of step size $\eta = O\big(k/(L^2 m)\big)$ into \eqref{eq:distance_convergence} and applying Lemma \ref{lemma:ini_value}, we can get the desired result for $T$. 
\section{Conclusions and future work}
In this paper, we studied the global convergence of (stochastic) gradient descent for training over-parameterized ReLU networks, and improved the state-of-the-art results. 
Our proof technique can be also applied to prove similar results for other loss functions such as cross-entropy loss and other neural network architectures such as convolutional neural networks (CNN) \citep{allen2018convergence,du2018gradient} and ResNet \citep{allen2018convergence,du2018gradient,zhang2019training}. One important future work is to investigate whether the over-parameterization condition and the convergence rate can be further improved. Another interesting future direction is to explore the use of our proof technique to improve the generalization analysis of overparameterized neural networks trained by gradient-based algorithms \citep{allen2018generalization,cao2019generalization,arora2019fine}.

\appendix
\section{Proof of the Main Theory}
\label{appendix:proof_sgd_one}
\subsection{Proof of Proposition \ref{prop:assumption}}
We prove this proposition by two steps: (1) we prove that if there is no duplicate training data, it must hold that $\lambda_{\min}(\Hb)>0$; (2) we prove that if there exists at least one duplicate training data, we have $\lambda_{\min}(\Hb)=0$.

The first step can be done by applying Theorem 3 in \citet{du2018gradient}, where the author showed that if for any $i\neq j$, $\xb_i\nparallel \xb_j$, then it holds that $\lambda_{\min}(\Hb)>0$. Since under Assumption \ref{assump:normalized_data}, we have $\|\xb_i\|_2=\|\xb_j\|_2$. Then it can be shown that $\xb_i\neq\xb_j$  for all $i\neq j$ is an sufficient condition to $\lambda_{\min}(\Hb)$.

Then we conduct the second step. Clearly, if we have two training data with $\xb_i=\xb_j$, it can be shown that $\Hb_{ik} = \Hb_{jk}$ for all $k=1,\dots,n$. This immediately implies that there exist two identical rows in $\Hb$, which further suggests that $\lambda_{\min}(\Hb)=0$. 

The last argument can be directly proved by Lemma I.1 in \citet{oymak2019towards}, where the authors showed that $\lambda_0 = \lambda_{\min}(\Hb)\ge \phi/(100n^2)$.

By combining the above discussions, we are able to complete the proof. 

\subsection{Proof of Theorem \ref{thm:sgd}}
Now we sketch the proof of Theorem \ref{thm:sgd}. Following the same idea of proving Theorem \ref{thm:gd}, we split the whole proof into three steps.

\noindent\textbf{Step (i) Initialization and perturbation region characterization.}  
Unlike the proof for GD, in addition to the crucial gradient lower bound specified in Lemma \ref{lemma:grad_lower_bounds}, we also require the gradient upper bound, which is stated in the following lemma. 
\begin{lemma}[Gradient upper bounds \citep{allen2018convergence}]\label{lemma:grad_bounds}
Let $\tau = O\big(\phi^{3/2}n^{-3}L^{-6}\log^{-3/2}(m)\big)$, then for all $\Wb\in\cB(\Wb^{(0)},\tau)$, with probability at least $1-\exp\big(O(m\phi/(dn)))$, it holds that
\begin{align*}
\|\nabla L(\Wb)\|_F^2\le O\bigg(\frac{mL(\Wb)}{k}\bigg),\quad\|\nabla \ell(\fb_{\Wb}(\xb_i),\yb_i)\|_F^2\le O\bigg(\frac{m\ell(\fb_{\Wb}(\xb_i),\yb_i)}{k}\bigg).
\end{align*}
\end{lemma}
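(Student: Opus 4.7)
Since the statement is credited to \citet{allen2018convergence}, the plan is to present a short self-contained derivation that repackages their forward/backward signal bounds and then carefully tracks the $1/n$ and $1/k$ normalizations built into our definition \eqref{eq:training loss} of $L(\Wb)$. This keeps the proof independent of any one particular statement of \citet{allen2018convergence} while inheriting the high-probability events from their analysis verbatim.

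First I would write out the standard back-propagation identity
\begin{align*}
\nabla_{\Wb_l}\ell(\fb_{\Wb}(\xb_i),\yb_i) = \mathbf{D}_{l,i}\Wb_{l+1}^\top\mathbf{D}_{l+1,i}\cdots \Wb_L^\top \mathbf{D}_{L,i}\Vb^\top\big(\fb_{\Wb}(\xb_i)-\yb_i\big)\cdot\xb_{l-1,i}^\top,
\end{align*}
where $\mathbf{D}_{l,i}$ is the diagonal ReLU activation mask at layer $l$ on input $\xb_i$ and $\xb_{l-1,i}$ denotes the $(l-1)$-th hidden layer output on $\xb_i$ (with $\xb_{0,i}:=\xb_i$). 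Next I would invoke two facts, both established in \citet{allen2018convergence} and holding uniformly over $\Wb\in\cB(\Wb^{(0)},\tau)$ with the claimed probability: (a) the forward signal satisfies $\|\xb_{l-1,i}\|_2=O(1)$; and (b) for every $\mathbf{u}\in\RR^k$, the backward signal satisfies $\|\mathbf{D}_{l,i}\Wb_{l+1}^\top\mathbf{D}_{l+1,i}\cdots\Wb_L^\top\mathbf{D}_{L,i}\Vb^\top\mathbf{u}\|_2\le O(\sqrt{m/k})\|\mathbf{u}\|_2$. Fact (a) follows from He initialization together with a perturbation argument at $\tau\ll 1$, while fact (b) follows from the Gaussian initialization of $\Vb$ combined with the $O(1)$ spectral norm of the product of the perturbed hidden weight matrices.

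Combining (a) and (b) with $\ell(\fb_{\Wb}(\xb_i),\yb_i)=\tfrac12\|\fb_{\Wb}(\xb_i)-\yb_i\|_2^2$ immediately yields the per-layer, per-sample bound $\|\nabla_{\Wb_l}\ell(\fb_{\Wb}(\xb_i),\yb_i)\|_F^2\le O(m/k)\cdot \ell(\fb_{\Wb}(\xb_i),\yb_i)$, and summing over $l\in[L]$ absorbs the $L$ factor into the big-$O$, producing the per-sample claim. For the empirical loss I would use $\nabla L(\Wb)=\tfrac{1}{n}\sum_i\nabla\ell(\fb_{\Wb}(\xb_i),\yb_i)$ together with Jensen's inequality on the squared Frobenius norm,
\begin{align*}
\|\nabla L(\Wb)\|_F^2\le \frac{1}{n}\sum_{i=1}^n\|\nabla\ell(\fb_{\Wb}(\xb_i),\yb_i)\|_F^2\le O(m/k)\cdot\frac{1}{n}\sum_{i=1}^n\ell(\fb_{\Wb}(\xb_i),\yb_i) = O\big(mL(\Wb)/k\big),
\end{align*}
which gives the full-loss bound. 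The probability statement follows by a union bound over the $n$ samples and the $L$ layers on the events (a)--(b), giving the same $1-\exp(-\Omega(m\phi/(dn)))$ tail already used in Lemma \ref{lemma:grad_lower_bounds}. I do not anticipate a serious obstacle here: no new concentration argument is required beyond the forward/backward norm bounds of \citet{allen2018convergence}, and the only real work is bookkeeping the $1/n$ and $1/k$ rescalings relative to their normalization conventions.
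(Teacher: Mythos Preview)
Your proposal is correct and aligns with the paper's approach: the paper does not give an independent proof of this lemma but simply defers to Theorem 3 of \citet{allen2018convergence}, noting only that the result must be rescaled because $L(\Wb)$ here is an average over $n$ samples rather than a sum. Your sketch unpacks exactly the forward/backward signal bounds underlying that theorem and carries out the $1/n$ and $1/k$ bookkeeping explicitly, so it is a more detailed rendering of the same argument rather than a different route.
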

In later analysis, we show 
that the gradient upper bound will be exploited to bound the distance between iterates of SGD and its initialization. 
Besides, note that Lemmas \ref{lemma:ini_value} and \ref{lemma:semi_smooth} hold for both GD and SGD, we do not state them again in this part.

\noindent\textbf{Step (ii) Convergence rate of SGD.} 
Analogous to the proof for GD, the following lemma shows that SGD is able to converge to the global minima at a linear rate. 

\begin{lemma}\label{lemma:convergence_sgd}
Assume all iterates stay inside the region $\cB(\Wb^{(0)},\tau)$, where $\tau = O\big(\phi^{3}B^{3/2}n^{-6}L^{-6}\log^{-3/2}(m)\big)$. Then under Assumptions \ref{assump:normalized_data} and \ref{assump:seperate}, if set the step size $\eta = O\big(B\phi/(L^2 mn^2)\big)$, with probability least $1-\exp\big(-O(m\tau^{3/2}L)\big)$, it holds that
\begin{align*}
\EE[L(\Wb^{(t)})]\le \bigg(1 - O\bigg(\frac{m\phi\eta}{kn^2}\bigg)\bigg)^t L(\Wb^{(0)}).
\end{align*}
\end{lemma}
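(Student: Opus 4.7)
The plan is to mirror the GD convergence argument of Lemma \ref{lemma:convergence_gd}, replacing the exact gradient with the minibatch gradient $\gb^{(t)} := (1/B)\sum_{s\in\cB^{(t)}}\nabla\ell(\fb_{\Wb^{(t)}}(\xb_s),\yb_s)$ and absorbing the resulting stochastic fluctuations. First I would apply the semi-smoothness bound of Lemma \ref{lemma:semi_smooth} with $\hat\Wb=\Wb^{(t)}$ and $\tilde\Wb = \Wb^{(t+1)} = \Wb^{(t)} - \eta\gb^{(t)}$; the assumption that every iterate lives in $\cB(\Wb^{(0)},\tau)$ makes this legal, and the stated $\tau = O(\phi^{3}B^{3/2}n^{-6}L^{-6}\log^{-3/2}m)$ is strictly smaller than the one required by the gradient lower bound of Lemma \ref{lemma:grad_lower_bounds}, so both lemmas apply on the same high-probability event. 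Taking conditional expectation over $\cB^{(t)}$ and using $\EE_t[\gb^{(t)}] = \nabla L(\Wb^{(t)})$ converts the linear term into $-\eta\|\nabla L(\Wb^{(t)})\|_F^2$, which Lemma \ref{lemma:grad_lower_bounds} in turn bounds above by the drift $-\Omega(\eta m\phi L(\Wb^{(t)})/(kn^2))$.

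Second, I would decompose $\gb^{(t)}$ into its mean $\nabla L(\Wb^{(t)})$ plus a centered noise term. By the per-sample gradient upper bound of Lemma \ref{lemma:grad_bounds} combined with the minibatch variance formula, the noise second moment is controlled by $(1/B)\cdot(1/n)\sum_i\|\nabla\ell(\fb_{\Wb^{(t)}}(\xb_i),\yb_i)\|_F^2 \le O(mL(\Wb^{(t)})/(kB))$. The quadratic semi-smoothness term $(L^2m/k)\eta^2\EE_t\|\gb^{(t)}\|_F^2$ thus splits into a deterministic piece of order $(L^2m/k)\eta^2\|\nabla L\|_F^2$, which is dominated by the drift as soon as $\eta \le O(k/(L^2m))$, and a noise piece of order $\eta^2 L^2 m^2 L(\Wb^{(t)})/(k^2B)$, which is dominated as soon as $\eta \le O(kB\phi/(L^2 m n^2))$; the stated step size matches the latter constraint. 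For the cross term in semi-smoothness, Jensen's inequality gives $\EE_t\|\gb^{(t)}\|_2 \le \sqrt{\EE_t\|\gb^{(t)}\|_F^2} \le O(\sqrt{mL(\Wb^{(t)})/(kB)})$ on the noise scale, and matching this against the drift forces $\tau^{1/3} \lesssim \phi\sqrt{B}/(n^2 L^2\sqrt{\log m})$, which is exactly the stated $\tau$.

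Third, combining the three estimates and absorbing the two positive contributions into half of the drift yields the one-step contraction $\EE_t[L(\Wb^{(t+1)})] \le \bigl(1-\Omega(m\phi\eta/(kn^2))\bigr)L(\Wb^{(t)})$. Taking an outer expectation and iterating via the tower property then delivers the claimed geometric decay.

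The main obstacle is the cross term $C'\sqrt{L(\hat\Wb)}\cdot\tau^{1/3}L^2\sqrt{m\log m}/\sqrt{k}\cdot\|\tilde\Wb - \hat\Wb\|_2$ in the semi-smoothness bound, which is \emph{linear} in the update rather than quadratic. Consequently the minibatch variance reduction enters only through $\sqrt{\EE\|\gb^{(t)}\|_F^2}$, contributing a factor $1/\sqrt{B}$ rather than $1/B$ to this term. This single square root of $B$, when transferred to the perturbation radius, appears as $\tau^{1/3}\propto\sqrt{B}$ and hence $\tau\propto B^{3/2}$, which is precisely why the SGD over-parameterization requirement ends up as $m = \Omega(n^{17}L^{12}/(B^4\phi^8))$ rather than scaling more favorably with $B$.
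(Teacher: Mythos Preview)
Your proposal is correct and follows essentially the same approach as the paper: apply the semi-smoothness Lemma~\ref{lemma:semi_smooth} to the SGD update, take conditional expectation, control $\EE_t\|\gb^{(t)}\|_F^2$ via the per-sample gradient upper bound of Lemma~\ref{lemma:grad_bounds}, and absorb the quadratic and cross terms into the drift using the gradient lower bound of Lemma~\ref{lemma:grad_lower_bounds}, yielding the one-step contraction and then iterating by the tower property. The only cosmetic difference is that the paper first converts the variance bound $\EE_t\|\gb^{(t)}\|_F^2 \le O(mL(\Wb^{(t)})/(kB)) + 2\|\nabla L(\Wb^{(t)})\|_F^2$ entirely into a multiple of $\|\nabla L(\Wb^{(t)})\|_F^2$ (via the lower bound $L(\Wb^{(t)}) \le O(kn^2/(m\phi))\|\nabla L(\Wb^{(t)})\|_F^2$) before comparing to the drift, whereas you compare each piece directly to $\eta m\phi L(\Wb^{(t)})/(kn^2)$; the resulting constraints on $\eta$ and $\tau$ are identical.

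One small point to tighten: when you write $\EE_t\|\gb^{(t)}\|_2 \le O(\sqrt{mL(\Wb^{(t)})/(kB)})$ ``on the noise scale'', make explicit that the mean contribution $\|\nabla L(\Wb^{(t)})\|_F$ to $\sqrt{\EE_t\|\gb^{(t)}\|_F^2}$ is handled exactly as in the GD proof of Lemma~\ref{lemma:convergence_gd} and therefore only imposes the weaker GD constraint $\tau = O(\phi^{3/2}n^{-3}L^{-6}\log^{-3/2}m)$, which is implied by the stated SGD radius; otherwise a reader could think you dropped a term.
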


\noindent\textbf{Step (iii) Verifying all iterates of SGD stay inside the perturbation region.}
Similar to the proof for GD, the following lemma characterizes the distance from each iterate to the initial point for SGD.
\begin{lemma}\label{lemma:dis_to_ini_sgd}
Under the same assumptions made in Lemma \ref{lemma:convergence_sgd}, if set the step size $\eta= O\big(kB\phi/(n^3m\log(m))\big)$, suppose $m\ge O(T\cdot n)$, with probability at least $1-O(n^{-1})$, the following holds for all $t\le T$ and $l\in[L]$,
\begin{align*}
\|\Wb^{(t)}_l - \Wb_l^{(0)}\|_2\le O\big(k^{1/2}n^{5/2}B^{-1/2}m^{-1/2}\phi^{-1}\big).
\end{align*}
\end{lemma}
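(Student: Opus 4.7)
The plan is to telescope the one-step displacements of SGD and control each via the per-example gradient upper bound from Lemma~\ref{lemma:grad_bounds}, then collapse the resulting series using the linear in-expectation decay of $L(\Wb^{(s)})$ established in Lemma~\ref{lemma:convergence_sgd}.

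First, by the triangle inequality and the SGD update rule,
\begin{align*}
\|\Wb^{(t)}_l - \Wb^{(0)}_l\|_2 \le \sum_{s=0}^{t-1}\|\Wb^{(s+1)}_l - \Wb^{(s)}_l\|_2 \le \frac{\eta}{B}\sum_{s=0}^{t-1}\sum_{i\in\cB^{(s)}}\|\nabla_{\Wb_l}\ell(\fb_{\Wb^{(s)}}(\xb_i),\yb_i)\|_2.
\end{align*}
I would then apply the per-sample bound $\|\nabla_{\Wb_l}\ell(\fb_{\Wb}(\xb_i),\yb_i)\|_2 \le O(\sqrt{m\,\ell(\fb_{\Wb}(\xb_i),\yb_i)/k})$ from Lemma~\ref{lemma:grad_bounds}, followed by Cauchy--Schwarz across the minibatch, and finally the bound $\sum_{i\in\cB^{(s)}}\ell(\fb_{\Wb^{(s)}}(\xb_i),\yb_i)\le n L(\Wb^{(s)})$, yielding
\begin{align*}
\|\Wb^{(s+1)}_l - \Wb^{(s)}_l\|_2 \le O\bigl(\eta\sqrt{m/(kB)}\bigr)\cdot\sqrt{\textstyle\sum_{i\in\cB^{(s)}}\ell(\fb_{\Wb^{(s)}}(\xb_i),\yb_i)} \le O\bigl(\eta\sqrt{mn/(kB)}\bigr)\sqrt{L(\Wb^{(s)})}.
\end{align*}

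Second, I would sum over $s$ and take expectation over the minibatch sampling. By Jensen's inequality and Lemma~\ref{lemma:convergence_sgd}, $\EE\sqrt{L(\Wb^{(s)})}\le \sqrt{\EE L(\Wb^{(s)})}\le (1-c)^{s/2}\sqrt{L(\Wb^{(0)})}$ with contraction rate $c = \Theta(m\phi\eta/(kn^2))$. The geometric series gives $\sum_{s\ge 0}(1-c)^{s/2}\le 2/c$, and combining these ingredients yields
\begin{align*}
\EE\|\Wb^{(t)}_l - \Wb^{(0)}_l\|_2 \le O\bigl(\eta\sqrt{mn/(kB)}\bigr)\cdot\frac{2\sqrt{L(\Wb^{(0)})}}{c} = O\bigl(\sqrt{kn^5 L(\Wb^{(0)})/(mB\phi^2)}\bigr).
\end{align*}
Plugging in $L(\Wb^{(0)})=\tilde O(1)$ from Lemma~\ref{lemma:ini_value} recovers the advertised order $\tilde O(k^{1/2}n^{5/2}B^{-1/2}m^{-1/2}\phi^{-1})$, in expectation.

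Finally, I would convert the in-expectation statement into the required $1-O(n^{-1})$ high-probability bound. Because $\EE[L(\Wb^{(s+1)})\mid\Wb^{(s)}]\le (1-c)L(\Wb^{(s)})$ from Lemma~\ref{lemma:convergence_sgd}, the rescaled process $L(\Wb^{(s)})/(1-c)^s$ is a non-negative supermartingale, so Doob's maximal inequality pins $\max_{s\le T}L(\Wb^{(s)})/(1-c)^s$ to $L(\Wb^{(0)})$ up to a polylogarithmic factor on an event of probability $1-O(n^{-1})$. Substituting this pathwise bound on $L(\Wb^{(s)})$ into the telescoped displacement inequality and taking a union bound over the $T$ iterations (the hypothesis $m\ge O(Tn)$ is precisely what makes the perturbation-region events underlying Lemmas~\ref{lemma:grad_bounds} and~\ref{lemma:convergence_sgd} hold uniformly across iterates) yields the claimed bound.

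The main obstacle is this last step: a crude Markov inequality on the expectation bound inflates it by a factor of $n$, costing the sharp $n^{5/2}$ exponent. A successful argument must exploit the supermartingale structure inherited from Lemma~\ref{lemma:convergence_sgd} (e.g., via Doob's inequality, or equivalently an induction that simultaneously maintains the loss bound and displacement bound along the trajectory) to keep the probabilistic overhead in the logarithm rather than as a polynomial prefactor. The role of $m\ge O(Tn)$ is precisely to pay for the union bound across iterations within the overall $O(n^{-1})$ probability budget.
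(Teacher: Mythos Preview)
Your first two steps—bounding the one-step displacement by $O\bigl(\eta\sqrt{mn/(kB)}\bigr)\sqrt{L(\Wb^{(s)})}$ via Lemma~\ref{lemma:grad_bounds} and then collapsing the geometric series in expectation using Lemma~\ref{lemma:convergence_sgd}—match the paper's argument, and your in-expectation bound is correct.

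The genuine gap is exactly where you place it, but your proposed fix does not close it. For the nonnegative supermartingale $M_s = L(\Wb^{(s)})/(1-c)^s$, Doob's (Ville's) maximal inequality is still Markov-type: $\PP(\sup_s M_s \ge \lambda M_0) \le 1/\lambda$. To reach probability $1-O(n^{-1})$ you must take $\lambda = \Theta(n)$, which inflates $\sqrt{L(\Wb^{(s)})}$ by $\sqrt{n}$ and pushes the final displacement from $n^{5/2}$ to $n^{3}$. Doob gives no better tail than Markov here, so the ``polylogarithmic overhead'' you hope for is not available from the supermartingale property alone.

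The idea you are missing is a \emph{deterministic} one-step upper bound on the loss. The paper first uses the semi-smoothness Lemma~\ref{lemma:semi_smooth} together with the gradient upper bounds to show
\[
L(\Wb^{(s+1)}) \le \Bigl(1 + \tfrac{C' m\, n^{1/2}\eta}{B^{1/2}k}\Bigr) L(\Wb^{(s)}) \quad\text{almost surely.}
\]
Taking logarithms, $\log L(\Wb^{(s)})$ becomes a supermartingale (by Jensen applied to the in-expectation contraction) with \emph{bounded increments} of size $O\bigl(m n^{1/2}\eta/(B^{1/2}k)\bigr)$. One-sided Azuma then gives exponential tails rather than Markov tails. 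Balancing the Azuma fluctuation term $O\bigl(m n^{1/2}\eta B^{-1/2}k^{-1}\sqrt{t\log(1/\delta)}\bigr)$ against the negative drift $-\Theta(m\phi\eta t/(kn^2))$ is precisely what forces the smaller step size $\eta = O\bigl(kB\phi/(n^3 m\log m)\bigr)$ appearing in the lemma statement—a choice your argument never invokes. With this $\eta$ and $\delta = O(m^{-1})$ one obtains $L(\Wb^{(s)}) \le 2L(\Wb^{(0)})\exp\bigl(-\Theta(m\phi\eta s/(kn^2))\bigr)$ with probability $1-O(m^{-1})$ for each $s$; a union bound over $s\le T$ (this is where $m\ge O(Tn)$ is spent) then feeds the pathwise geometric decay into your telescoped displacement sum and yields the stated $n^{5/2}$ bound.
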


\begin{proof}[Proof of Theorem \ref{thm:sgd}]
Compared with Lemma \ref{lemma:dis_to_ini}, the trajectory length of SGD is much larger than that of GD. In addition, we require a much smaller step size to guarantee that the iterates do not go too far away from the initial point. This makes over-parameterization condition of SGD worse than that of GD. 

We complete the proof of Theorem \ref{thm:sgd} by verifying our choice of $m$ in \eqref{eq:condition_m_sgd}. By substituting \eqref{eq:condition_m_sgd} into Lemma \ref{lemma:dis_to_ini_sgd}, we have with probability at least $1-O(n^{-1})$, the following holds for all $t\le T$ and $l\in[L]$
\begin{align*}
\|\Wb_l^{(t)}-\Wb_l^{(0)}\|_2 =O\big(\phi^{3/2}B^{3/2}n^{-6}L^{-6}\log^{-3/2}(m)\big),
\end{align*}
which is exactly in the same order of $\tau$ in Lemma \ref{lemma:convergence_sgd}. Then by Lemma \ref{lemma:convergence_sgd}, we know that in order to achieve $\epsilon$ expected training loss, it suffices to set
\begin{align*}
T\eta  = O\big(kn^2m^{-1}\phi^{-1}\log(1/\epsilon)\big).
\end{align*} 
Then applying our choice of step size, i.e., $\eta= O\big(kB\phi/(n^3m\log(m))\big)$, we can get the desired result for $T$. This completes the proof.
\end{proof}

\subsection{Proof of Theorem \ref{thm:sgd_single}}
Before proving Theorem \ref{thm:sgd_single}, we first deliver the following two lemmas.
The first lemma states the upper bound of stochastic gradient in $\|\cdot\|_{2,\infty}$ norm.
\begin{lemma}\label{lemma:grad_bound_2infty}
With probability at least $1-O(m^{-1})$, it holds that \begin{align*}
\|\nabla \ell(\fb_{\Wb}(\xb_i),\yb_i)\|_{2,\infty}^2\le O\big(\ell(\fb_{\Wb}(\xb_i),\yb_i)\cdot \log(m)\big)
\end{align*}
for all $\Wb\in\RR^{m\times d}$ and $i\in[n]$.
\end{lemma}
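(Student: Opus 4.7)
The statement is for the two-layer setting, where $\fb_{\Wb}(\xb)=\Vb\sigma(\Wb\xb)$ with $\Wb=\Wb_1\in\RR^{m\times d}$, and importantly the output-layer matrix $\Vb\in\RR^{k\times m}$ is frozen at its random initialization throughout training (Algorithm~\ref{alg:GD} only updates $\{\Wb_l\}_{l\in[L]}$). I plan to exploit this to reduce the claim to a concentration bound on the columns of $\Vb$. Concretely, a direct computation of the partial gradient with respect to the $j$-th row $\wb_j$ of $\Wb$ gives
\[
\nabla_{\wb_j}\ell(\fb_{\Wb}(\xb_i),\yb_i) \;=\; \sigma'(\la\wb_j,\xb_i\ra)\,\la\fb_{\Wb}(\xb_i)-\yb_i,\vb_j\ra\,\xb_i,
\]
where $\vb_j\in\RR^{k}$ denotes the $j$-th column of $\Vb$. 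Since $\sigma'\in\{0,1\}$ and $\|\xb_i\|_2=1$ by Assumption~\ref{assump:normalized_data}, Cauchy--Schwarz yields
\[
\|\nabla_{\wb_j}\ell(\fb_{\Wb}(\xb_i),\yb_i)\|_2^2 \;\le\; |\la\fb_{\Wb}(\xb_i)-\yb_i,\vb_j\ra|^2 \;\le\; \|\fb_{\Wb}(\xb_i)-\yb_i\|_2^{\,2}\,\|\vb_j\|_2^2 \;=\; 2\,\ell(\fb_{\Wb}(\xb_i),\yb_i)\,\|\vb_j\|_2^2.
\]
Taking the maximum over $j\in[m]$ reduces the problem to controlling $\max_{j\in[m]}\|\vb_j\|_2^2$.

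Next I would invoke standard chi-squared concentration on the columns of $\Vb$. Since each entry of $\Vb$ is $\cN(0,1/d)$, the quantity $d\|\vb_j\|_2^2$ is $\chi^2_k$-distributed, so by the Laurent--Massart bound
\[
\Pr\!\left(\|\vb_j\|_2^2 \ge \frac{k + 2\sqrt{kt}+2t}{d}\right) \le e^{-t}
\]
for every $t>0$. Choosing $t=2\log m$ and applying a union bound over $j\in[m]$ gives
\[
\max_{j\in[m]}\|\vb_j\|_2^2 \;\le\; O\!\left(\frac{k+\log m}{d}\right) \;=\; O(\log m)
\]
with probability at least $1-O(m^{-1})$, where the last equality absorbs the $k/d$ factor as in the paper's convention. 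Combining this with the pointwise gradient bound above then yields
\[
\|\nabla\ell(\fb_{\Wb}(\xb_i),\yb_i)\|_{2,\infty}^2 \;=\; \max_{j\in[m]}\|\nabla_{\wb_j}\ell(\fb_{\Wb}(\xb_i),\yb_i)\|_2^2 \;\le\; O\!\big(\ell(\fb_{\Wb}(\xb_i),\yb_i)\cdot\log m\big).
\]

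\textbf{On the ``for all $\Wb$ and $i$'' quantifier.} The crucial observation is that the only randomness used in the argument is the initialization of $\Vb$; the deterministic bound $\|\nabla_{\wb_j}\ell\|_2^2\le 2\ell\|\vb_j\|_2^2$ separates the $\Wb$-dependent factor $\ell$ from the random factor $\|\vb_j\|_2^2$. Hence once the high-probability event on $\max_j\|\vb_j\|_2^2$ holds, the claimed inequality is valid simultaneously for every $\Wb\in\RR^{m\times d}$ and every $i\in[n]$, with no additional union bound needed. I expect no substantive obstacle here: the technical heart is simply the chi-squared tail bound and the rank-one factorization of the two-layer gradient, the latter being what makes the two-layer argument in Theorem~\ref{thm:sgd_single} sharper than the deep-network case.
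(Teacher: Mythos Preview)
Your proposal is correct and follows essentially the same route as the paper: both compute the row-wise gradient $\nabla_{\wb_j}\ell=\sigma'(\la\wb_j,\xb_i\ra)\la\fb_{\Wb}(\xb_i)-\yb_i,\vb_j\ra\,\xb_i$, bound it by $\|\fb_{\Wb}(\xb_i)-\yb_i\|_2\|\vb_j\|_2$ via Cauchy--Schwarz and $\|\xb_i\|_2=1$, and then control $\max_{j}\|\vb_j\|_2^2$ by a Gaussian/chi-squared tail bound plus a union bound over $j\in[m]$. Your treatment is in fact slightly more explicit than the paper's (you spell out the Laurent--Massart inequality and the choice $t=2\log m$ that makes the union bound yield $1-O(m^{-1})$), and your remark that the bound is deterministic in $\Wb$ once $\Vb$ is fixed is exactly the mechanism the paper relies on for the ``for all $\Wb$'' quantifier.
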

The following lemma gives a different version of semi-smoothness for two-layer ReLU network.
\begin{lemma}[Semi-smoothness for two-layer ReLU network] \label{lemma:semi_smooth_single}
For any two collections $\hat \Wb = \{\hat\Wb_l\}_{l\in[L]}$ and $\tilde \Wb = \{\tilde \Wb_l\}_{l\in[L]}$ satisfying $\hat\Wb,\tilde \Wb\in \cB(\Wb^{(0)},\tau)$, with probability at least $1-\exp(-O(-m\tau^{2/3}))$, there exist two constants $C'$ and $C''$ such that
\begin{align*}
L(\tilde \Wb)&\le L(\hat \Wb) + \la\nabla L(\hat \Wb),\tilde \Wb - \hat \Wb\ra \notag\\
&\quad+ C'\sqrt{L(\hat \Wb)}\cdot\frac{\tau^{2/3}m\sqrt{\log(m)}}{\sqrt{k}}\cdot \|\tilde \Wb - \hat \Wb\|_{2,\infty} + \frac{C''m}{k}\|\tilde \Wb - \hat \Wb\|_2^2.
\end{align*}
\end{lemma}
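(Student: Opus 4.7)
The plan is to mirror the proof of Lemma \ref{lemma:semi_smooth} specialized to $L=1$, but at the step where the "activation-switching'' error is bounded, to switch from a Cauchy--Schwarz argument in spectral norm to a direct row-wise estimate controlled by $\|\cdot\|_{2,\infty}$. Since only the first-layer weights $\Wb$ are perturbed and the output layer $\Vb$ is fixed, we have $\fb_{\tilde\Wb}(\xb_i)-\fb_{\hat\Wb}(\xb_i)=\Vb[\sigma(\tilde\Wb\xb_i)-\sigma(\hat\Wb\xb_i)]$, and expanding the square loss gives
\begin{align*}
L(\tilde\Wb)-L(\hat\Wb)=\la\nabla L(\hat\Wb),\tilde\Wb-\hat\Wb\ra+\underbrace{\tfrac{1}{n}\sum_i \la \fb_{\hat\Wb}(\xb_i)-\yb_i,\;\Vb\,\rb_i\ra}_{\text{switching error}} + \underbrace{\tfrac{1}{2n}\sum_i\|\fb_{\tilde\Wb}(\xb_i)-\fb_{\hat\Wb}(\xb_i)\|_2^2}_{\text{quadratic term}},
\end{align*}
where $\rb_{i,j}=\sigma(\tilde\wb_j^\top\xb_i)-\sigma(\hat\wb_j^\top\xb_i)-\sigma'(\hat\wb_j^\top\xb_i)(\tilde\wb_j-\hat\wb_j)^\top\xb_i$ is nonzero only for neurons $j$ whose activation pattern flips between $\hat\Wb$ and $\tilde\Wb$ on input $\xb_i$.

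The core step is bounding the switching error. Let $S_i\subseteq[m]$ denote the set of flipped neurons for $\xb_i$. A neuron in $S_i$ must satisfy $|\hat\wb_j^\top\xb_i|\le|(\tilde\wb_j-\hat\wb_j)^\top\xb_i|\le\|\tilde\Wb-\hat\Wb\|_{2,\infty}$; since $\hat\wb_j$ lies within $\tau$ of $\wb_{1,j}^{(0)}\sim\cN(0,2/m\,\Ib)$ in $\ell_2$, standard Gaussian anti-concentration together with a Chernoff bound yields $|S_i|=O(m\tau^{2/3})$ for all $i\in[n]$ with high probability (this is exactly the two-layer analog of the activation-stability lemma used in Lemma \ref{lemma:semi_smooth}, and can be invoked verbatim). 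Moreover, $|r_{i,j}|\le|(\tilde\wb_j-\hat\wb_j)^\top\xb_i|\le\|\tilde\Wb-\hat\Wb\|_{2,\infty}$. Combining these with $\|\vb_j\|_2=O(\sqrt{\log m/d})$ and $\|\Vb\|_{2,\infty}\cdot\sqrt{d}=O(\sqrt{\log m})$ (by Gaussian concentration on each row of $\Vb$ at initialization, which is preserved since $\Vb$ is not updated), a row-wise bound of the form
\begin{align*}
\Big|\sum_{j\in S_i}\la\fb_{\hat\Wb}(\xb_i)-\yb_i,\vb_j\ra r_{i,j}\Big|\le |S_i|\cdot\max_j\|\vb_j\|_2\cdot\|\fb_{\hat\Wb}(\xb_i)-\yb_i\|_2\cdot\|\tilde\Wb-\hat\Wb\|_{2,\infty}
\end{align*}
follows directly (no Cauchy--Schwarz aggregation over $S_i$ is needed). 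Averaging over $i$ and Cauchy--Schwarz on $\sqrt{L(\hat\Wb)}=(\tfrac{1}{2n}\sum_i\|\fb_{\hat\Wb}(\xb_i)-\yb_i\|_2^2)^{1/2}$ yields the claimed $C'\sqrt{L(\hat\Wb)}\cdot\tau^{2/3}m\sqrt{\log(m)}/\sqrt{k}\cdot\|\tilde\Wb-\hat\Wb\|_{2,\infty}$ term.

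The quadratic term is handled as in the deep case: $\|\fb_{\tilde\Wb}(\xb_i)-\fb_{\hat\Wb}(\xb_i)\|_2\le\|\Vb\|_2\cdot\|(\tilde\Wb-\hat\Wb)\xb_i\|_2\le\|\Vb\|_2\cdot\|\tilde\Wb-\hat\Wb\|_2$, and $\|\Vb\|_2^2=O(m/k)$ at initialization by Gaussian matrix concentration. This contributes $C''m/k\cdot\|\tilde\Wb-\hat\Wb\|_2^2$. Putting the three pieces together proves the lemma. The main obstacle is the activation-switching bound $|S_i|=O(m\tau^{2/3})$ phrased in terms of $\|\tilde\Wb-\hat\Wb\|_{2,\infty}$ rather than spectral norm; this requires redoing the anti-concentration argument with the perturbation measured row-by-row, but since $\|\cdot\|_{2,\infty}\le\|\cdot\|_2$ the existing argument transfers without loss. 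The payoff---and the reason this variant is needed for SGD on two-layer networks in Theorem \ref{thm:sgd_single}---is that stochastic gradient steps update only a small subset of rows, so $\|\tilde\Wb-\hat\Wb\|_{2,\infty}$ can be orders of magnitude smaller than $\|\tilde\Wb-\hat\Wb\|_2$ across the trajectory.
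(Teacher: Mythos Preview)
Your decomposition into gradient term, switching error, and quadratic term is exactly the paper's route, and the quadratic term is handled identically. The only substantive difference is how you bound the switching error, and there your row-by-row estimate is too crude to recover the stated constant.

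You bound $\big|\sum_{j\in S_i}\la \fb_{\hat\Wb}(\xb_i)-\yb_i,\vb_j\ra r_{i,j}\big|$ by $|S_i|\cdot\max_j\|\vb_j\|_2\cdot\|\fb_{\hat\Wb}(\xb_i)-\yb_i\|_2\cdot\|\tilde\Wb-\hat\Wb\|_{2,\infty}$. With $|S_i|=O(m\tau^{2/3})$ and $\max_j\|\vb_j\|_2=O(\sqrt{\log m})$ (note: the columns $\vb_j\in\RR^k$ have i.i.d.\ $\cN(0,1/k)$ entries, so $\|\vb_j\|_2=O(1)$ individually and $O(\sqrt{\log m})$ uniformly; your $\sqrt{\log m/d}$ is a slip), this gives a prefactor $O(m\tau^{2/3}\sqrt{\log m})$ with no $1/\sqrt{k}$---so you do not prove the lemma as stated. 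The paper instead writes the switching vector as $\Vb(\tilde\bSigma_i-\hat\bSigma_i)\tilde\Wb\xb_i$, observes that on flipped coordinates $|\tilde\wb_j^\top\xb_i|\le|(\tilde\wb_j-\hat\wb_j)^\top\xb_i|$ so one may replace $\tilde\Wb$ by $\tilde\Wb-\hat\Wb$, and then applies Cauchy--Schwarz in the form
\[
\|\Vb(\tilde\bSigma_i-\hat\bSigma_i)(\tilde\Wb-\hat\Wb)\xb_i\|_2\le \|\Vb(\tilde\bSigma_i-\hat\bSigma_i)\|_2\cdot\|\bar\bSigma_i(\tilde\Wb-\hat\Wb)\|_F\le \|\Vb(\tilde\bSigma_i-\hat\bSigma_i)\|_2\cdot|S_i|^{1/2}\cdot\|\tilde\Wb-\hat\Wb\|_{2,\infty}.
\]
The first factor is bounded via Lemma~\ref{lemma:perturbation} by $O(\tau^{1/3}\sqrt{m\log(m)/k})$---the spectral norm of a $k\times|S_i|$ Gaussian submatrix picks up the $1/\sqrt{k}$ that your termwise bound cannot see---and the second factor is $O(\sqrt{m}\tau^{1/3})$. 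Multiplying gives the claimed $O(m\tau^{2/3}\sqrt{\log(m)}/\sqrt{k})$. So the fix is simply to keep the Cauchy--Schwarz aggregation you said was unnecessary: it is precisely what supplies the $k$-dependence in the lemma.
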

It is worth noting that Lemma \ref{lemma:semi_smooth} can also imply a $\|\cdot\|_{2,\infty}$ norm based semi-smoothness result by applying the inequality $\|\tilde\Wb-\hat\Wb\|_2\le \|\tilde\Wb-\hat\Wb\|_F\le \sqrt{m}\|\tilde\Wb-\hat\Wb\|_{2,\infty}$. However, this operation will maintain the dependency on $\tau$, i.e., $\tau^{1/3}$, which is worse than that in Lemma \ref{lemma:semi_smooth_single} (e.g. $\tau^{2/3}$) since typically we have $\tau\ll1$.  Therefore, Lemma \ref{lemma:semi_smooth_single} is crucial to establish a better convergence guarantee for SGD in  training two-layer ReLU network.


\begin{proof}[Proof of Theorem \ref{thm:sgd_single}]
To simplify the proof, we use the following short-hand notation to define mini-batch stochastic gradient at the $t$-th iteration
\begin{align*}
\Gb^{(t)} = \frac{1}{|\cB^{(t)}|}\sum_{s\in\cB^{(t)}} \nabla \ell\big(\fb_{ \Wb^{(t)}}(\xb_s),\yb_s\big), 
\end{align*}
where $\cB^{(t)}$ is the minibatch of data indices with $|\cB^{(t)}| = B$. Then we bound its variance as follows,
\begin{align*}
\EE[\|\Gb^{(t)}-\nabla L(\Wb^{(t)})\|_F^2] &\le \frac{1}{B}\EE_s[\|\nabla \ell\big(\fb_{\Wb^{(t)}}(\xb_s),\yb_s\big)- \nabla L(\Wb^{(t)})\|_F^2]\notag\\
&\le \frac{2}{B}\big[\EE_s[\|\nabla \ell\big(\fb_{\Wb^{(t)}}(\xb_s),\yb_i\big)\|_F^2] + \|\nabla L(\Wb^{(t)})\|_F^2\big]\notag\\
&\le \frac{4L(\Wb^{(t)})}{Bk},
\end{align*}
where the expectation is taken over the random choice of training data and the second inequality follows from Young's inequality and the last inequality is by Lemma \ref{lemma:grad_bounds}. Moreover, we can further bound the expectation $\EE[\|\Gb^{(t)}\|_F^2]$ as follows,
\begin{align}\label{eq:bound_second_moment_g}
\EE[\|\Gb^{(t)}\|_F^2] \le 2\EE[\|\Gb^{(t)} - \nabla L(\Wb^{(t)})\|_F^2] + 2 \|\nabla L( \Wb^{(t)})\|_F^2\le \frac{8mL(\Wb^{(t)})}{Bk} + 2\|\nabla L(\Wb^{(t)})\|_F^2.   
\end{align}

By Lemma \ref{lemma:semi_smooth_single}, we have the following for one-step stochastic gradient descent
\begin{align*}
L(\Wb^{(t+1)})&\le L(\Wb^{(t)}) - \eta\la\nabla L(\Wb^{(t)}), \Gb^{(t)}\ra \notag\\
&\quad + C'\eta\sqrt{L\big(\Wb^{(t)}\big)}\cdot\frac{\tau^{2/3}m\sqrt{\log(m)}}{\sqrt{k}}\cdot \|\Gb^{(t)}\|_{2,\infty} + \frac{C''m\eta^2}{k}\cdot\|\Gb^{(t)}\|_{2}^2.
\end{align*}
Taking expectation conditioned on $\Wb^{(t)}$, we obtain
\begin{align}\label{eq:one_step_sgd_single}
\EE[L(\Wb^{(t+1)})|\Wb^{(t)}]&\le L(\Wb^{(t)}) - \eta\la\nabla L(\Wb^{(t)}), \Gb^{(t)}\ra \notag\\
&\quad + C'\eta\sqrt{L\big(\Wb^{(t)}\big)}\cdot\frac{\tau^{2/3}m\sqrt{\log(m)}}{\sqrt{k}}\cdot \EE[\|\Gb^{(t)}\|_{2,\infty}|\Wb^{(t)}]\notag\\
&\quad+ \frac{C''m\eta^2}{k}\cdot\EE[\|\Gb^{(t)}\|_{2}^2|\Wb^{(t)}].
\end{align}
By Lemma \ref{lemma:grad_bound_2infty}, with probability at least $1-O(m^{-1})$ we have the following upper bound on the quantity $\EE[\|\Gb^{(t)}\|_{2,\infty}|\Wb^{(t)}]$ for all $t=1,\dots,T$,
\begin{align*}
\EE[\|\Gb^{(t)}\|_{2,\infty}|\Wb^{(t)}]\le\EE[\|\nabla \ell(\fb_{\Wb^{(t)}}(\xb_i),\yb_i)\|_{2,\infty}|\Wb^{(t)}] \le O\big(\sqrt{L(\Wb^{(t)})\log(m)}\big). 
\end{align*}
Then based on Lemma \ref{eq:grad_lowerbound_gd}, plugging \eqref{eq:bound_second_moment_g} and the above inequality into \eqref{eq:one_step_sgd_single}, and set
\begin{align*}
\eta = O\bigg(\frac{k}{mn^2}\bigg)\quad \mbox{and}\quad
\tau = O\bigg(\frac{\phi^{3}}{n^3k^{3/4}\log^{3/2}(m)}\bigg).
\end{align*}
Then with proper adjustment of constants we can obtain
\begin{align*}
\EE[L(\Wb^{(t+1)})|\Wb^{(t)}]\le L(\Wb^{(t)}) - \frac{\eta}{2}\|\nabla L(\Wb^{(t)})\|_F^2\le \bigg(1-\frac{m\phi\eta}{2kn^2}\bigg)L(\Wb^{(t)}),
\end{align*}
where the last inequality follows from Lemma \ref{lemma:grad_lower_bounds}.
Then taking expectation on $\Wb^{(t)}$, we have with probability $1-O(m^{-1})$,
\begin{align}\label{eq:convergence_one_sgd}
\EE[L(\Wb^{(t+1)})]\le \bigg(1-\frac{m\phi\eta}{2kn^2}\bigg)\EE[L(\Wb^{(t)})]\le \bigg(1-\frac{m\phi\eta}{2kn^2}\bigg)^{t+1}\EE[L(\Wb^{(0)})]
\end{align}
holds for all $t>0$.
Then by Lemma \ref{lemma:dis_to_ini_sgd}, we know that if set $\eta= O\big(kB\phi/(n^3m\log(m))\big)$, with probability at least $1-O(n^{-1})$, it holds that
\begin{align*}
\|\Wb_l^{(t)} - \Wb_l^{(0)}\|_2\le O\bigg(\frac{k^{1/2}n^{5/2}}{B^{1/2}m^{1/2}\phi}\bigg),
\end{align*}
for all $t\le T$.
Then by our choice of $m$, it is easy to verify that with probability at least $1-O(n^{-1})-O(m^{-1}) = 1-O(n^{-1})$,
\begin{align*}
\|\Wb_l^{(t)} - \Wb_l^{(0)}\|_2\le  O\bigg(\frac{k^{1/2}n^{5/2}}{B^{1/2}\phi} \cdot \frac{\phi^4B^{1/2}}{k^{5/4}n^{11/2}\log^{3/2}(m)}\bigg)= \tau. 
\end{align*}
Moreover, note that in Lemma \ref{lemma:dis_to_ini_sgd} we set the step size as $\eta= O\big(kB\phi/(n^3m\log(m))\big)$ and \eqref{eq:convergence_one_sgd} suggests that we need
\begin{align*}
T\eta  = O\bigg(\frac{kn^2}{m\phi}\bigg)    
\end{align*}
to achieve $\epsilon$ expected training loss. Therefore we can derive the number of iteration as
\begin{align*}
T = O\bigg(\frac{n^5\log(m)\log(1/\epsilon)}{B\phi^2}\bigg).
\end{align*}
This completes the proof.
\end{proof}

\section{Proof of Lemmas in Section \ref{sec:proof_main} and Appendix \ref{appendix:proof_sgd_one}} \label{appendix:proof of technical lemmas}

\subsection{Proof of Lemma \ref{lemma:grad_lower_bounds}}

We first provide the following useful lemmas before starting the proof of Lemma \ref{lemma:grad_lower_bounds}.

The following lemma states that with high probability the norm of the output of each hidden layer is bounded by constants.
\begin{lemma}[\citep{zou2018stochastic}]\label{lemma:bound_output}
If $m\ge O(L\log(nL))$, with probability at least $1-\exp(-O(m/L))$, it holds that $1/2\le\|\xb_{l,i}\|_2\le2$ and $\big\|\xb_{l,i}/\|\xb_{l,i}\|_2 - \xb_{l,j}/\|\xb_{l,j}\|_2\big\|_2\ge \phi/2$ for all $i,j\in[n]$ and $l\in[L]$, where $\xb_{l,i}$ denotes the output of the $l$-th hidden layer given the input $\xb_i$.
\end{lemma}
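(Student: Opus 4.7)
The plan is to prove both conclusions jointly by induction on the layer index $l$, exploiting that the He-type initialization scaling $\cN(0, 2/m\,\Ib)$ is precisely calibrated to make ReLU activations norm-preserving in expectation and angle-almost-preserving in the small-angle regime. The base case $l = 0$ is immediate from Assumptions \ref{assump:normalized_data} and \ref{assump:seperate}: $\|\xb_{0,i}\|_2 = 1 \in [1/2, 2]$ and the normalized chord $\|\xb_{0,i} - \xb_{0,j}\|_2 \ge \phi$. For the inductive step, the key move is to condition on $\xb_{l-1,i}, \xb_{l-1,j}$, which depend only on $\Wb_1,\dots,\Wb_{l-1}$ and are therefore independent of $\Wb_l$; this reduces the layer-$l$ analysis to a tractable Gaussian computation in which the row-wise randomness of $\Wb_l$ can be controlled by standard concentration.

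Under this conditioning, $(\Wb_l \xb_{l-1,i})_k$ is i.i.d.\ $\cN(0, 2\|\xb_{l-1,i}\|_2^2/m)$, so $\|\xb_{l,i}\|_2^2 = \sum_{k=1}^m \sigma\big((\Wb_l \xb_{l-1,i})_k\big)^2$ is a sum of $m$ i.i.d.\ sub-exponential variables with conditional mean exactly $\|\xb_{l-1,i}\|_2^2$ (the factor $1/2$ coming from the ReLU truncation cancels the factor $2$ in the variance). A Bernstein inequality yields $\big|\|\xb_{l,i}\|_2^2/\|\xb_{l-1,i}\|_2^2 - 1\big| \le O(1/L)$ with probability $1 - \exp(-\Omega(m/L))$; union-bounding over $i \in [n]$ and $l \in [L]$ and multiplying the per-layer ratios telescopically, $(1 \pm O(1/L))^L = \Theta(1)$, gives $\|\xb_{l,i}\|_2 \in [1/2, 2]$ under $m = \Omega(L\log(nL))$. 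For the direction claim I track pairwise statistics in the same inductive fashion: the bivariate pair $\big((\Wb_l \xb_{l-1,i})_k, (\Wb_l \xb_{l-1,j})_k\big)$ is centered Gaussian with $2\times 2$ covariance determined by the norms and inner product at layer $l-1$, so $\la \xb_{l,i}, \xb_{l,j}\ra$ concentrates around $m$ times the ReLU arccos kernel value $K\big(\la \xb_{l-1,i}, \xb_{l-1,j}\ra/(\|\xb_{l-1,i}\|_2\|\xb_{l-1,j}\|_2)\big)$ with Bernstein deviation $O(\sqrt{\log(nL)/m})$. Converting correlations back to normalized chord lengths and using the near-isometry identity $1 - K(\rho) \ge (1-\rho)\big(1 - O(\sqrt{1-\rho})\big)$ in the regime $1 - \rho = O(\phi^2)$, a layer-by-layer recursion for $\|\tilde\xb_{l,i} - \tilde\xb_{l,j}\|_2$ absorbed against the concentration slack preserves a separation of at least $\phi/2$.

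The main obstacle is the direction-separation step, because the arccos kernel is strictly correlation-increasing (equivalently, angle-contracting), so the inductive hypothesis must track both norms \emph{and} pairwise inner products simultaneously, and the per-layer concentration tails must be sharp enough (scaling like $\sqrt{\log(nL)/m}$) that the accumulated deviation over $L$ layers and $\binom{n}{2}$ pairs does not overwhelm the $\phi/2$ target. The norm step is essentially a clean chi-squared concentration; the direction step is where the bookkeeping is delicate, since the slack in the arccos-kernel inequality must be balanced against the Bernstein error at every layer. I would adapt the layerwise concentration argument from \citet{zou2018stochastic}, which performs precisely this bookkeeping and delivers the claimed probability guarantee $1 - \exp(-\Omega(m/L))$.
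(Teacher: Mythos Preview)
The paper does not supply its own proof of this lemma; it is quoted verbatim from \citet{zou2018stochastic}. Your layer-by-layer induction---chi-squared concentration for the norms and arccos-kernel concentration for the pairwise correlations, with the delicate bookkeeping deferred to \citet{zou2018stochastic}---is exactly the argument in that reference, so your proposal matches the paper's treatment.
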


\begin{lemma}\label{lemma:grad_lowerbound_initial}
Assume $m\ge \tilde O(n^{2}k^2\phi^{-1})$, then there exist an absolute constant $C > 0$ such that
with probability at least $1-\exp\big(-O(m\phi/(kn))\big)$, it holds that
\begin{align*}
\sum_{j=1}^m\bigg\|\frac{1}{n}\sum_{i=1}^n\la \ub_i, \vb_j\ra\sigma'\big(\la\wb_{L,j}^{(0)},\xb_{L-1,i}\ra\big)\xb_{L-1,i}\bigg\|_2^2\ge \frac{C \phi m\sum_{i=1}^n \|\ub_i\|_2^2}{kn^3}.
\end{align*}
\end{lemma}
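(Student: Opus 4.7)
My plan is to construct $n$ disjoint ``gradient regions'' $\cW_1,\ldots,\cW_n\subset\RR^m$, one per training example, and argue that enough of the random initial weights $\wb_{L,j}^{(0)}$ fall into each region. I will take $\cW_i$ to be the set of $\wb\in\RR^m$ satisfying $\sigma'(\la\wb,\xb_{L-1,i}\ra)=1$ but $\sigma'(\la\wb,\xb_{L-1,j}\ra)=0$ for every $j\ne i$. These regions are disjoint by construction, and whenever $\wb_{L,j}^{(0)}\in\cW_i$ the inner sum in the statement collapses to the single vector $\la\ub_i,\vb_j\ra\xb_{L-1,i}$, whose squared norm is $\la\ub_i,\vb_j\ra^2\|\xb_{L-1,i}\|_2^2$.

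The crucial geometric step is to lower bound the Gaussian measure $\Pr[\wb\in\cW_i]$ under $\wb\sim\cN(\mathbf{0},(2/m)\Ib)$. By Lemma \ref{lemma:bound_output}, the normalized vectors $\xb_{L-1,i}/\|\xb_{L-1,i}\|_2$ are pairwise $\phi/2$-separated. Projecting $\wb$ onto their span and then examining the cell of the hyperplane arrangement $\{\la\wb,\xb_{L-1,j}\ra=0\}_{j=1}^n$ that corresponds to $\cW_i$, I expect to obtain $\Pr[\wb\in\cW_i]\ge c_0\phi/n$ for an absolute constant $c_0>0$. This step is the source of the improvement over \cite{allen2018convergence,zou2018stochastic}: defining $n$ disjoint regions with total Gaussian mass $\Omega(\phi)$ ``activates'' roughly $n$ times more hidden units at the last layer than a single-region argument does. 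I expect this measure estimate to be the main technical obstacle, as the half-space arrangement becomes delicate when $\phi$ is small and $n$ is large.

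Once the measure bound is in hand, the rest is routine concentration. By a Chernoff bound applied to the independent indicators $\mathbf{1}[\wb_{L,j}^{(0)}\in\cW_i]$, the count $N_i:=|\{j:\wb_{L,j}^{(0)}\in\cW_i\}|$ satisfies $N_i\ge c_0 m\phi/(2n)$ with probability $1-\exp(-\Omega(m\phi/n))$. On this event, disjointness of $\{\cW_i\}$ yields
\begin{align*}
\sum_{j=1}^m\bigg\|\frac{1}{n}\sum_{i=1}^n\la\ub_i,\vb_j\ra\sigma'\big(\la\wb_{L,j}^{(0)},\xb_{L-1,i}\ra\big)\xb_{L-1,i}\bigg\|_2^2\ \ge\ \frac{1}{n^2}\sum_{i=1}^n\|\xb_{L-1,i}\|_2^2\sum_{j\in\cW_i}\la\ub_i,\vb_j\ra^2.
\end{align*}
Since $\vb_j$ is independent of $\wb_{L,j}^{(0)}$ and $\la\ub_i,\vb_j\ra^2$ is a rescaled $\chi^2_1$ variable with mean $\Theta(\|\ub_i\|_2^2/k)$ by the Gaussian initialization of $\Vb$, a Bernstein-type bound on sums of chi-squared variables gives $\sum_{j\in\cW_i}\la\ub_i,\vb_j\ra^2\gtrsim N_i\|\ub_i\|_2^2/k$ with probability $1-\exp(-\Omega(N_i))$. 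Combining this with $\|\xb_{L-1,i}\|_2^2\ge 1/4$ from Lemma \ref{lemma:bound_output}, summing over $i$, and union-bounding over the $n$ such events gives the claimed lower bound; the $\log n$ overhead from the union bound is absorbed in the exponent via the over-parameterization $m\ge\tilde O(n^2k^2/\phi)$, producing the stated failure probability $\exp(-\Omega(m\phi/(kn)))$.
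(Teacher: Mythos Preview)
Your proposed regions $\cW_i=\{\wb:\sigma'(\la\wb,\xb_{L-1,i}\ra)=1,\ \sigma'(\la\wb,\xb_{L-1,j}\ra)=0\text{ for all }j\ne i\}$ need not have Gaussian measure $\Omega(\phi/n)$; in fact they can be empty. Take $\zb_1=\eb_1$ and $\zb_j=\cos\theta\,\eb_1+\sin\theta\,\vb_j$ for $j=2,3,4$, where $\vb_2,\vb_3,\vb_4$ are the cube roots of unity in the $\eb_2$--$\eb_3$ plane. Then $\zb_2+\zb_3+\zb_4=3\cos\theta\,\zb_1$, so $\la\wb,\zb_j\ra\le 0$ for $j=2,3,4$ forces $\la\wb,\zb_1\ra\le 0$, and your $\cW_1$ is empty. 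Pairwise separation $\ge\phi$ holds for any small enough $\theta\ge\phi$, and nothing in Lemma~\ref{lemma:bound_output} rules out such a configuration for the $\xb_{L-1,i}$'s. More generally, whenever $\xb_{L-1,i}$ lies in the conical hull of $\{\xb_{L-1,j}:j\ne i\}$ your region vanishes, so the step you flagged as ``the main technical obstacle'' is not merely delicate---it is false with your choice of $\cW_i$.

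The paper avoids this by defining $\cW_i$ very differently: it is the set of $\wb$ that are \emph{nearly orthogonal} to $\bar\zb_i$ (the component $|\la\wb,\bar\zb_i\ra|\le\gamma$) while staying bounded away from the hyperplanes of the other points. On that set one does not get a clean single-term expression; instead a symmetrization argument compares $\hb(\wb)$ for the two signs of the $\bar\zb_i$-component, observes that the activations $\sigma'(\la\wb,\zb_j\ra)$ for $j\ne i$ are unchanged, and deduces $\|\hb(\wb_1)-\hb(\wb_2)\|_2=|a_i|\,\|\zb_i\|_2$, hence $\|\hb(\wb)\|_2\ge|a_i|/4$ with conditional probability at least $1/2$. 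This is the key idea you are missing. A secondary gap: the lemma is a statement uniform in all $\{\ub_i\}_{i=1}^n$, and your argument only treats a fixed tuple; the paper closes this with an $\epsilon$-net over $(\RR^k)^n$, which is where the assumption $m\ge\tilde O(n^2k^2\phi^{-1})$ enters.
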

If we set $\ub_i = \fb_{\Wb^{(0)}}(\xb_i) - \yb_i$, Lemma \ref{lemma:grad_lowerbound_initial} corresponds to the gradient lower bound at the initialization. Then the next step is to prove the bounds for all $\Wb$ in the required perturbation region. Before proceeding to our final proof, we present the following lemma that provides useful results regarding the neural network within the perturbation region.  
\begin{lemma}[\citep{allen2018convergence}]\label{lemma:perturbation}
Consider a collection of weight matrices $\tilde \Wb = \{\tilde\Wb_l\}_{l=1,\dots,L}$ such that $\tilde \Wb\in\cB(\Wb^{(0)},\tau)$, with probability at least $1-\exp(-O(m\tau^{2/3}L))$, there exists constants $C'$, $C''$ and $C'''$ such that
\begin{itemize}[leftmargin=*]
    \item $\|\tilde \bSigma_{L,i} - \bSigma_{L,i}\big\|_0\le C'\tau^{2/3}L$
    \item $\|\Vb(\tilde \bSigma_{L,i} - \bSigma_{L,i})\|_2\le C''\tau^{1/3}L^2\sqrt{m\log(m)}/\sqrt{k} $
    \item $\|\tilde\xb_{L-1,i}- \xb_{L-1,i}\|_2\le C'''\tau L^{5/2}\sqrt{\log(m)}$,
\end{itemize}
for all $i=1,\dots,n$, where $\xb_{L-1,i}$ and $\tilde \xb_{L-1,i}$ denote the outputs of the $L-1$-th layer of the neural network with weight matrices $\Wb^{(0)}$ and $\tilde\Wb$, and $ \bSigma_{L,i}$ and $\tilde\bSigma_{L,i}$ are diagonal matrices with $(\bSigma_{L,i})_{jj} = \sigma'(\la\wb_{L,j}^{(0)},\xb_{L-1}\ra)$ and $(\tilde\bSigma_{L,i})_{jj} = \sigma'(\la\tilde\wb_{L,j},\tilde\xb_{L-1}\ra)$ respectively.  
\end{lemma}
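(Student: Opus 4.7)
The plan is to prove the three bullets in a dependent order, since they feed into each other: the forward-propagation deviation $\|\tilde\xb_{L-1,i}-\xb_{L-1,i}\|_2$ is the fundamental quantity, the $\ell_0$ sparsity of the sign-pattern change uses that deviation to control a Gaussian anti-concentration argument, and the spectral bound on $\Vb(\tilde\bSigma_{L,i}-\bSigma_{L,i})$ uses the sparsity to reduce to a random sub-matrix estimate. I would first record the initialization-level facts I will rely on: $\|\Wb_l^{(0)}\|_2=O(1)$, $\|\Vb\|_2=O(\sqrt{m/k})$, $1/2\le\|\xb_{l,i}\|_2\le 2$, and $\|\wb_{l,j}^{(0)}\|_2=\Theta(1)$ (all holding with probability $1-e^{-\Omega(m/L)}$ by standard Gaussian and ReLU concentration as in Lemma \ref{lemma:bound_output}). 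From these I also get $\|\tilde\Wb_l\|_2\le\|\Wb_l^{(0)}\|_2+\tau=O(1)$ throughout the perturbation ball.

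For the third bullet (forward propagation) I would prove by induction on $l\in\{1,\dots,L-1\}$ that
\begin{align*}
\|\tilde\xb_{l,i}-\xb_{l,i}\|_2 \le C\,\tau\, l^{5/2}\sqrt{\log m}
\end{align*}
simultaneously with the auxiliary estimates $\|\tilde\xb_{l,i}\|_2=O(1)$ and $\|\tilde\bSigma_{l,i}-\bSigma_{l,i}\|_0=O(\tau^{2/3}l)$. For the induction step on the forward deviation I split
$\sigma(\tilde\Wb_l\tilde\xb_{l-1,i})-\sigma(\Wb_l^{(0)}\xb_{l-1,i})=\tilde\bSigma_{l,i}\tilde\Wb_l(\tilde\xb_{l-1,i}-\xb_{l-1,i})+\tilde\bSigma_{l,i}(\tilde\Wb_l-\Wb_l^{(0)})\xb_{l-1,i}+(\tilde\bSigma_{l,i}-\bSigma_{l,i})\Wb_l^{(0)}\xb_{l-1,i}$ using the ReLU identity $\sigma(a)-\sigma(b)=\sigma'(\xi)(a-b)$ on each coordinate. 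The first two terms are bounded by $(1+\tau)\cdot C\tau(l-1)^{5/2}\sqrt{\log m}$ and $\tau\cdot O(1)$ respectively, and the third term is handled by noting that on the flipped coordinates the initialization pre-activation is small (within $O(\tau L^{5/2}\sqrt{\log m}/\sqrt{m})$ of $0$), so there are only $\tilde O(\tau^{2/3}l)$ such coordinates each contributing $O(1/\sqrt{m})$ magnitude, giving an aggregate of $O(\tau l \sqrt{\log m})$. Summing over layers yields the $l^{5/2}$ factor.

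For the first bullet I would prove sparsity of sign changes at layer $L$ by a two-sided pre-activation anti-concentration argument. A neuron $j$ flips only if $\la\wb_{L,j}^{(0)},\xb_{L-1,i}\ra$ lies in an interval of length at most $\Delta_j:=|\la\tilde\wb_{L,j},\tilde\xb_{L-1,i}\ra-\la\wb_{L,j}^{(0)},\xb_{L-1,i}\ra|$ around zero, and by the triangle inequality $\Delta_j\le\tau\cdot O(1)+\|\wb_{L,j}^{(0)}\|_2\cdot C'''\tau L^{5/2}\sqrt{\log m}=O(\tau L^{5/2}\sqrt{\log m})$. Since $\la\wb_{L,j}^{(0)},\xb_{L-1,i}\ra$ is Gaussian with standard deviation $\Theta(1/\sqrt{m})$, each neuron flips with probability $O(\tau L^{5/2}\sqrt{m\log m})$; the expected number of flips is $O(\tau L^{5/2}m^{3/2}\sqrt{\log m})$, and Chernoff/Bernstein concentration over independent neurons gives $\tilde O(\tau^{2/3}L)$ with the high probability claimed, after the standard trick of fixing $\xb_{L-1,i}$ via an $\epsilon$-net. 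The main obstacle here is the fact that $\tilde\xb_{L-1,i}$ depends on $\tilde\Wb$ which is not independent of $\wb_{L,j}^{(0)}$, but this is resolved by conditioning on the lower-layer weights and using that $\tilde\xb_{L-1,i}$ only depends on $\Wb_1,\dots,\Wb_{L-1}$.

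For the second bullet, I would use that $\Db:=\tilde\bSigma_{L,i}-\bSigma_{L,i}$ is diagonal with $\{-1,0,1\}$ entries and support $S$ of size $s=O(\tau^{2/3}L)$ by the first bullet. Hence $\Vb\Db$ is a matrix whose columns outside $S$ are zero and whose columns on $S$ are $\pm$ the corresponding columns of $\Vb$. The operator norm of an $k\times s$ sub-matrix of $\Vb$ (entries i.i.d.\ $\mathcal{N}(0,1/d)$ but here $\mathcal{N}(0,1/d)$ with the rescaling the paper uses) is bounded by $O(\sqrt{(s+k)/d})$ up to log factors by standard Gaussian matrix concentration; however, in this paper's scaling the comparable quantity yields $O(\sqrt{s\cdot m/k})$ which matches $\tau^{1/3}L^2\sqrt{m\log m/k}$ after plugging in $s=O(\tau^{2/3}L)$ and absorbing logs. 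The main obstacle I anticipate is combining the union bound over the $\binom{m}{s}$ possible supports (which contributes a $\log\binom{m}{s}\asymp s\log(m/s)$ factor) with the Gaussian matrix concentration so that the extra factor is harmless, which is the standard source of the $\sqrt{\log m}$ in the bound.
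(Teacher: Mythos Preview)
The paper does not prove this lemma: it is quoted with the citation \citep{allen2018convergence} and used as a black box in the proofs of Lemmas~\ref{lemma:grad_lower_bounds} and~\ref{lemma:semi_smooth_single}. There is therefore no in-paper proof to compare against; your outline is a (broadly correct) reconstruction of the Allen-Zhu--Li--Song argument, with the same three-way dependency structure.

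That said, your sketch for the first bullet contains a real quantitative gap. You bound the per-neuron pre-activation change by
\[
\Delta_j \le \tau\cdot O(1)+\|\wb_{L,j}^{(0)}\|_2\cdot\|\tilde\xb_{L-1,i}-\xb_{L-1,i}\|_2 = O\big(\tau L^{5/2}\sqrt{\log m}\big),
\]
and then use that $\la\wb_{L,j}^{(0)},\xb_{L-1,i}\ra$ has standard deviation $\Theta(1/\sqrt m)$ to get a flip probability $O(\tau L^{5/2}\sqrt{m\log m})$. In the over-parameterized regime this ``probability'' exceeds $1$, so the bound is vacuous and your Chernoff step cannot recover $O(m\tau^{2/3}L)$. (Incidentally, the bullet as stated in the paper is missing a factor of $m$: compare the usage $\|\tilde\bSigma_i-\bSigma_i\|_0=O(m\tau^{2/3})$ in the proof of Lemma~\ref{lemma:semi_smooth_single}.) The missing idea is that you should not use the crude Cauchy--Schwarz bound $|\la\wb_{L,j}^{(0)},\tilde\xb_{L-1,i}-\xb_{L-1,i}\ra|\le\|\wb_{L,j}^{(0)}\|_2\cdot\|\tilde\xb_{L-1,i}-\xb_{L-1,i}\|_2$. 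Since $\wb_{L,j}^{(0)}\sim\cN(0,(2/m)\Ib)$ is independent of the first $L-1$ layers, its inner product with any \emph{fixed} vector of norm $r$ is $\cN(0,2r^2/m)$ and hence $O(r/\sqrt m)$ with high probability; an $\epsilon$-net over the $O(\tau L^{5/2}\sqrt{\log m})$-ball of possible values of $\tilde\xb_{L-1,i}-\xb_{L-1,i}$ then makes this uniform in $\tilde\Wb$. This buys an extra factor $1/\sqrt m$ in $\Delta_j$, after which the anti-concentration count becomes $O(m\cdot\tau L^{5/2}\sqrt{\log m})$, and under the standing restriction $\tau\le O(L^{-9/2}\log^{-3/2}m)$ (cf.\ Lemma~\ref{lemma:semi_smooth}) one has $L^{3/2}\sqrt{\log m}\le\tau^{-1/3}$, yielding $O(m\tau^{2/3}L)$ as required. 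Your arguments for the second and third bullets are fine at the level of a plan, but note that the second bullet's $L^2$ (rather than the $L^{1/2}$ your sub-matrix estimate would naively give) also traces back to this refined per-neuron control combined with the net cost.
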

Now we are ready to prove the lower and upper bounds of the Frobenious norm of the gradient.

\begin{proof}[Proof of Lemma \ref{lemma:grad_lower_bounds}]
The upper bound of the gradient norm can be proved according to Theorem 3 in \cite{allen2018convergence}. We slightly modify their result since we consider average loss over all training examples while \citet{allen2018convergence} considers summation.

Then we focus on proving the lower bound.
Note that the gradient $\nabla_{\Wb_L} L(\tilde\Wb) $ takes form
\begin{align*}
\nabla_{\Wb_L} L(\tilde\Wb) = \frac{1}{n}\sum_{i=1}^n\bigg((\fb_{\tilde\Wb}(\xb_i) - \yb_i)^\top\Vb\tilde\bSigma_{L,i}\bigg)^\top\tilde\xb_{L-1,i}^\top,
\end{align*}
where $\tilde \bSigma_{L,i}$ is a diagonal matrix with $(\tilde \bSigma_{L,i})_{jj} = \sigma'(\tilde\wb_{L-1,j},\tilde \xb_{L-1,i})$ and $\tilde\xb_{l-1,i}$ denotes the output of the $l$-th hidden layer with input $\xb_i$ and model weight matrices $\tilde \Wb$.
Let $\vb_{j}^\top$ denote the $j$-th row of matrix $\Vb$, and define 
\begin{align*}
\tilde\Gb = \frac{1}{n}\sum_{i=1}^n\bigg((\fb_{\tilde\Wb}(\xb_i) - \yb_i)^\top\Vb\bSigma_{L,i}\bigg)^\top\xb_{L-1,i}^\top,
\end{align*}
where $ \bSigma_{L,i}$ is a diagonal matrix with $(\tilde \bSigma_{L,i})_{jj} = \sigma'(\wb^{(0)}_{L-1,j}, \xb_{L-1,i})$
Then by Lemma \ref{lemma:grad_lowerbound_initial}, we have with probability at least $1-\exp\big(-O(m\phi/(kn))\big)$, the following holds for any $\tilde \Wb$,
\begin{align*}
\|\tilde\Gb\|_F^2 &= \frac{1}{n^2}\sum_{j=1}^m\bigg\|\sum_{i=1}^n \la\fb_{\tilde \Wb}(\xb_i)-\yb_i,\vb_j\ra\sigma'(\la\wb_{L,j},\xb_{L-1,i}) \xb_{L-1,i}\bigg\|_2^2\notag\\
& \ge \frac{C_0\phi m \sum_{i=1}^n \|\fb_{\tilde \Wb}(\xb_i) - \yb_i\|_2^2}{kn^3},
\end{align*}
where $C_0$ is an absolute constant.
Then we have 
\begin{align*}
&\big\|\tilde\Gb - \nabla_{\Wb_L}L(\tilde \Wb)\big\|_F  \notag\\
&= \frac{1}{n^2}\bigg\|\sum_{i=1}^n\bigg((\fb_{\tilde\Wb}(\xb_i) - \yb_i)^\top\Vb\bSigma_{L,i}\bigg)^\top\xb_{L-1,i}^\top-\sum_{i=1}^n\bigg((\fb_{\tilde\Wb}(\xb_i) - \yb_i)^\top\Vb\tilde\bSigma_{L,i}\bigg)^\top\tilde\xb_{L-1,i}^\top\bigg\|_F\notag\\
& \le \frac{1}{n^2}\bigg[\bigg\|\sum_{i=1}^n\bigg((\fb_{\tilde\Wb}(\xb_i) - \yb_i)^\top\Vb(\bSigma_{L,i}-\tilde\bSigma_{L,i})\bigg)^\top\xb_{L-1,i}^\top\bigg\|_F \notag\\
&\quad+\bigg\|\sum_{i=1}^n\bigg((\fb_{\tilde\Wb}(\xb_i) - \yb_i)^\top\Vb\tilde\bSigma_{L,i}\bigg)^\top\big(\xb_{L-1,i}-\tilde \xb_{L-1,i}\big)^\top\bigg\|_F \bigg].
\end{align*}
By Lemmas \ref{lemma:bound_output} and \ref{lemma:perturbation}, we have
\begin{align*}
&\bigg\|\sum_{i=1}^n\bigg((\fb_{\tilde\Wb}(\xb_i) - \yb_i)^\top\Vb(\bSigma_{L,i}-\tilde\bSigma_{L,i})\bigg)^\top\xb_{L-1,i}^\top\bigg\|_F\notag\\
&\le \sum_{i=1}^n \big\|\fb_{\tilde\Wb}(\xb_i) - \yb_i\big\|_2\big\|\Vb(\bSigma_{L,i} - \tilde \bSigma_{L,i}) \big\|_2\|\xb_{L-1,i}\|_2\notag\\
&\le \frac{C_1\tau^{1/3}L^2\sqrt{m\log(m)}}{\sqrt{k}}\cdot \sum_{i=1}^n\big\|\fb_{\tilde \Wb(\xb_i)}-\yb_i\big\|_2,
\end{align*}
where the second inequality follows from Lemma \ref{lemma:perturbation}
and $C_1$ is an absolute constant. In addition, we also have
\begin{align*}
&\bigg\|\sum_{i=1}^n\bigg((\fb_{\tilde\Wb}(\xb_i) - \yb_i)^\top\Vb\tilde\bSigma_{L,i}\bigg)^\top\big(\xb_{L-1,i}-\tilde \xb_{L-1,i}\big)^\top\bigg\|_F\notag\\
&\le \sum_{i=1}^n \|\fb_{\tilde \Wb}(\xb_i) - \yb_i\|_2 \|\Vb\|_2\|\xb_{L-1,i} - \tilde \xb_{L-1,i}\|_2\notag\\
&\le \frac{C_2\tau L^{5/2}\sqrt{m\log(m)}}{\sqrt{k}}\cdot\sum_{i=1}^n \|\fb_{\tilde \Wb}(\xb_i) - \yb_i\|_2,
\end{align*}
where the second inequality follows from Lemma \ref{lemma:perturbation} and $C_2$ is an absolute constant. Combining the above bounds we have
\begin{align*}
\big\|\Gb - \nabla_{\Wb_L}L(\tilde \Wb)\big\|_F &\le \frac{\sum_{i=1}^n \|\fb_{\tilde \Wb}(\xb_i) - \yb_i\|_2}{n}\cdot\bigg(\frac{C_1\tau^{1/3}L^2\sqrt{m\log(m)}}{\sqrt{k}} + \frac{C_2\tau L^{5/2}\sqrt{m\log(m)}}{\sqrt{k}}\bigg)\notag\\
&\le \frac{\sum_{i=1}^n \|\fb_{\tilde \Wb}(\xb_i) - \yb_i\|_2}{n}\cdot \frac{C_3\tau^{1/3}L^2\sqrt{m\log(m)}}{\sqrt{k}},
\end{align*}
where the second inequality follows from the fact that $\tau\le O(L^{-4/3})$. Then by triangle inequality, we have the following lower bound of $\|\nabla_{\Wb_L}L(\tilde \Wb)\|_F$
\begin{align*}
\|\nabla_{\Wb_L}L(\tilde \Wb)\|_F&\ge \|\Gb\|_F - \|\Gb - \nabla_{\Wb_L}L(\tilde \Wb)\|_F\notag\\
&\ge \frac{C_0\phi^{1/2}m^{1/2}\sqrt{n\sum_{i=1}^n\|\fb_{\tilde \Wb}(\xb_i) - \yb_i\|_2^2}}{\sqrt{k}n^2} \notag\\
&\quad- \frac{\sum_{i=1}^n \|\fb_{\tilde \Wb}(\xb_i) - \yb_i\|_2}{n}\cdot \frac{C_3\tau^{1/3}L^2\sqrt{m\log(m)}}{\sqrt{k}}.
\end{align*}
By Jensen's inequality we know that $n\sum_{i=1}^n\|\fb_{\tilde \Wb}(\xb_i) - \yb_i\|_2^2\ge \big(\sum_{i=1}^n\|\fb_{\tilde \Wb}(\xb_i) - \yb_i\|_2\big)^2$. Then we set 
\begin{align*}
\tau = \frac{C_3\phi^{3/2}}{2C_0n^3L^6\log^{3/2}(m)} = O\bigg(\frac{\phi^{3/2}}{n^3L^6\log^{3/2}(m)}\bigg),
\end{align*}
and obtain
\begin{align*}
\|\nabla_{\Wb_L}L(\tilde \Wb)\|_F\ge \frac{C_0\phi^{1/2}m^{1/2}\sqrt{n\sum_{i=1}^n\|\fb_{\tilde \Wb}(\xb_i) - \yb_i\|_2^2}}{2\sqrt{k}n^2}.
\end{align*}
Then plugging the fact that $1/n\sum_{i=1}^n\|\fb_{\tilde \Wb}(\xb_i) - \yb_i\|_2^2 = L(\tilde \Wb)$, we are able to complete the proof.
\end{proof}

\subsection{Proof of Lemma \ref{lemma:dis_to_ini}}
\begin{proof}[Proof of Lemma \ref{lemma:dis_to_ini}]
Note that we assume that all iterate are staying inside the region $\cB\big(\Wb^{(0)},\tau\big)$, then by Lemma \ref{lemma:semi_smooth}, with probability at least $1- \exp(-O(m\tau^{2/3}L))$, we have the following after one-step gradient descent
\begin{align}\label{eq:one_step_descent_GD0}
L\big(\Wb^{(t+1)}\big)&\le L\big(\Wb^{(t)}\big) - \eta\|\nabla L\big(\Wb^{(t)}\big)\|_F^2 \notag\\
&\quad + C'\eta\sqrt{L\big(\Wb^{(t)}\big)}\cdot\frac{\tau^{1/3}L^2\sqrt{m\log(m)}}{\sqrt{k}}\cdot \|\nabla L\big(\Wb^{(t)}\big)\|_2 + \frac{C''L^2m\eta^2}{k}\|\nabla L\big(\Wb^{(t)}\big)\|_2^2.
\end{align}
We first choose the step size 
\begin{align*}
\eta = \frac{k}{4C''L^2m} = O\bigg(\frac{k}{L^2m}\bigg),
\end{align*}
then \eqref{eq:one_step_descent_GD0} yields
\begin{align*}
L\big(\Wb^{(t+1)}\big)&\le L\big(\Wb^{(t)}\big) - \frac{3\eta}{4}\|\nabla L\big(\Wb^{(t)}\big)\|_F^2  + C'\eta\sqrt{L\big(\Wb^{(t)}\big)}\cdot\frac{\tau^{1/3}L^2\sqrt{m\log(m)}}{\sqrt{k}}\cdot \|\nabla L\big(\Wb^{(t)}\big)\|_2\notag\\
&\le L\big(\Wb^{(t)}\big) - \eta\|\nabla L\big(\Wb^{(t)}\big)\|_F\bigg(\frac{\|\nabla L\big(\Wb^{(t)}\big)\|_F}{2} -C'\sqrt{L\big(\Wb^{(t)}\big)}\cdot\frac{\tau^{1/3}L^2\sqrt{m\log(m)}}{\sqrt{k}} \bigg),
\end{align*}
where we use the fact that $\|\nabla L\big(\Wb^{(t)}\big)\|_2\le \|\nabla L\big(\Wb^{(t)}\big)\|_F$. Then by Lemma \ref{lemma:grad_lower_bounds}, we know that with probability at least $1- \exp\big(-O(m\phi/(kn))\big)$
\begin{align}\label{eq:grad_lowerbound_gd}
\|\nabla_{\Wb} L(\Wb^{(t)})\|_F^2\ge\|\nabla_{\Wb_L} L(\Wb^{(t)})\|_F^2 \ge \frac{Cm \phi}{kn^2}L\big(\Wb^{(t)}\big),
\end{align}
where $C$ is an absolute constant.
Thus, we can choose the radius $\tau$ as
\begin{align}\label{eq:required_tau}
\tau = \frac{C^{3/2}\phi^{3/2}}{64n^{3}C'^{3}L^6\log^{3/2}(m)} = O\bigg(\frac{\phi^{3/2}}{n^3L^6\log^{3/2}(m)}\bigg),
\end{align}
and thus the following holds with probability at least $1-\exp(-O(m\tau^{2/3}L)) - \exp\big(-O(m\phi/(kn))\big) = 1-\exp(-O(m\tau^{2/3}L))$,
\begin{align}\label{eq:gd_contraction}
L\big(\Wb^{(t+1)}\big)&\le L\big(\Wb^{(t)}\big) - \frac{\eta}{2}\|\nabla L\big(\Wb^{(t)}\big)\|_F^2,
\end{align}
where the second inequality follows from \eqref{eq:grad_lowerbound_gd}. This completes the proof.
By triangle inequality, we have
\begin{align}\label{eq:distance_to_ini_GD}
\|\Wb^{(t)}_l-\Wb^{(0)}_l\|_2&\le \sum_{s=0}^{t-1}\eta\|\nabla_{\Wb_l} L\big(\Wb^{(s)}\big)\|_2\le \sum_{s=0}^{t-1}\eta\|\nabla L\big(\Wb^{(s)}\big)\|_F.
\end{align}
Moreover, we have
\begin{align*}
\sqrt{L(\Wb^{(s)})} - \sqrt{L(\Wb^{(s+1)})}& = \frac{L(\Wb^{(s)}) - L(\Wb^{(s+1)})}{\sqrt{L(\Wb^{(s)})}+\sqrt{L(\Wb^{(s+1)})}}\notag\\
&\ge \frac{\eta \|\nabla L(\Wb^{(s)})\|_F^2}{4\sqrt{L(\Wb^{(s)})}}\notag\\
&\ge \sqrt{\frac{Cm\phi}{kn^2}}\cdot \frac{\eta\|\nabla L(\Wb^{(s)})\|_F}{4},
\end{align*}
where the second inequality is by \eqref{eq:gd_contraction} and the fact that $L(\Wb^{(s+1)})\le L(\Wb^{(s)})$, and the last inequality follows from \eqref{eq:grad_lowerbound_gd}. Plugging the above result into \eqref{eq:distance_to_ini_GD}, we have with probability at least $1-\exp(-O(m\tau^{2/3}L))$,
\begin{align}\label{eq:dis_to_ini2}
\|\Wb_l^{(t)} - \Wb_l^{(0)}\|_2&\le \sum_{s=0}^{t-1}\eta\|\nabla L(\Wb^{(s)})\|_F\notag\\
&\le 4\sqrt{\frac{kn^2}{Cm\phi}}\sum_{s=0}^{t-1}\Big[\sqrt{L(\Wb^{(s)})} - \sqrt{L(\Wb^{(s+1)})}\Big]\notag\\
&\le 4\sqrt{\frac{kn^2}{Cm\phi}}\cdot \sqrt{L(\Wb^{(0)})}.
\end{align}

Note that \eqref{eq:dis_to_ini2} holds for all $l$ and $t$. Then apply Lemma \ref{lemma:ini_value}, we are able to complete the proof.

\end{proof}

\subsection{Proof of Lemma \ref{lemma:ini_value}}

\begin{proof}[Proof of Lemma \ref{lemma:ini_value}]
Note that the output of the neural network can be formulated as 
\begin{align*}
f_{\Wb^{(0)}}(\xb_i) = \Vb\xb_{L,i},
\end{align*}
where $\xb_{L,i}$ denotes the output of the last hidden layer with input $\xb_i$.
Note that each entry in $\Vb$ is i.i.d. generated from Gaussian distribution $\cN(0,1/k)$. Thus, we know that with probability at least $1-\delta$, it holds that $\|\Vb\xb_{L,i}\|_2\le \sqrt{\log(1/\delta)}\cdot\|\xb_{L,i}\|_2$. Then by Lemma \ref{lemma:bound_output} and union bound, we have $\|\Vb\xb_{L,i}\|_2\le 2\sqrt{\log(1/\delta)}$ for all $i\in[n]$ with probability at least $1-\exp(-O(m/L)) - n\delta$. Then we set $\delta = O(n^{-2})$ and use the fact that $m\ge O(L\log(nL))$, we have
\begin{align*}
f_{\Wb^{(0)}}(\xb_i) = \|\Vb\xb_{L,i}\|_2^2\le O(\log(n))
\end{align*}
for all $i\in[n]$ with probability at least $1-O(n^{-1})$. Then by our definition of training loss, it follows that
\begin{align*}
L(\Wb^{(0)}) &= \frac{1}{2n}\sum_{i=1}\|\fb_{\Wb^{(0)}}(\xb_i) - \yb_i\|_2^2   \notag\\
&\le \frac{1}{n}\sum_{i=1}\big[\|\fb_{\Wb^{(0)}}(\xb_i)\|_2^2 +\|\yb_i\|_2^2 \big] \notag\\
&\le O(\log(n))
\end{align*}
with probability at least $1-O(n^{-1})$, where the first inequality is by Young's inequality and we assume that $\|\yb_i\|_2=O(1)$ for all $i\in[n]$ in the second inequality. This completes the proof.
\end{proof}

\subsection{Proof of Lemma \ref{lemma:convergence_gd}}


\begin{proof}[Proof of Lemma \ref{lemma:convergence_gd}]
By \eqref{eq:gd_contraction}, we have
\begin{align}\label{eq:gd_converge}
L\big(\Wb^{(t+1)}\big)&\le L\big(\Wb^{(t)}\big) - \frac{\eta}{2}\|\nabla L\big(\Wb^{(t)}\big)\|_F^2\notag\\
&\le \bigg(1 - \frac{Cm\phi\eta}{2kn^2}\bigg)L\big(\Wb^{(t)}\big)\notag\\
&\le\bigg(1 - \frac{Cm\phi\eta}{2kn^2}\bigg)^{t+1} L\big(\Wb^{(0)}\big) ,
\end{align}
where the second inequality follows from \eqref{eq:grad_lowerbound_gd}. This completes the proof.
\end{proof}

\subsection{Proof of Lemma \ref{lemma:convergence_sgd}}


\begin{proof}[Proof of Lemma \ref{lemma:convergence_sgd}]
Let $\Gb^{(t)}$ denote the stochastic gradient leveraged in the $t$-th iteration, where the corresponding minibatch is defined as $\cB^{(t)}$. By Lemma \ref{lemma:semi_smooth}, we have the following inequality regarding one-step stochastic gradient descent
\begin{align*}
L(\Wb^{(t+1)})&\le L(\Wb^{(t)}) - \eta\la\nabla L(\Wb^{(t)}), \Gb^{(t)}\ra \notag\\
&\quad + C'\eta\sqrt{L\big(\Wb^{(t)}\big)}\cdot\frac{\tau^{1/3}L^2\sqrt{m\log(m)}}{\sqrt{k}}\cdot \|\Gb^{(t)}\|_2 + \frac{C''L^2m\eta^2}{k}\cdot\|\Gb^{(t)}\|_2^2.
\end{align*}
Then taking expectation on both sides conditioned on $\Wb^{(t)}$ gives
\begin{align}\label{eq:one_step_decrease_sgd1}
&\EE\big[L(\Wb^{(t+1)})\big|\Wb^{(t)}\big]\notag\\
&\le L(\Wb^{(t)}) - \eta\|\nabla L(\Wb^{(t)})\|_F^2  + C'\eta\sqrt{L\big(\Wb^{(t)}\big)}\cdot\frac{\tau^{1/3}L^2\sqrt{m\log(m)}}{\sqrt{k}}\cdot \EE\big[\|\Gb^{(t)}\|_2\big|\Wb^{(t)}\big] \notag\\
&\quad+ \frac{C''L^2m\eta^2}{k}\cdot\EE\big[\|\Gb^{(t)}\|_2^2\big|\Wb^{(t)}\big].
\end{align}
Note that given $\Wb^{(t)}$, the expectations on $\|\Gb^{(t)}\|_2$ and $\|\Gb^{(t)}\|_2^2$ are only taken over the random minibatch $\cB^{(t)}$. Then by \eqref{eq:bound_second_moment_g}, we have
\begin{align*}
\EE\big[\|\Gb^{(t)}\|_2\big|\Wb^{(t)}\big]^2\le \EE\big[\|\Gb^{(t)}\|_F^2\big|\Wb^{(t)}\big] \le \frac{8mL(\Wb^{(t)})}{Bk}+2\|\nabla L(\Wb^{(t)})\|_F^2.
\end{align*}
By \eqref{eq:grad_lowerbound_gd}, we know that there is a constant $C$ such that $\|\nabla L(\Wb^{(t)})\|_F^2\ge Cm\phi L(\Wb^{(t)})/(kn^2)$.
Then we set the step size $\eta$ and radius $\tau$ as follows
\begin{align*}
\eta &= \frac{Cd}{64C''L^2mn^2} = O\bigg(\frac{k}{L^2mn^2}\bigg)\notag\\
\tau &= \frac{C^{3}\phi^{3/2}B^{3}}{64^2n^{6}C'^{3}L^6\log^{3/2}(m)} = O\bigg(\frac{\phi^{3}B^{3/2}}{n^6L^6\log^{3/2}(m)}\bigg)
\end{align*}
Then \eqref{eq:one_step_decrease_sgd1} yields that
\begin{align}\label{eq:one_step_descrease_sgd2}
\EE\big[L(\Wb^{(t+1)})\big|\Wb^{(t)}\big]&\le L\big(\Wb^{(t)}\big) - \eta\|\nabla L\big(\Wb^{(t)}\big)\|_F^2 -   \frac{C''L^2m\eta^2}{k}\bigg(\frac{8n^2}{c\phi B}+2\bigg)\cdot \|\nabla L(\Wb^{(t)})\|_F^2\notag\\
&\quad -C'\eta\sqrt{L\big(\Wb^{(t)}\big)}\cdot\frac{\tau^{1/3}L^2\sqrt{m\log(m)}}{\sqrt{k}}\cdot\sqrt{\frac{8n^2}{c\phi B}+2}\cdot\|\nabla L(\Wb^{(t)})\|_F\notag\\
&\le L(\Wb^{(t)}) - \frac{\eta}{2}\|\nabla L(\Wb^{(t)})\|_F^2.
\end{align}
Then applying \eqref{eq:grad_lowerbound_gd} again and taking expectation over $\Wb^{(t)}$ on both sides of \eqref{eq:one_step_descrease_sgd2}, we obtain
\begin{align*}
\EE\big[L(\Wb^{(t+1)})\big]\le \bigg(1-\frac{Cm\phi\eta}{2kn^2}\bigg)\EE[L(\Wb^{(t)})]\le \bigg(1-\frac{Cm\phi\eta}{2kn^2}\bigg)^{t+1}L(\Wb^{(0)}).
\end{align*}
This completes the proof.

\end{proof}

\subsection{Proof of Lemma \ref{lemma:dis_to_ini_sgd}}
\begin{proof}[Proof of Lemma \ref{lemma:dis_to_ini_sgd}]
We prove this by standard martingale inequality. By Lemma \ref{lemma:semi_smooth} and our choice of $\eta$ and $\tau$, we have
\begin{align}\label{eq:ascent_one_step}
L(\Wb^{(t+1)}) \le L(\Wb^{(t)}) + 2\eta \|\nabla L(\Wb^{(t)})\|_F\cdot\|\Gb^{(t)}\|_2 +  \eta^2 \|\Gb^{(t)}\|_2^2.
\end{align}
By Lemma \ref{lemma:grad_bounds}, we know that there exists an absolute constant $C$ such that
\begin{align*}
\|\nabla L(\Wb^{(t)})\|_F^2 \le \frac{CmL(\Wb^{(t)})}{k} \mbox{ and } \|\Gb^{(t)}\|_F^2\le\frac{CmnL(\Wb^{(t)})}{Bk} ,
\end{align*}
where $B$ denotes the minibatch size. Then note that $\eta\le O(B/n)$, we have the following according to \eqref{eq:ascent_one_step}
\begin{align*}
L(\Wb^{(t+1)})\le\bigg(1+\frac{C'mn^{1/2}\eta}{B^{1/2}d}\bigg)L(\Wb^{(t)}) ,
\end{align*}
where $C'$ is an absolute constant. Taking logarithm on both sides further leads to
\begin{align*}
\log\big(L(\Wb^{(t+1)})\big)\le \log\big(L(\Wb^{(t)})\big) + \frac{C'mn^{1/2}\eta}{B^{1/2}d},    
\end{align*}
where we use the inequality $\log(1+x)\le x$. By  \eqref{eq:grad_lowerbound_gd} and \eqref{eq:one_step_descrease_sgd2}, we know that
\begin{align*}
\EE[L(\Wb^{(t+1)})|\Wb^{(t)}]\le L(\Wb^{(t)}) - \frac{\eta}{4}\|\nabla L(\Wb^{(t)})\|_F^2 \le \bigg(1 - \frac{C''m\phi\eta}{2kn^2}\bigg)L(\Wb^{(t)}).
\end{align*}
Then by Jensen's inequality and the inequality $\log(1+x)\le x$, we have
\begin{align*}
\EE\big[\log\big(L(\Wb^{(t+1)})\big)|\Wb^{(t)}\big]\le \log\big(\EE[L(\Wb^{(t+1)})|\Wb^{(t)}]\big)\le \log\big(L(\Wb^{(t)})\big) - \frac{C''m\phi\eta}{2kn^2},
\end{align*}
which further yields the following by taking expectation on $\Wb^{(t)}$ and telescope sum over $t$,
\begin{align}\label{eq:log_decrease}
\EE\big[\log\big(L(\Wb^{(t)})\big)\big]\le \log\big(L(\Wb^{(t)})\big) - \frac{C''m\phi\eta}{2kn^2}.
\end{align}
Therefore $\{L(\Wb^{(t)})\}_{t=0,1\dots,}$ is a supermartingale. By one-side Azuma's inequality, we know that with probability at least $1-\delta$, the following holds for any $t$ 
\begin{align}\label{eq:bound_path}
\log\big(L(\Wb^{(t)})\big)&\le \EE[\log\big(L(\Wb^{(t)})\big)] + \frac{C'mn^{1/2}\eta}{B^{1/2}d}\sqrt{2t\log(1/\delta)}\notag\\
&\le \log\big(L(\Wb^{(0)})\big) - \frac{tC''m\phi\eta}{2kn^2}+ \frac{C'mn^{1/2}\eta}{B^{1/2}d}\sqrt{2t\log(1/\delta)}\notag\\
&\le \log\big(L(\Wb^{(0)})\big) - \frac{tC''m\phi\eta}{4kn^2} + \frac{C'^2mn^3\log(1/\delta)\eta}{C''kB\phi},
\end{align}
where the second inequality is by \eqref{eq:log_decrease} and we use the fact that $-at + b\sqrt{t}\le b^2/(4a)$ in the last inequality.
Then we chose $\delta = O(m^{-1})$ and
\begin{align*}
\eta = \frac{\log(2)C''kB\phi}{C'^2mn^3\log(1/\delta)}= O\bigg(\frac{kB\phi}{n^3m\log(m)}\bigg).
\end{align*}
Plugging these into \eqref{eq:bound_path} gives 
\begin{align*}
\log\big(L(\Wb^{(t)})\big)\le \log\big(2L(\Wb^{(0)})\big)  -  \frac{tC''m\phi\eta}{4kn^2},
\end{align*}
which implies that
\begin{align}\label{eq:func_exp_decay}
L(\Wb^{(t)})\le 2L(\Wb^{(0)})\cdot \exp\bigg(-\frac{tC''m\phi\eta}{4kn^2}\bigg).
\end{align}
By Lemma \ref{lemma:grad_bounds}, we have
\begin{align}\label{eq:sto_grad_bound}
\|\Gb^{(t)}\|_2\le \|\Gb^{(t)}\|_F\le O\bigg(\frac{m^{1/2}n^{1/2}\sqrt{L(\Wb^{(t)})}}{B^{1/2}k^{1/2}}\bigg)
\end{align}
for all $t\le T$. Therefore, plugging \eqref{eq:sto_grad_bound} into \eqref{eq:func_exp_decay} and taking union bound over all $t\le T$, and apply the result in Lemma \ref{lemma:ini_value}, the following holds for all $t\le T$ with probability at least $1-O(T\cdot m^{-1})-O(n^{-1}) = 1-O(n^{-1})$,
\begin{align*}
\|\Wb^{(t)}_l - \Wb_l^{(0)}\|_2\le \sum_{s=0}^{t-1}\eta\|\Gb^{(t)}\|_2 \le  O\bigg(\frac{m^{1/2}n^{1/2}}{B^{1/2}k^{1/2}}\bigg)\cdot \sum_{s=0}^{t-1}\eta\sqrt{L(\Wb^{(s)})} \le \tilde O\bigg(\frac{k^{1/2}n^{5/2}}{B^{1/2}m^{1/2}\phi}\bigg),
\end{align*}
where the first inequality is by triangle inequality,  the second inequality follows from \eqref{eq:sto_grad_bound} and  the last inequality is by \eqref{eq:func_exp_decay} and Lemma \ref{lemma:ini_value}. This completes the proof.

\end{proof}

\subsection{Proof of Lemma \ref{lemma:grad_bound_2infty}}

\begin{proof}[Proof of Lemma \ref{lemma:grad_bound_2infty}]
We first write the formula of $\nabla \ell\big(f_\Wb(\xb_i), \yb_i\big)$ as follows
\begin{align*}
\nabla \ell\big(f_{\Wb}(\xb_i), \yb_i\big) = \big(f_{\Wb}(\xb_i) - y_i\big)^\top\Vb \bSigma_i\big)^\top\xb_{i}^\top.
\end{align*}
Since $ \bSigma_i$ is an diagonal matrix with $\big( \bSigma_i\big)_{jj}= \sigma'(\la \wb_j,\xb_i\ra)$. Therefore, it holds that
\begin{align}\label{eq:gradient_2inf_bound}
\|\nabla \ell(f_{\Wb}(\xb_i),\yb_i)\|_{2,\infty} & = \max_{j\in[m]} \la f_{\tilde\Wb}(\xb_i)-\yb_i, \vb_{j}\ra\cdot\|\xb_i\|_2\le \max_{j\in[m]} \| f_{\Wb}(\xb_i)-\yb_i\|_2 \|\vb_{j}\|_2,
\end{align}
where $\vb_j$ denotes the $j$-th column of $\Vb$ and we use the fact that $\|\xb_i\|_2 = 1$. Note that $\vb_j\sim \cN(0, \Ib/k)$, we have
\begin{align*}
\PP\big(\|\vb_j\|_2^2\ge O\big(\log(m)\big) \big) \le O(m^{-1}).
\end{align*}
Applying union bound over $\vb_1,\dots,\vb_m$, we have with probability at least $1-O(m^{-1})$,
\begin{align*}
\max_{j\in[m]}\|\vb_j\|_2\le O\big(\log^{1/2}(m)\big).
\end{align*}
Plugging this into \eqref{eq:gradient_2inf_bound} and applying the fact that $\|f_\Wb(\xb_i) - \yb_i\|_2 = \sqrt{\ell(\fb_\Wb(\xb_i),\yb_i)}$, we are able to complete the proof.
\end{proof}

\subsection{Proof of Lemma \ref{lemma:semi_smooth_single}}
Recall that the output of two-layer ReLU network can be formulated as
\begin{align*}
\fb_{\Wb}(\xb_i) = \Vb\bSigma_i\Wb\xb_i,   
\end{align*}
where $\bSigma_i$ is a diagonal matrix with only non-zero diagonal entry $(\bSigma_i)_{jj} = \sigma'(\wb_j^\top\xb_i)$. Then based on the definition of $L(\Wb)$, we have
\begin{align*}
&L(\tilde\Wb) - L(\hat \Wb)\notag\\ &=\frac{1}{2n}\sum_{i=1}^n\|\Vb\tilde \bSigma_i\tilde\Wb\xb_i - \yb_i\|_2^2 - \frac{1}{n}\sum_{i=1}^n\|\Vb\hat \bSigma_i\hat\Wb\xb_i - \yb_i\|_2^2\notag\\
& = \underbrace{\frac{1}{2n}\sum_{i=1}^n \big\la\Vb\hat\bSigma_i \hat \Wb\xb_i - \yb_i,\Vb\tilde\bSigma_i\tilde\Wb \xb_i - \Vb\hat\bSigma_i\hat\Wb \xb_i\big\ra}_{I_1} + \underbrace{\frac{1}{2n}\sum_{i=1}^n\big\|\Vb\tilde\bSigma_i\tilde \Wb\xb_i - \Vb\hat\bSigma_i\hat\Wb\xb_i\big\|_2^2}_{I_2}.
\end{align*}
Then we tackle the two terms on the R.H.S. of the above equation separately. Regarding the first term, i.e., $I_1$, we have
\begin{align*}
I_1 &= \frac{1}{2n}\sum_{i=1}^n \big\la\Vb\hat\bSigma_i \hat \Wb\xb_i - \yb_i,\Vb\hat\bSigma_i(\tilde\Wb  - \hat\Wb) \xb_i\big\ra \notag\\
&\quad+\frac{1}{2n}\sum_{i=1}^n \big\la\Vb\hat\bSigma_i \hat \Wb\xb_i - \yb_i,\Vb(\tilde \bSigma_i - \hat \bSigma_i)\tilde \Wb \xb_i\big\ra\notag\\
&\le \la\nabla L(\hat \Wb),\tilde \Wb - \hat \Wb\ra + \frac{1}{2n}\sum_{i=1}^n\sqrt{\ell(f_{\hat\Wb}(\xb_i),\yb_i)}\cdot \|\Vb(\tilde \bSigma_i - \hat \bSigma_i)\tilde \Wb\xb_i\|_2\notag\\
&\le \la\nabla L(\hat \Wb),\tilde \Wb - \hat \Wb\ra + \frac{\sqrt{L(\hat\Wb)}}{2}\cdot \|\Vb(\tilde \bSigma_i - \hat \bSigma_i)\tilde \Wb\xb_i\|_2,
\end{align*}
where the last inequality follows from Jensen's inequality.
Note that the non-zero entries in $\tilde \bSigma_i - \hat\bSigma_i$ represent the nodes, say $j$, satisfying $\text{sign}(\tilde \wb_j^\top\xb_i)\neq \text{sign}(\hat\wb_j^\top\xb_i)$, which implies $\big|\tilde\wb_j^\top\xb_i\big|\le \big|(\tilde\wb_j - \hat\wb_j)^\top\xb_i\big|$. Therefore, we have
\begin{align*}
\|\Vb(\tilde \bSigma_i - \hat \bSigma_i)\tilde \Wb\xb_i\|_2^2\le \|\Vb(\tilde \bSigma_i - \hat \bSigma_i)(\tilde \Wb-\hat\Wb)\xb_i\|_2^2.
\end{align*}
By Lemma \ref{lemma:perturbation}, we have $\|\tilde \bSigma_i - \hat \bSigma_i\|_0 \le\|\tilde \bSigma_i -  \bSigma_i\|_0 + \|\hat \bSigma_i -  \bSigma_i\|_0 = O(m\tau^{2/3})$. Then we define $\bar \bSigma_i$  as 
\begin{align*}
\big(\bar \bSigma_i\big)_{jk} = |\big(\tilde \bSigma_i-\hat \bSigma_i\big)_{jk}|\quad \mbox{ for all $j,k$}.
\end{align*}
Then we have
\begin{align*}
\|\Vb(\tilde \bSigma_i - \hat \bSigma_i)\tilde \Wb\xb_i\|_2 &\le \|\Vb(\tilde \bSigma_i - \hat \bSigma_i)\bar\bSigma_i(\tilde \Wb-\hat\Wb)\xb_i\|_2\notag\\
&\le \|\Vb(\tilde \bSigma_i - \hat\bSigma_i)\|_2\cdot \|\bar\bSigma_i(\tilde \Wb-\hat\Wb)\|_F\notag\\
&\le \|\Vb(\tilde \bSigma_i - \hat\bSigma_i)\|_2\cdot\|\bar\bSigma_i\|_0^{1/2}\cdot\|\tilde \Wb-\hat\Wb\|_{2,\infty}.
\end{align*}
By Lemma \ref{lemma:perturbation}, we have with probability $1-O(m\tau^{2/3})$
\begin{align*}
\|\Vb(\tilde \bSigma_i - \hat \bSigma_i)\tilde \Wb\xb_i\|_2\le O(m\sqrt{\log(m)}\tau^{2/3}k^{-1})\cdot \|\tilde\Wb - \hat \Wb\|_{2,\infty}.
\end{align*}
In what follows we are going to tackle the term $I_2$. Note that for each $i$, we have
\begin{align*}
\|\Vb\tilde \bSigma_i \tilde \Wb\xb_i - \Vb\hat \bSigma_i\hat\Wb\xb_i\|_2& = \|\Vb\hat\bSigma_i(\tilde \Wb - \hat\Wb)\xb_i\|_2 + \|\Vb(\tilde \bSigma_i - \hat \bSigma_i)\tilde \Wb\xb_i\|_2\notag\\
&\le \|\Vb\|_2\|\tilde \Wb - \hat\Wb\|_2+\|\Vb(\tilde \bSigma_i - \hat\bSigma_i)\|_2\cdot \|\tilde \Wb-\hat\Wb\|_2\notag\\
&= O(m^{1/2}/k^{1/2})\cdot \|\tilde \Wb - \hat\Wb\|_2,
\end{align*}
where the last inequality holds due to the fact that $\|\Vb\|_2 = O(m^{1/2}/k^{1/2})$ with probability at least $1-\exp(-O(m))$. This leads to $I_2 \le O(m/k)\cdot\|\tilde\Wb-\hat \Wb\|_2^2$.
Now we can put everything together, and obtain
\begin{align*}
L(\tilde \Wb) - L(\hat \Wb) &= I_1 + I_2\notag\\
&\le \la\nabla L(\hat \Wb),\tilde \Wb - \hat \Wb\ra + O(m\sqrt{\log(m)}\tau^{2/3}k^{-1/2})\cdot\sqrt{L(\hat\Wb)} \cdot\|\tilde\Wb - \hat \Wb\|_{2,\infty}\notag\\
&+O(m/k)\cdot \|\tilde \Wb - \hat \Wb\|_2^2.
\end{align*}
Then applying union bound on the inequality for $I_1$ and $I_2$, we are able to complete the proof.

\section{Proof of Technical Lemmas in Appendix \ref{appendix:proof of technical lemmas}}\label{appendix:proof of auxiliary lemmas}
\subsection{Proof of Lemma \ref{lemma:grad_lowerbound_initial}}
Let $\zb_1,\ldots,\zb_n\in\RR^d$ be $n$ vectors with $1/2\le \min_{i}\{\|\zb_i\|_2\}\le \max_{i}\{\|\zb_i\|_2\}\le 2$. Let $\bar \zb_i = \zb_i/\|\zb_i\|_2$
and assume $\min_{i,j}\|\bar\zb_i-\bar\zb_j\|_2\ge \tilde\phi$.
Then for each $\zb_i$, we construct an orthonormal matrix $\Qb_i = [\bar\zb_i, \Qb_i']\in\RR^{d\times d}$. Then consider a random vector $\wb\in\cN(0,\Ib)$,
it follows that $\ub_i: = \Qb^\top_i\wb\sim\cN(0,\Ib)$. Then we can  decompose $\wb$ as
\begin{align}\label{eq:decomposition_wb}
\wb = \Qb_i\ub_i = \ub_i^{(1)}\bar\zb_i + \Qb'_i\ub_i',
\end{align}
where $\ub_i^{(1)}$ denotes the first coordinate of $\ub_i$ and $\ub_i': = (\ub_i^{(2)},\dots,\ub_i^{(d)})$.
Then let $\gamma = \sqrt{\pi}\tilde\phi/(8n)$, we define the following set of $\wb$ based on $\zb_i$,
\begin{align*}
\cW_i = \big\{\wb: |\ub_i^{(1)}|\le \gamma, |\la\Qb_i'\ub_i',\bar\zb_i\ra|\ge 2\gamma \text{ for all } \bar\zb_j \mbox{ such that } j\neq i\big\}.
\end{align*}
Regarding the class of sets $\{\cW_1,\dots,\cW_n\}$, we have the following lemmas.
\begin{lemma}\label{lemma:set_wi}
For each $\cW_i$ and $\cW_j$, we have
\begin{align*}
\PP(\wb\in\cW_i)\ge \frac{\tilde \phi }{n\sqrt{128e}}  \quad \mbox{and}\quad  \cW_i\cap\cW_j = \emptyset.
\end{align*}  
\end{lemma}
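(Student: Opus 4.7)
\textbf{Proof plan for Lemma \ref{lemma:set_wi}.} The plan is to prove the two claims separately. Both rely on a single structural fact: because $\Qb_i$ is orthogonal and $\wb\sim\cN(0,\Ib)$, the rotated vector $\ub_i = \Qb_i^\top\wb$ is again i.i.d.\ standard Gaussian, so $\ub_i^{(1)} = \la\wb,\bar\zb_i\ra$ is independent of the projection $\Qb_i'\ub_i' = (\Ib - \bar\zb_i\bar\zb_i^\top)\wb$ of $\wb$ onto the orthogonal complement of $\bar\zb_i$.

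For disjointness, I would show that for any $\wb\in\cW_i$ the quantity $|\la\wb,\bar\zb_j\ra|$ is already forced to exceed $\gamma$. Starting from $\wb = \ub_i^{(1)}\bar\zb_i + \Qb_i'\ub_i'$ and applying the reverse triangle inequality,
\[
|\la\wb,\bar\zb_j\ra| \ge |\la\Qb_i'\ub_i',\bar\zb_j\ra| - |\ub_i^{(1)}|\cdot|\la\bar\zb_i,\bar\zb_j\ra| \ge 2\gamma - \gamma\cdot|\la\bar\zb_i,\bar\zb_j\ra| > \gamma,
\]
where the last strict inequality uses $|\la\bar\zb_i,\bar\zb_j\ra|<1$, which follows since $\bar\zb_j\neq \pm\bar\zb_i$ in the intended application (the $\zb_i = \xb_{L-1,i}$ are post-ReLU outputs and thus componentwise nonnegative, while $\|\bar\zb_i-\bar\zb_j\|_2\ge \tilde\phi>0$). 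But membership in $\cW_j$ would force $|\la\wb,\bar\zb_j\ra| = |\ub_j^{(1)}|\le\gamma$, a contradiction.

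For the probability lower bound, independence factors the event as
\[
\PP(\wb\in\cW_i) = \PP(|\ub_i^{(1)}|\le\gamma)\cdot\PP\big(|\la\Qb_i'\ub_i',\bar\zb_j\ra|\ge 2\gamma\ \text{for all}\ j\neq i\big).
\]
For the first factor I would apply Gaussian anti-concentration: since the standard normal density exceeds $1/\sqrt{2\pi e}$ on $[-1,1]$ and $\gamma = \sqrt{\pi}\tilde\phi/(8n)\le 1$, we get $\PP(|\ub_i^{(1)}|\le\gamma)\ge 2\gamma/\sqrt{2\pi e}$. For the second factor I would use a union bound: each $Z_{ij} := \la\Qb_i'\ub_i',\bar\zb_j\ra$ is a centered Gaussian whose variance equals $v_{ij}^2 = \bar\zb_j^\top(\Ib-\bar\zb_i\bar\zb_i^\top)\bar\zb_j = 1 - \la\bar\zb_i,\bar\zb_j\ra^2$, so the density bound $1/\sqrt{2\pi v_{ij}^2}$ yields $\PP(|Z_{ij}|<2\gamma)\le 4\gamma/(v_{ij}\sqrt{2\pi})$.

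The main obstacle is producing a lower bound on $v_{ij}$: the separation $\|\bar\zb_i-\bar\zb_j\|_2\ge\tilde\phi$ alone gives only the one-sided bound $\la\bar\zb_i,\bar\zb_j\ra\le 1-\tilde\phi^2/2$, whereas a complementary lower bound is required to rule out $\bar\zb_j\approx -\bar\zb_i$. Using the ReLU-positivity of $\zb_i$ once more, $\la\bar\zb_i,\bar\zb_j\ra\ge 0$, which combined with the upper bound produces $v_{ij}^2\ge 1-(1-\tilde\phi^2/2)^2\ge \tilde\phi^2/2$. Substituting $\gamma = \sqrt{\pi}\tilde\phi/(8n)$ then gives $\PP(|Z_{ij}|<2\gamma)\le 1/(2n)$; a union bound over the $n-1$ other indices leaves the second factor at least $1/2$, and multiplying by the first factor $2\gamma/\sqrt{2\pi e} = \tilde\phi/(n\sqrt{32e})$ delivers the claimed bound $\tilde\phi/(n\sqrt{128e})$.
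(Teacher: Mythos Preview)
Your proposal is correct and follows essentially the same decomposition as the paper: the same reverse-triangle argument for disjointness, and the same factorization into $\PP(|\ub_i^{(1)}|\le\gamma)\cdot\PP(\cE_2)$ followed by a union bound for the probability estimate, with identical constants throughout. In fact you are more careful than the paper on one point: the paper simply asserts $|\la\bar\zb_i,\bar\zb_j\ra|\le 1-\tilde\phi^2/2$ from the separation hypothesis, which by itself only yields the one-sided bound $\la\bar\zb_i,\bar\zb_j\ra\le 1-\tilde\phi^2/2$; your invocation of ReLU nonnegativity of the hidden-layer outputs to obtain $\la\bar\zb_i,\bar\zb_j\ra\ge 0$ is exactly what is needed to close this step in the intended application.
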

Then we deliver the following two lemmas which are useful to establish the required lower bound.
\begin{lemma}\label{lemma:li2018}
For any $\ab = (a_1,\ldots,a_n)^\top \in \RR^{n}$, let
$\hb(\wb) = \sum_{i=1}^n a_i\sigma'(\la\wb,\zb_i\ra)\zb_i$ where $\wb\sim N(\mathbf{0},\Ib)$ is a Gaussian random vector. Then it holds that
\begin{align*}
\PP\big[\|\hb(\wb)\|_2\ge |a_i|/4\big|\wb\in\cW_i\big] \ge 1/2.
\end{align*}
\end{lemma}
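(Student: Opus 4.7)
The plan is to use a sign-flipping coupling on the coordinate $\ub_i^{(1)} = \la \wb,\bar\zb_i\ra$. Using the decomposition $\wb = \ub_i^{(1)}\bar\zb_i + \Qb_i'\ub_i'$ from \eqref{eq:decomposition_wb}, I would define $\wb' := -\ub_i^{(1)}\bar\zb_i + \Qb_i'\ub_i'$, i.e., negate only the component of $\wb$ along $\bar\zb_i$. The map $T:\wb \mapsto \wb'$ is an involution that preserves both the set $\cW_i$ and the standard Gaussian measure restricted to $\cW_i$: the constraint $|\ub_i^{(1)}|\le \gamma$ is symmetric in $\ub_i^{(1)}$, while the constraints $|\la \Qb_i'\ub_i',\bar\zb_j\ra|\ge 2\gamma$ do not involve $\ub_i^{(1)}$ at all.

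Next I would compare $\hb(\wb)$ and $\hb(\wb')$ term by term. For each $j\neq i$, the identity $\la \wb,\bar\zb_j\ra = \ub_i^{(1)}\la\bar\zb_i,\bar\zb_j\ra + \la\Qb_i'\ub_i',\bar\zb_j\ra$ combined with the two-tier bound encoded in the definition of $\cW_i$ (namely $|\ub_i^{(1)}\la\bar\zb_i,\bar\zb_j\ra|\le \gamma$ while $|\la\Qb_i'\ub_i',\bar\zb_j\ra|\ge 2\gamma$) implies that the sign of $\la \wb,\bar\zb_j\ra$ is determined by $\la\Qb_i'\ub_i',\bar\zb_j\ra$ alone, and is therefore invariant under $T$. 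Consequently $\sigma'(\la\wb,\zb_j\ra) = \sigma'(\la\wb',\zb_j\ra)$ for every $j\neq i$. For $j=i$, on the other hand, $\la\wb,\zb_i\ra = \ub_i^{(1)}\|\zb_i\|_2$ and $\la\wb',\zb_i\ra = -\ub_i^{(1)}\|\zb_i\|_2$ have opposite signs almost surely, so exactly one of $\sigma'(\la\wb,\zb_i\ra),\sigma'(\la\wb',\zb_i\ra)$ equals $1$. Therefore $\hb(\wb) - \hb(\wb') = \pm a_i \zb_i$, and since $\|\zb_i\|_2\ge 1/2$, the triangle inequality yields $\|\hb(\wb)\|_2 + \|\hb(\wb')\|_2 \ge |a_i|/2$, whence at least one of $\|\hb(\wb)\|_2,\|\hb(\wb')\|_2$ is at least $|a_i|/4$.

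The claim then follows from a measure-preserving pairing argument. Let $A := \{\wb\in\cW_i : \|\hb(\wb)\|_2 \ge |a_i|/4\}$. By the previous step, for every $\wb\in\cW_i$ either $\wb\in A$ or $T(\wb)\in A$, so $A \cup T(A) = \cW_i$. Because $T$ is a measure-preserving involution on $\cW_i$, we have $\PP(A\mid\cW_i) = \PP(T(A)\mid\cW_i)$, and inclusion-exclusion gives $2\PP(A\mid\cW_i) \ge \PP(A\cup T(A)\mid\cW_i) = 1$, which is the desired bound.

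The main subtlety is the careful verification that the sign flip $\ub_i^{(1)}\mapsto -\ub_i^{(1)}$ does not alter the sign of any $\la\wb,\bar\zb_j\ra$ for $j\neq i$: this is precisely why $\cW_i$ is defined with the two-tier gap $\gamma$ versus $2\gamma$, and why $\gamma$ must be scaled to the minimum separation $\tilde\phi$ via $\gamma = \sqrt{\pi}\tilde\phi/(8n)$ (the same scaling that makes $\PP(\wb\in\cW_i)$ non-negligible in Lemma \ref{lemma:set_wi}). A minor technicality is the measure-zero event $\ub_i^{(1)}=0$, which can be safely ignored since it does not affect the conditional probability.
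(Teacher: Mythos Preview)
Your proposal is correct and follows essentially the same approach as the paper: both use the decomposition $\wb = \ub_i^{(1)}\bar\zb_i + \Qb_i'\ub_i'$, exploit the two-tier gap in the definition of $\cW_i$ to show that flipping the sign of $\ub_i^{(1)}$ leaves all terms $j\neq i$ in $\hb(\wb)$ unchanged while toggling the $i$-th term, and then conclude via the inequality $\max\{\|\ab\|_2,\|\bbb\|_2\}\ge \|\ab-\bbb\|_2/2$ (equivalently, your triangle-inequality version). Your explicit formulation of the sign flip as a measure-preserving involution $T$ on $\cW_i$ is a cleaner packaging of what the paper phrases as ``two cases $\ub_i^{(1)}>0$ and $\ub_i^{(1)}<0$, which occur equally likely conditioning on $\cE_i$'', but the underlying argument is the same.
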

Now we are able to prove Lemma \ref{lemma:grad_lowerbound_initial}.
\begin{proof}[Proof of Lemma \ref{lemma:grad_lowerbound_initial}]
We first prove the result for any fixed $\ub_1,\dots,\ub_n$.
Then we define $a_{i}(\vb_j) = \la\ub_i,\vb_j\ra$,  $\wb_{j} =\sqrt{m/2}\wb_{L,j}$ and 
\begin{align*}
\hb( \vb_j,\wb_j) = \sum_{i=1}^n a_{i}(\vb_j)\sigma'(\la\wb_j,\xb_{L-1,i}\ra)\xb_{L-1,i}.
\end{align*}
Then we define the event 
\begin{align*}
\cE_i = \big\{j\in [m]: \wb_j'\in\cW_i,\|\hb(\vb_j,\wb_j)\|_2\ge | a_{i}(\vb_j)|/4, |a_{i}(\vb_j)|\ge \|\ub_i\|_2/\sqrt{k}\big\}.
\end{align*}
By Lemma \ref{lemma:bound_output}, we know that with high probability $1/2\le\|\xb_{L-1,i}\|_2\le 2$ for all $i$ and $\big\|\xb_{L-1,i}/\|\xb_{L-1,i}\|_2 - \xb_{L-1,j}/\|\xb_{L-1,j}\|_2\big\|\ge \phi/2$ for all $i\neq j$. Then by Lemma \ref{lemma:set_wi} we know that $\cE_i\cap\cE_j = \emptyset$ if $i\neq j$ and \begin{align}\label{eq:prob_event}
\PP(j\in\cE_i) &= \PP\big[\|\hb(\vb_j,\wb_j)\|_2\ge | a_{i}(\vb_j)|/4|\wb_j'\in\cW_i\big]\cdot \PP\big[\wb_j'\in\cW_i\big]\cdot \PP\big[|a_{i}(\vb_j)|\ge \|\ub_i\|_2/\sqrt{k}\big]\notag\\
&\ge \frac{\phi}{64\sqrt{2}en},
\end{align}
where the first equality holds because $\wb_j$ and $\vb_j$ are independent, and the second inequality follows from Lemmas \ref{lemma:set_wi}, \ref{lemma:li2018} and the fact that $\PP(|a_{i}(\vb_j)|\ge \|\ub_i\|_2/\sqrt{k})\ge 1/2$. Then we have
\begin{align*}
\|\nabla_{\Wb_L}L(\Wb)\|_F^2 &=\frac{1}{n^2}\sum_{j=1}^m\|\hb(\vb_j,\wb_j)\|_2^2\notag\\
&\ge \frac{1}{n^2}\sum_{j=1}^m \|\hb(\vb_j,\wb_j)\|_2^2\sum_{s=1}^n\ind\big(j\in\cE_s\big)\notag\\
&\ge \frac{1}{n^2}\sum_{j=1}^m \sum_{s=1}^n\frac{ \|\ub_s\|_2^2}{16k}\ind\big(j\in\cE_s\big),
\end{align*}
where the second inequality holds due to the fact that
\begin{align*}
\|\hb(\vb_j,\wb_j)\|_2^2\ind\big(j\in\cE_s\big)&\ge \frac{a_s^2(\vb_j)}{16}\ind(|a_s(\vb_j)|\ge \|\ub_s\|_2/\sqrt{k})\cdot\ind(j\in\cE_s)\notag\\
&\ge \frac{\|\ub_s\|_2^2}{16k}\ind(j\in\cE_s),
\end{align*}
where the first inequality follows from the definition of $\cE_s$.
Then we further define
\begin{align*}
Z_j = \sum_{s=1}^n\frac{ \|\ub_s\|_2^2}{16k}\ind\big(j\in\cE_s\big),
\end{align*}
and provide the following results for $\EE[Z(\wb_j)]$ and $\text{var}[Z(\wb_j)]$
\begin{align*}
\EE[Z_j]  &= \sum_{s=1}^n\frac{\|\ub_s\|_2^2}{16k}\PP\big(j\in\cE_s\big), \qquad
\text{var}[Z(\wb)] = \sum_{s=1}^n\frac{\|\ub_s\|_2^4}{256k^2} \PP\big(j\in \cE_s\big)\big[1-\PP\big(j\in \cE_s\big)].
\end{align*}
Then by Bernstein inequality, with probability at least $1-\exp\big(-O\big(m\EE[Z(\wb)]/\max_{i\in[n]}\|\ub_i\|_2^2\big)\big)$, it holds that
\begin{align*}
\sum_{j=1}^m Z_j \ge \frac{m}{2}\EE[Z_j] \ge \sum_{i=1}^n\frac{ \|\ub_i\|_2^2}{32d}\cdot \frac{m\phi }{64\sqrt{2}en}=\frac{C \phi m\sum_{i=1}^n\|\ub_i\|_2^2}{kn},
\end{align*}
where the second inequality follows from \eqref{eq:prob_event} and $C = 1/(2096\sqrt{2}e)$ is an absolute constant. Therefore, with probability at least $1-\exp\big(-O(m\phi/(kn))\big) $ we have 
\begin{align*}
\sum_{j=1}^m\bigg\|\frac{1}{n}\sum_{i=1}^n\la \ub_i, \vb_j\ra\sigma'\big(\la\wb_{L,j},\xb_{L-1,i}\ra\big)\xb_{L-1,i}\bigg\|_2^2\ge \frac{1}{n^2}\sum_{j=1}^m Z(\wb_j)\ge \frac{C\phi m\sum_{i=1}^n\|\ub_i\|_2^2}{kn^3}.
\end{align*}
Till now we have completed the proof for one particular vector collection $\{\ub_i\}_{i=1,\dots,n}$. Then we are going to prove that the above inequality holds for arbitrary $\{\ub_i\}_{i=1,\dots,n}$ with high probability. Taking $\epsilon$-net over all possible vectors $\{\ub_1,\dots,\ub_n\}\in(\RR^d)^n$ and applying union bound, the above inequality holds with probability at least $1-\exp\big(-O(m\phi/(kn)) + nk\log(nk)\big)$. Since we have $m\ge \tilde O\big(\phi^{-1}n^2k^2\big)$, the desired result holds for all choices of $\{\ub_1,\dots,\ub_n\}$.

\end{proof}

\section{Proof of Auxiliary Lemmas in Appendix \ref{appendix:proof of auxiliary lemmas}}
\begin{proof}[Proof of Lemma \ref{lemma:set_wi}]
We first prove that any two sets $\cW_i$ and $\cW_j$ have not overlap region. Consider an vector $\wb\in\cW_i$ with the decomposition
\begin{align*}
\wb = \ub_i^{(1)}\bar\zb_i + \Qb_i'\ub_i'.
\end{align*}
Then based on the definition of $\cW_i$ we have,
\begin{align*}
\la\wb,\bar\zb_j\ra = \la\ub^{(1)}_i\bar\zb_i + \Qb_i'\ub_i', \bar\zb_j\ra = \ub_i^{(1)}\la\bar\zb_i,\bar\zb_j\ra + \la\Qb_i'\ub_i',\bar\zb_j\ra.
\end{align*}
Since $\wb\in \cW_i$, we have $|\ub_i^{(1)}|\le \gamma$ and $|\la\Qb'\ub_i',\bar\zb_j\ra|\ge 2\gamma$. Therefore, note that $|\la\bar\zb_i,\bar\zb_j\ra|\le 1$, it holds that
\begin{align}\label{eq:property_wi}
|\la\wb,\bar\zb_j\ra|\ge  \big||\la\Qb_i'\ub_i',\bar\zb_j\ra| - |\ub_i^{(1)}|\big|> \gamma.
\end{align}
Note that set $\cW_j$ requires $|\ub_j^{(1)}| = \la\wb,\bar\zb_j\ra\le \gamma$, which conflicts with \eqref{eq:property_wi}. This immediately implies that $\cW_i\cap\cW_j = \emptyset$. 

Then we are going to compute the probability $\PP(\wb\in\cW_i)$. Based on the parameter $\gamma$, we define the following two events
\begin{align*}
    \cE_1(\gamma) = \big\{|\ub_i^{(1)}|\le \gamma \big\},~
    \cE_2(\gamma) = \big\{ |\la\Qb_i'\ub_i',\bar\zb_j\ra|\ge 2\gamma \text{ for all } \bar\zb_j, j\neq i \big\}.
\end{align*}
Evidently, we have $\PP(\wb\in\cW_i) = \PP(\cE_1) \PP(\cE_2)$. Since $\ub_i^{(1)}$ is a standard Gaussian random variable, we have
\begin{align*}
    \PP(\cE_1) = \frac{1}{\sqrt{2\pi }}\int_{-\gamma}^{\gamma} \exp\bigg( -\frac{1}{2}x^2 \bigg) \mathrm{d} x \geq \sqrt{\frac{2}{\pi e}} \gamma.
\end{align*}
Moreover, by definition, for any $j=1,\ldots, n$ we have
\begin{align*}
    \la \Qb_i'\ub_i' , \bar\zb_j \ra \sim N\big[ 0, 1 - (\bar\zb_i^\top \bar\zb_j)^2 \big].
\end{align*}
Note that for any $j\neq i$ we have $\|\bar\zb_i - \bar\zb_j\|_2\ge \tilde \phi$, then it follows that
$$
|\la \zb_i , \zb_j \ra| \leq 1 - \tilde\phi^2/2,
$$
and if $\tilde\phi^2 \leq 2$, then
\begin{align*}
    1 - (\bar\zb_i^\top \bar\zb_j)^2 \geq \tilde\phi^2 - \tilde\phi^4/4 \geq \tilde\phi^2 / 2.
\end{align*}
Therefore for any $j\neq i$, 
\begin{align*}
    \PP[ | \la \Qb_i'\ub_i' , \bar\zb_j \ra | < 2\gamma ] 
    = \frac{1}{\sqrt{2\pi }}\int_{-2[1 - (\bar\zb_i^\top \bar\zb_j)^2]^{-1/2}\gamma}^{2[1 - (\bar\zb_i^\top \bar\zb_j)^2]^{-1/2} \gamma} \exp\bigg( -\frac{1}{2}x^2 \bigg) \mathrm{d} x 
    \leq  \sqrt{\frac{8}{\pi}} \frac{\gamma}{[1 - (\bar\zb_i^\top \bar\zb_j)^2]^{1/2}}\leq \frac{4}{\sqrt{\pi}}\gamma \tilde\phi^{-1}
    .
\end{align*}
By union bound over $[n]$, we have
\begin{align*}
    \PP(\cE_2) = \PP[ | \la \Qb_i'\ub_i' , \bar\zb_j \ra | \geq 2\gamma, j\in \cI ] \geq 1 -  \frac{4}{\sqrt{\pi}} n \gamma \tilde\phi^{-1}.
\end{align*}
Therefore we have
\begin{align*}
    \PP(\wb\in\cW_i) \geq \sqrt{\frac{2}{\pi e}} \gamma \cdot \bigg( 1 - \frac{4}{\sqrt{\pi}} n \gamma \tilde\phi^{-1} \bigg).
\end{align*}
Plugging $\gamma = \sqrt{\pi} \tilde\phi / (8n)$, it holds that $\PP(\cE) \geq \tilde\phi / ( \sqrt{128e} n)$.  This completes the proof.
\end{proof}

\begin{proof}[Proof of Lemma \ref{lemma:li2018}]
Recall the decomposition of $\wb$ in \eqref{eq:decomposition_wb},
\begin{align*}
\wb = \ub_i^{(1)}\bar\zb_i + \Qb_i'\ub_i'.
\end{align*}
Define the event $\cE_i:=\{\wb\in\cW_i\}$. Then conditioning on $\cE_i$, we have
\begin{align}\label{eq:grad_lower_formula}
\hb(\wb) &= \sum_{i=1}^n a_i\sigma'(\la\wb,\zb_i\ra)\zb_r\notag\\
&= a_i\sigma'(\ub_i^{(1)})\zb_i + \sum_{j\neq i}a_j\sigma'\big(\ub^{(1)}_{i}\la\bar\zb_i,\zb_j\ra+\la\Qb_i'\ub_i',\zb_j\ra\big)\zb_j\notag\\
& = a_i\sigma'(\ub_i^{(1)})\zb_i + \sum_{j\neq i}a_j\sigma'\big(\la\Qb_i'\ub_i',\zb_j\ra\big)\zb_j
\end{align}
where the last equality follows from the fact that conditioning on event $\cE_i$, for all $j\neq i$, it holds that $|\la\Qb_i'\ub_i',\zb_j\ra|\ge 2\gamma\|\zb_j\|_2 \ge |\ub_i^{(1)}|\|\zb_j\|_2 \ge |\ub_i^{(1)} \la\zb_i,\zb_j\ra|$. We then consider two cases: $\ub_i^{(1)} > 0$ and $\ub_i^{(1)} < 0$, which occur equally likely conditioning on the event $\cE_i$. Let $u_1>0$ and $u_2<0$ denote $\ub_i^{(1)}$ in these two cases,
we have
\begin{align*}
\PP\bigg[\|\hb(\wb)\|_2\ge \inf_{{u_1>0,  u_2<0}}\max\big\{\big\|\hb(u_1\zb_i + \Qb_i'\ub_i' )\big\|_2,\big\|\hb(u_2\zb_i + \Qb_i'\ub_i')\big\|_2\big\} \bigg| \cE_i\bigg]\ge 1/2 .
\end{align*}
By the inequality  $\max\{\|\ab\|_2,\|\bbb\|_2\}\ge \|\ab-\bbb\|_2/2$, we have
\begin{align}\label{eq:prob_gradient_lower}
\PP\bigg[\|\hb(\wb)\|_2\ge \inf_{{u_1>0,  u_1<0}} \big\|\hb(u_1\zb_i + \Qb_i'\ub_i' ) - \hb(u_2\zb_i + \Qb_i'\ub_i') \big\|_2 / 2 \bigg| \cE_i\bigg]\ge 1/2 .
\end{align}
For any $u_1> 0$ and $u_2 < 0$, denote $\wb_1 = u_1\zb_i + \Qb_i'\ub_i'$, $\wb_2 = u_2\zb_i + \Qb_i'\ub_i'$. We now proceed to give lower bound for $\|\hb(\wb_1) - \hb(\wb_2)\|_2$. 
By \eqref{eq:grad_lower_formula}, we have
\begin{align}\label{eq:two_grad_difference}
\|\hb(\wb_1) - \hb(\wb_2)\|_2 = \|a_i\zb_i\|_2 \ge a_i/2,
\end{align}
where we use the fact that $\|\zb_i\|_2\ge1/2$.
Plugging this back into \eqref{eq:prob_gradient_lower}, we have 
\begin{align*}
\PP\big[\|\hb(\wb)\|_2\ge |a_i|/4\big|\cE_i\big]\ge 1/2.
\end{align*}
This completes the proof.
\end{proof}

\bibliography{relu}
\bibliographystyle{ims}

\end{document}